\renewcommand{\Pr}{\field{P}}
\newcommand{\bb}{\boldsymbol{b}}
\newcommand{\bx}{\boldsymbol{x}}
\newcommand{\bu}{\boldsymbol{u}}
\newcommand{\bw}{\boldsymbol{w}}
\newcommand{\argmax}{\mathop{\mathrm{argmax}}}
\newcommand{\field}[1]{\mathbb{#1}}
\newcommand{\R}{\field{R}}
\newcommand{\E}{\field{E}}
\newcommand{\dt}{\displaystyle}
\newcommand{\sign}{{\rm sign}}
\DeclareMathOperator{\Regret}{Regret}
\DeclareMathOperator{\Wealth}{Wealth}
\newtheorem{lemma}{Lemma}
\newtheorem{proposition}{Proposition}
\newtheorem{theorem}{Theorem}
\newtheorem{remark}{Remark}
\newcommand{\fr}[2]{ { \frac{#1}{#2} }}
\def\lam{\ensuremath{\lambda}}
\def\cd{\cdot}
\def\dt{\delta}
\def\kl{\mathsf{D}}
\DeclareMathOperator{\PP}{\mathbb{P}}
\def\hmu{{\hat\mu}}
\definecolor{kjcolor}{RGB}{46,139,87}
\DeclareMathOperator{\one}{\mathds{1}\hspace{-.18em}}
\def\onec#1{\one\cbr{#1}}
\def\RR{{\mathbb{R}}}
\def\der{\ensuremath{\partial}\xspace}
\def\tsty{\textstyle}
\def\til{\tilde}
\def\hbeta{{\hat\beta}}
\newcommand{\sr}[2]{ {\stackrel{#1}{#2}} }
\def\th{{\ensuremath{\theta}}}
\def\barV{{\bar{V}}}
\title{Tight Concentrations and Confidence Sequences\\ from the Regret of Universal Portfolio}
\author{Francesco Orabona\\Department of Electrical \& Computer Engineering\\Boston University\\\texttt{francesco@orabona.com} \and Kwang-Sung Jun\\Department of Computer Science\\University of Arizona\\\texttt{kjun@cs.arizona.edu}}
\begin{document}
\maketitle

\begin{abstract}
    A classic problem in statistics is the estimation of the expectation of random variables from samples. This gives rise to the tightly connected problems of deriving concentration inequalities and confidence sequences, that is confidence intervals that hold uniformly over time.
    Previous work have shown how to easily convert the regret guarantee of an online betting algorithm into a time-uniform concentration inequality.
    In this paper, we show that we can go even further: We show that the regret of universal portfolio algorithms give rise to new implicit time-uniform concentrations and state-of-the-art empirically calculated confidence sequences.
    In particular, our numerically obtained confidence sequences can never be vacuous, even with a single sample, and satisfy the law of iterated logarithm.
\end{abstract}

\section{Introduction}
\label{sec:intro}

Constructing valid confidence intervals for the estimation of bounded random variables is an important problem in many machine learning and statistical problems.
In particular, we are interested in estimating the conditional mean $\mu$ of a sequence of random variables $X_1, X_2, \dots$, such that for any $i$ we have $0 \leq X_i \leq 1$ and 
\begin{equation}
\label{eq:assumption}
\E[X_t | X_1, \dots, X_{t-1}] = \mu~.
\end{equation}
A simple example of the condition above is obtained when $X_i$ are independent random variables with mean equal to $\mu$.

We are interested in proving time-uniform empirical concentration inequalities that upper bounds the probability that the empirical mean deviates from the expectation $\mu$.
A related concept is the one of producing  $(1-\delta)$-level \emph{confidence sequences}~\citep{DarlingR67}, that is a sequence of confidence sets $I_t = I_t(X_1,\dots,X_t)$ such that
\[
\Pr\{\mu \in I_t, \forall t \geq1\}\geq 1 -\delta~.
\]
We are interested in $I_t$ being intervals, that is $I_t=[\ell_t, u_t]$. The time-uniform property makes these estimates more useful in the real-world because they are ``safer'' to be used, allowing, for example, to stop gathering new samples at any time, even in an adaptive way, when the required precision has been reached. This was even recently argued in much more general terms by \citet{GrunwaldHK19}.

In general, it is possible to obtain a confidence sequence from a time-uniform empirical concentration inequality. However, often this will result in confidence sequences that are vacuous when the number of samples is small. 

In this paper, for practical usability, we are interested in obtaining numerically tight concentrations and confidence sequences. 
That is, we are interested in rates, but we also care about constants, in order to obtain the tightest possible estimates for real-world applications, as recently motivated by \citet{PhanTLM21}.  In particular, we aim at obtaining confidence sequences with width less than 1 even with a single sample and at the same time optimal asymptotic behaviour.

We build on \citet{JunO19} that showed any online betting algorithm with small regret can be used to derive a time-uniform concentration inequality. In this paper, we expand over their approach and show how to obtain state-of-the art time-uniform concentrations and confidence sequences through the regret guarantee of portfolio selection algorithms.
In particular, our contributions are as follows:
\begin{itemize}
    \item We derive a new implicit concentration inequality for bounded random variables satisfying \eqref{eq:assumption}.
    \item We numerically invert our implicit concentration, obtaining empirical results that match or outperform prior algorithms. In particular, our algorithm is the first one to give non-vacuous confidence sequence for \textit{every} number of samples (Algorithm~\ref{alg:confidence_intervals}: PRECiSE-CO96). 
    \item We also show how to symbolically invert it, which implies that our concentration inequality is a time-uniform empirical Bernstein type inequality.
    \item We also show how to obtain a fast numerical inversion with $O(1)$ computational complexity per sample while enjoying the same asymptotic performance of the exact inversion (Algorithm~\ref{alg:confidence_intervals_a}: PRECiSE-A-CO96).
    \item Finally, we show how to obtain state-of-the-art confidence sequences that satisfy the law of the iterated logarithm (Algorithm~\ref{alg:confidence_intervals_lil}: PRECiSE-R70).
\end{itemize}
The rest of the paper is organized as follows: In Section~\ref{sec:rel} we discuss prior work, explaining similarities and differences. In Section~\ref{sec:def} we precisely define some concepts of online learning. Our main results are presented in Section~\ref{sec:main} and~\ref{sec:lil} with their proofs deferred to Section~\ref{sec:proofs}. Finally, in Section~\ref{sec:exp} we present our numerical evaluation and we conclude in Section~\ref{sec:disc} with a discussion.

\section{History and Related Work}
\label{sec:rel}

As noted in \citet{GrunwaldHK19}, the interest on this topic exploded in the last couple of years. However, recent studies seem to have missed the fact that these most of these ideas seem to have been discovered in the information theory literature.
Hence, we find very important to give here not only a discussion of related work, but also a detailed history of these ideas.

\paragraph{Betting and Portfolio Selection Algorithms}
The distributional approach to betting and gambling was pioneered by \citet{Kelly56}. This approach assumes that the market gains are i.i.d. from a (known) distribution. \citet{Kelly56} also noted the equivalence between gambling and prediction with log loss.
Moreover, given the well-known connection between universal compression, prediction with log loss, and gambling~\citep{CoverO96}, prior literature on compression and prediction with log loss \citep[see, e.g.,][]{Shtarkov87, Vovk90, HausslerB92, FederMG92, XieB97, HausslerKW98} effectively deals with gambling too. In particular, the best achievable regret for a betting game with a known number of rounds on a coin was calculated by \citet{Shtarkov87}. A suboptimal betting strategy based on the Laplace rule of succession to estimate the bias of the coin when the number of rounds is unknown is given in \cite{Davisson73}.  Asympotic optimality in the stochastic case was obtained with a mixture based on Jeffrey's prior by \citet{KrichevskyT81} and analyzed in finite-time in \citet{BarronX96}.
\citet{Freund96,KumonTT08,KumonTT11} removed the distributional assumptions and focused on the problem of betting on a bounded outcome (even in multiple dimensions). Later, the online setting, i.e., without distributional assumptions, became standard in the universal compression literature.

\citet{McMahanO14} showed that it is possible to design online betting algorithm with a $\ln T$ regret against a weaker competitor.
\citet{OrabonaP16} introduced a generic potential framework to design and analyze online algorithms for the same problem. That paper started a line of research on coin-betting algorithms with efficient updates~\citep[e.g.,][]{CutkoskyO18,MhammediK20}, but an outsider to this subfield might have missed the fact that \emph{all of them} are known to be suboptimal. 
Indeed, none of them guarantee any regret for betting on a \emph{continuous} coin $g_t \in [-1,1]$ against the best constant betting fraction. 
For example, the known regret bound (e.g., \cite[Section 4]{OrabonaP16}) of the KT estimator is relative to the betting fraction $\frac{1}{t}\sum_{i=1}^t g_i$, which is the best constant betting fraction for a coin rather than that of the continuous coin.
In fact, a minimax optimal algorithm for continuous coins is already known.
To see this, one can use the folklore knowledge that symmetric betting on a bounded symmetric outcome can be easily reduced to portfolio selection with two stocks.
In turn, the portfolio algorithm of \citet{CoverO96}, which is the first one to achieve the minimax regret without assumptions over the market gains, an improved result upon \citet{Cover91}.
(Note that \citet{VovkW98} proved that the universal portfolio algorithm \citep{Cover91,CoverO96} is an instantiation of the Aggregating Algorithm~\citep{Vovk90}.)
On a related note, betting on an asymmetric bounded outcome can be also reduced to portfolio selection with 2 stocks, as we show later. 
Lastly, note that the family of online betting algorithms is larger than one might think. Indeed, \citet[Theorem 5.11]{Orabona19} proved that \emph{any} online linear optimization algorithm is equivalent to a betting algorithm with initial money equal to the regret against the null competitor.

Regarding algorithms for solving the coin-betting problem, most of the computationally efficient coin-betting algorithms are based on Follow-The-Regularized-Leader~\citep{Shalev-Shwartz07,AbernethyHR08,HazanK08}.
Unfortunately, the portfolio algorithm by \citet{CoverO96} does not have an update rule with constant computational complexity, though it can be approximated~\citep{KalaiV02}.

\paragraph{Testing and betting algorithms}
The connection between betting and testing is very old. 
The first work on the connection between betting strategies and probability is in the PhD thesis of \citet{Ville39}. He proved that an event depending on an infinite sequence of outcomes has probability zero iff we can achieve infinite wealth betting on the outcomes of the sequence. In particular, he defined a \emph{martingale} as the wealth of a betting strategy on a fair coin. In \citet[page 34]{Ville39} there is also the very first appearance of the wealth process of a betting strategy based on the Laplace method of counts, more than 30 years before \citet{Davisson73} used it in coding. However, as \citet[page 431]{Shafer21} points out, ``Ville [...] did not consider statistical testing, and his work has had little or no influence on mathematical statistics.''
The ideas of Ville were later used to design ideal tests for randomness of infinite sequences~\citep{Schnorr71,Levin76,Gacs05}, ``ideal'' because none of these tests is computable.

We found two papers that at the same time and independently explicitly link hypothesis testing on a finite sequence of outcomes to betting. One is \citet[Example 3]{Cover74}, where an optimal betting strategy is defined in terms of the null hypothesis.
The other one is \citet{RobbinsS74} that constructs confidence sequences from novel\footnote{Very surprisingly, it seems that nobody realized before that \citet[Section 9]{RobbinsS74} seem to have proposed and analyzed the famous strategy of \citet{KrichevskyT81} 7 years before them.} betting schemes and explicitly recognizes the connection between the sequential probability ratio test~\citep{Wald45}.
However, while in the information theory literature these ideas flourished and gave birth to results on coding, compression, minimum description length, and gambling, they seem to disappear from the statistics community for 30 years.\footnote{It is remarkable that \citet{Cover74} was submitted to Annals of Statistics and probably rejected, as it can be inferred from the footnote on its first page.} One might argue that any result on martingales is still related to gambling, but, even assuming that this view is correct, it ignores the fact that the gambling view stresses aspects like the computability of the strategies, their optimality, the adversarial nature of the outcomes, and the gambler's wealth as evidence, that are missing from the literature on martingales. 

In fact, in the statistics literature gambling strategies reappear again only in the '90-'00s thanks to the book and papers by Shafer and Vovk. In particular, \citet{Vovk93,ShaferV05} aimed to found probabilities on a game-theoretic ground through betting schemes. 
They also proposed the idea of using non-negative martingales as \emph{test martingales} \citep{ShaferSVS11,Shafer21}.
However, the foundational approach in \citet{ShaferV05} also means that all the betting strategies do not have closed form expressions and they cannot be easily implemented.
Moreover, \citet{ShaferV05} does not contain any explicit concentration, while the first concentration for game-theoretic probability derived by a betting scheme is in \citet{Vovk07}, that derives a game-theoretic Hoeffding’s inequality.

Instead, in the information theory literature, the use of compression schemes to test randomness of a finite string of symbols became a standard strategy~\citep{Ziv90,Maurer92,Rukhin00,RyabkoM05}, even with software available by the American National Institute of Standards and Technology~\citep{BasshamETAL10}. Given the connection between compression and gambling~\citep{CoverO96}, these approaches can also be considered as tests based on betting.

Even if the idea of numerically deriving valid confidence sequences from any betting algorithm does not appear in print in the work of Shafer and Vovk, it is an ``obvious'' corollary of their results.\footnote{G. Shafer, 2022, personal communication.}
That said, the first paper to consider an implementable strategy for testing through betting is by \citet{Hendriks18}. Directly building on \citet{ShaferV05} and \citet{ShaferSVS11}, \citet{Hendriks18} proposes to construct testing martingales and confidence sequences for bounded random variables as uniform mixtures of constant betting strategies. \citet{Hendriks18} also showed empirically the good performance of the proposed approach in a simple statistical test. 
\citet{JunO19} show how to easily derive a Law of Iterated Logarithm (LIL) for sub-Gaussian random vectors in Banach spaces from the regret of a one-dimensional betting algorithm.
It is important to note that the betting scheme they propose has a very high computational complexity, but they only need any upper bound to the regret of the algorithm in order to derive the concentration.
In turn, their work was based on the seminal work in \citet{RakhlinS17} that showed an \emph{equivalence} between the regret guarantee of online learning algorithms with linear losses and concentration inequalities. However, the proof technique in \citet{JunO19} is different from the one in \citet{RakhlinS17} and it is specific to online algorithms that guarantee a non-negative exponential wealth for biased inputs. In particular, it allows to derive time-uniform concentrations, like the law of the iterated logarithm, that are not possible with the method in \citet{RakhlinS17}.
\citet{Waudby-SmithR21} seems to follow the same approach in \citet{Hendriks18}, but proposing a number of heuristic betting algorithms to maximize the wealth as well as a discrete version of the uniform mixture already used in \citet{Hendriks18}. They show very good empirical results and asymptotic rates of the obtained confidence sequences. It is worth noting that the idea of designing heuristics gambling schemes goes back at least to \citet{AdachiT11}, that used neural networks, and it is present in the online learning literature too, see \citet{LiH18} for a recent review.
As in previous work~\citep[see, e.g.,][]{CutkoskyO18}, \citet{Waudby-SmithR21} guarantee non-negative wealth for their betting heuristics restricting the range of the allowed betting fractions, so that the algorithm can lose in a round at most a fixed user-defined fraction of the current wealth, e.g., 1/2 or 2/3. Also, they need the heuristic betting schemes to satisfy strict conditions in order to guarantee that the confidence sets are intervals. 
On the computational complexity side, their algorithms require to run a number of betting schemes in parallel, which results in running the betting algorithm $O(\frac{1}{\text{precision}})$ times. 

Notably, \citet{Waudby-SmithR21} explicitly motivate the heuristic betting with the idea that regret-based algorithms tend to underperform in practice ``because regret bounds could be tight in rate but are typically loose in their constants'' and ``the resulting concentration inequalities are not tight in practice''~\citep[Appendix D]{Waudby-SmithR21}.
Here, we argue the opposite: The use of non-regret based betting algorithm is a step back. Indeed, there is no need to use heuristic schemes in the hope to have a low computational complexity because it is straightforward to construct ideal betting algorithms based on portfolio strategies and evaluate tight lower bounds to their wealth using a regret analysis. Moreover, the presence of a regret guarantee will allow us to construct confidence sequences with $O(1)$ complexity per sample.

\paragraph{Principles for Constructing Betting Algorithms}
It is worth noting that, a part form \citet{JunO19}, \emph{none} of the above described work proposed a unifying general way to construct betting strategies. 
In fact, even recently \citet[page 424]{Shafer21} writes ``How  should  the  statistician  choose  the  strategy  for  Skeptic?  An  obvious  goal  is  to  obtain  small  warranty sets. But a strategy that produces the smallest warranty set for one  $N$  and one warranty level $1/\alpha$  will not generally do so for other values of these parameters. So any choice will be a balancing act. How to perform this balancing act is an important topic for further research.'' 

Actually, in 2019 two papers independently propose a similar idea as a guiding principle for betting strategies for testing. \citet{JunO19} proposed to use betting algorithms designed to minimize the regret with respect to the best constant betting strategy in hindsight for the specific application of confidence sequences. On the other hand, \citet{GrunwaldHK19} proposed to construct betting strategies for testing composite, but parametric, null hypotheses that maximize the expected log wealth. The connection between the two approaches is due to the fact that the expected log wealth is maximized by a constant betting strategy~\citep{Breiman61}, but the approach in \citet{GrunwaldHK19} does not seem to be immediately extendable to the non-parametric setting.
Moreover, we will explicitly address the ``balancing act'' mentioned by \citet{Shafer21}, obtaining tight confidence sequences with one sample or at infinity.

\paragraph{Method of mixtures in concentration and betting proofs} A powerful approach to prove time-uniform concentration inequality is through the use of the so-called method of mixtures~\citep{Robbins70}. In the simplest case, the key idea is to use a sequence of zero-mean random variables $Y_t \in \R$ to construct a martingale as $\int \exp( \sum_{i=1}^t (\lambda Y_i - \frac{\lambda^2}{2} Y_i^2)) d F(\lambda)$, for a mixture distribution $F$. Then, use Ville's inequality to infer that $\sum_{i=1}^t Y_i$ is controlled by high probability.
However, the very same idea is also the core one for universal compression~\citep{Shtarkov87} and betting algorithms~\citep{Cover74,Cover91}, where the betting algorithm is constructed as a mixture of betting schemes that bet fixed fraction of the current wealth. The connection is far from being accidental: \citet{ShaferV05,ShaferV19} explain us how the concept of martingale and the one of wealth of a bettor can be completely unified. More in details, for bounded random variables the expression above is nothing else than a second-order Taylor approximation of the growth rate of the wealth for a $F$-weighted portfolio algorithm, see Section~\ref{sec:def} for an exact definition.

\section{Setting and Definitions}
\label{sec:def}

\paragraph{Online learning}

The online learning framework is a learning scenario that proceeds in rounds, without probabilistic assumptions. We refer the readers to \citet{Cesa-BianchiL06} and \citet{Orabona19} for complete introductions to this topic.
Briefly, in each round the learner outputs a prediction $\bx_t$ in a feasible set $V$, then the adversary reveals a loss function $\ell_t: V \to \R^d$ from a fixed set of functions, and the algorithm pays $\ell_t(\bx_t)$. Aim of the learner is to minimize its \emph{regret} defined as the difference between its cumulative loss over $T$ rounds and the one of any fixed point $\bu \in V$, that is
\[
\Regret_T(\bu) := \sum_{t=1}^T \ell_t(\bx_t) - \sum_{t=1}^T \ell_t(\bu)~.
\]
Note that we do not assume anything on how the adversary chooses the functions $\ell_t$ nor any knowledge of the future. We will say that an algorithm has no-regret for a certain class of functions if its regret is $o(T)$ for any $\bu \in V$.
In the following, the keyword \emph{online} will always denote a instantation of the above framework.

\paragraph{Online betting on a continuous coin}
We consider a gambler making repeated bets on the outcomes of adversarial coin flips. The gambler starts with initial
money of \$1. In each round $t$, he bets on the outcome of a coin
flip $c_t \in \{-1,1\}$, where $+1$ denotes heads and $-1$ denotes tails.  We
do not make any assumption on how $c_t$ is generated. The gambler can bet any amount on either heads or tails. However, he is not allowed to borrow any additional money. If he loses, he loses the betted
amount; if he wins, he gets the betted amount back and, in addition to that, he
gets the same amount as a reward.  We encode the gambler's bet in round $t$ by
a single number $x_t$. The sign of $x_t$ encodes whether he is betting on heads
or tails. The absolute value encodes the betted amount.  We define $\Wealth_t$
as the gambler's wealth at the end of round $t$, that is
\begin{align}
\label{eq:def_wealth_reward}
\Wealth_t = 1 + \sum_{i=1}^t x_i c_i~.
\end{align}
In the following, we will also refer to a bet with $\beta_t$, where $\beta_t$
is such that
\begin{equation}
\label{equation:def_wt}
x_t = \beta_t \Wealth_{t-1}~.
\end{equation}
The absolute value of $\beta_t$ is the \emph{fraction} of the current wealth to
bet and its sign encodes whether he is betting on heads or tails. The
constraint that the gambler cannot borrow money implies that $\beta_t \in
[-1,1]$.
We also slightly generalize the problem by allowing the outcome of the coin flip
$c_t$ to be any real number in $[-a,b]$ where $a,b>0$. Following \citet{OrabonaP16}, we will call this case \emph{continuous coin}; the definition of the wealth remains the same.

From the above, it should be immediate that we can define an equivalent online learning game where the algorithm produces the signed fraction $\beta_t \in [-\frac{1}{b},\frac{1}{a}]$ and the adversary reveals the convex loss function $\ell_t(\beta) = -\ln(1+\beta c_t)$, where $c_t \in [-a,b]$. In this case, $\sum_{t=1}^T \ell_t(\beta_t)$ is the negative log wealth of the algorithm and the regret is the logarithm of the ratio between the wealth of the constant betting fraction $u$ and the wealth of the algorithm.

\paragraph{Online portfolio selection}
We now describe the problem of sequential investment in a market with 2 stocks. The behavior of the market is specified by non-negative market gains vectors $\bw_1, \dots, \bw_T$, each of them in $[0,+\infty)^2$. The coordinates of the market gains vectors represent the ratio between closing and opening price for 2 stocks. An investment strategy is specified by a vector $\bb_t$ in $B=\{[b_t, 1-b_t] \in \R^2: 0\leq b_t\leq 1\}$, and its elements specifies the fraction of the wealth invested on each stock at time $t$. The wealth of the algorithm at the end of round $t$, given by $\prod_{i=1}^t \bx_t^\top\bb_t$, will be denoted by $\Wealth_t$ and we set $\Wealth_0=1$.

As in the coin-betting problem, we can define the regret for this problem as the difference between the log wealth of the best constant rebalanced portfolio minus the log wealth of the algorithm. Denote by $\Wealth_t(\bb)$ the wealth of the \textit{constant rebalanced portfolio} with allocation $\bb$, we have
\[
\Regret_T = \max_{\bb \in B} \ \ln \Wealth_T(\bb) - \ln \Wealth_t~.
\]
We will say that a portfolio algorithm is \emph{universal} if the regret against any sequence of market gain vectors is sublinear in the time step.

We focus on the  $F$-weighted portfolio algorithms~\citep{Cover91} that output at each step
\[
\bb_t = \frac{\int_{B} \bb \Wealth_{t-1}(\bb) \, d F(\bb)}{\int_{B} \Wealth_{t-1}(\bb) \, d F(\bb)},
\]
where $F$ is a distribution over the 2-stocks.
In words, we predict with a normalized average of the expected wealth of each $\bb$ vectors according to the distribution $F$. It is easy to see that for $F$-weighted portfolio algorithms the wealth of the algorithm has a closed form expression:
\begin{equation}
\label{eq:closed_form_wealth}
\Wealth_t=\int_{B} \Wealth_{t}(\bb) \, d F(\bb)~.
\end{equation}

\section{A Never-Vacuous Concentration Inequality from a Portfolio Algorithm}
\label{sec:main}

Here, we present our new concentration inequality. It holds uniformly over time for any number of random variables, but it has an implicit form.
Then, we will show both how to invert it deriving an explicit but loose concentration inequality and with an exact numerical algorithm.
\begin{theorem}
\label{thm:main}
Let $\delta \in (0,1)$. Assume $ X_1, X_2, \dots$ a sequence of random variables such that for each $i$ we have $0 \leq X_i \leq 1$ and
\[
\E[X_i| X_1, \dots, X_{i-1}]=\mu~.
\]
Then, we have
\[
\Pr\left\{\max_t \max_{\beta \in [-\frac{1}{1-\mu},\frac{1}{\mu}]}\sum_{i=1}^t \ln\left(1+ \beta (X_i-\mu)\right) - R_t\geq \ln\frac{1}{\delta} \right\} \leq \delta,
\]
where $R_t = f(b^*_t,\lfloor t b^*_t+0.5\rfloor,t)$,
the function $f$ is defined as 
\begin{equation}
\label{eq:f}
f(b,k,t)=\ln \frac{\pi \, b^k (1-b)^{t-k} \Gamma(t+1)}{\Gamma(k+1/2) \Gamma(t-k+1/2)},
\end{equation}
and $b^*_t = \mu (1-\mu) \left(\argmax_{\beta \in [-\frac{1}{1-\mu},\frac{1}{\mu}]}\sum_{i=1}^t \ln\left(1+ \beta (X_i-\mu)\right)\right)+\mu$.
Moreover, $R_t\leq \ln\frac{\sqrt{\pi}\Gamma(t+1)}{\Gamma(t + \frac 1 2)}$.
\end{theorem}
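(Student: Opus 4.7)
The plan is to reinterpret the coin-betting expression $\ln(1+\beta(X_i-\mu))$ as the log-return of a constant rebalanced portfolio on a carefully chosen two-stock market, apply Ville's inequality to the wealth of the Cover--Ordentlich universal portfolio (the $F$-weighted portfolio under Jeffreys' prior $F=\mathrm{Beta}(\tfrac12,\tfrac12)$), and then convert the resulting tail bound on the wealth into the stated tail bound on the best constant bet via the sharp, data-dependent regret of that portfolio algorithm.

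First I would set up the reduction. Define $w_{t,1}=X_t/\mu$ and $w_{t,2}=(1-X_t)/(1-\mu)$, so $\bw_t\in[0,\infty)^2$, and by \eqref{eq:assumption} each coordinate is conditionally mean one, $\E[w_{t,j}\mid X_1,\dots,X_{t-1}]=1$. A direct calculation shows that under $\beta=(b-\mu)/(\mu(1-\mu))$ one has $1+\beta(X_t-\mu)=b w_{t,1}+(1-b)w_{t,2}$ for all $b\in[0,1]$, and this map sends $[0,1]$ bijectively onto $[-\tfrac{1}{1-\mu},\tfrac{1}{\mu}]$; in particular $\max_\beta \sum_{i=1}^t\ln(1+\beta(X_i-\mu))$ is exactly the log-wealth of the best constant rebalanced portfolio on $(\bw_t)$, and the stated relation $b^*_t=\mu(1-\mu)\beta^*_t+\mu$ comes for free.

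By \eqref{eq:closed_form_wealth}, the Jeffreys-weighted universal portfolio then has wealth
\[
\Wealth_t \;=\; \tfrac{1}{\pi}\int_0^1 \prod_{i=1}^t\bigl(b w_{i,1}+(1-b)w_{i,2}\bigr)\,\frac{db}{\sqrt{b(1-b)}}.
\]
Each inner factor is non-negative and conditionally mean one, so $\Wealth_t$ is a non-negative martingale with $\Wealth_0=1$; Ville's inequality then gives $\Pr\{\sup_t\Wealth_t\geq 1/\delta\}\leq\delta$. The theorem reduces to the deterministic regret inequality $\ln\Wealth_t\geq \ln\Wealth_t([b^*_t,1-b^*_t])-R_t$, after which the claimed tail bound follows by taking logs and rearranging.

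The hard part is proving this deterministic regret bound in its sharp data-dependent form $R_t=f(b^*_t,k^*_t,t)$. My approach would be to factor each return as $b w_{i,1}+(1-b)w_{i,2}=s_i\bigl(\tfrac{b}{b^*_t}p_i+\tfrac{1-b}{1-b^*_t}(1-p_i)\bigr)$, with $s_i=b^*_t w_{i,1}+(1-b^*_t)w_{i,2}$ and $p_i=b^*_t w_{i,1}/s_i\in[0,1]$, and apply the weighted AM--GM inequality $\tfrac{b}{b^*_t}p_i+\tfrac{1-b}{1-b^*_t}(1-p_i)\geq(b/b^*_t)^{p_i}((1-b)/(1-b^*_t))^{1-p_i}$ under the integral. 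The first-order optimality of $b^*_t$ (plus a separate argument for $b^*_t\in\{0,1\}$) gives $\sum_i p_i=tb^*_t$, and the remaining integral reduces to the Beta integral $\int_0^1 b^{k}(1-b)^{t-k}db/(\pi\sqrt{b(1-b)})=\Gamma(k+\tfrac12)\Gamma(t-k+\tfrac12)/(\pi\,\Gamma(t+1))$, producing the function $f$; a short monotonicity step lets me round the real value $tb^*_t$ to the nearest integer $k^*_t=\lfloor tb^*_t+\tfrac12\rfloor$. Finally, the worst-case bound $R_t\leq \ln(\sqrt\pi\,\Gamma(t+1)/\Gamma(t+\tfrac12))$ follows by maximizing $b^k(1-b)^{t-k}$ at $b=k/t$ and a standard Stirling simplification of the Gamma ratio, recovering the classical Krichevsky--Trofimov worst-case regret.
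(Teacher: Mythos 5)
Your overall architecture matches the paper's: the reduction $w_{i,1}=X_i/\mu$, $w_{i,2}=(1-X_i)/(1-\mu)$ with $b=\mu(1-\mu)\beta+\mu$ is exactly Lemma~\ref{lemma:reduction}, and applying Ville's inequality to the Dirichlet$(1/2,1/2)$-mixture wealth (a nonnegative martingale by the conditional-mean-one property) is exactly Theorem~\ref{thm:ville}. The gap is in your regret step. Your weighted AM--GM argument under the mixture integral is sound and yields, via first-order optimality of $b^*_t$ (so $\sum_i p_i = t b^*_t$), the bound $\Regret_t \le f(b^*_t,\, t b^*_t,\, t)$ with a \emph{real-valued} second argument. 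But the theorem asserts the bound with $R_t=f(b^*_t,\lfloor t b^*_t+0.5\rfloor,t)$, and since $R_t$ is subtracted inside the probability, validity requires $R_t\ge\Regret_t$ for this \emph{specific, smaller} value; your "short monotonicity step" rounding $t b^*_t$ to the nearest integer would need $f(b^*_t, t b^*_t, t)\le f(b^*_t,\lfloor t b^*_t+0.5\rfloor,t)$, and that inequality is false in general. Indeed $k\mapsto f(b,k,t)$ is concave in $k$ (the subtracted term is a cumulant generating function in $k$), so the value at the real point $t b^*_t$ can strictly exceed the value at every integer: e.g.\ $t=1$, $b^*_1=\tfrac12$ gives $f(\tfrac12,\tfrac12,1)=\ln\tfrac{\pi}{2}>0=f(\tfrac12,1,1)$, while the true regret there is $0$. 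So your sketch proves a valid but strictly weaker concentration (and the worst-case "Moreover" bound), not the stated one.

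The fix is to replace the AM--GM step by the argument the paper imports from \citet[Lemma~2]{CoverO96}: expand $\prod_{i=1}^t\bigl(b\,w_{i,1}+(1-b)\,w_{i,2}\bigr)=\sum_{k=0}^t c_k\, b^k(1-b)^{t-k}$ with nonnegative coefficients $c_k$, and use that a ratio of nonnegative sums is at most the maximum of the termwise ratios, giving
\[
\frac{\Wealth_t(b^*_t)}{\Wealth_t}\;\le\;\max_{0\le k\le t}\ \frac{(b^*_t)^k(1-b^*_t)^{t-k}}{\int_0^1 b^k(1-b)^{t-k}\,dF(b)},
\]
with \emph{integer} $k$; the Beta integral you wrote then produces $f$, and the paper's Lemma~\ref{lemma:local_maxima} (your missing ingredient) shows the maximizing integer is $\lceil t b^*_t-0.5\rceil$ or $\lfloor t b^*_t+0.5\rfloor$, yielding the stated $R_t$ and, via $b=k/t$ and the monotonicity in Lemma~\ref{lemma:local_maxima}, the worst-case bound $\ln\frac{\sqrt{\pi}\,\Gamma(t+1)}{\Gamma(t+\frac12)}$.
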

Next, we show that while it might be cryptic, Theorem~\ref{thm:main} has a very simple interpretation.

\paragraph{Derivation of the concentration} Here, we sketch a derivation of the concentration, while we leave the details in Section~\ref{sec:proof_main}.
\citet[Section 7.2]{JunO19} showed that the regret guarantee of betting algorithms that realize a supermartingale can be used to prove concentration inequalities. \citet{JunO19} were interested in sub-exponential random variables in Banach space, but here we can distill their result in the following general theorem. 
\begin{theorem}
\label{thm:ville}
Let $\delta \in (0,1]$ and $ X_1, X_2, \dots$ a sequence of random variables
such that $\E[X_i| X_1, \dots, X_{i-1}]=0$.
Assume that we have an online algorithm that observes $X_1, \dots, X_{t-1}$ and produces a non-negative $W_t$ such that $W_0=1$ and
\[
\E[W_t|X_1, \dots, X_{i-1}]\leq W_{t-1}~.
\]
Then, we have
\[
\Pr\left\{\max_t W_t \geq \frac{1}{\delta} \right\} \leq \delta~.
\]
\end{theorem}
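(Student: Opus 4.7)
The statement is essentially Ville's maximal inequality for non-negative supermartingales applied to the process $(W_t)_{t \geq 0}$, so the plan is to reproduce the standard argument while being careful about the time-uniform (unbounded horizon) nature of the event on the left-hand side.

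The plan is to first reduce to a finite horizon and then pass to the limit. Fix $N \in \Nat$ and define the stopping time
\[
\tau_N = \inf\{1 \leq t \leq N : W_t \geq 1/\delta\},
\]
with the convention $\tau_N = N$ if the infimum is over the empty set. Because $\tau_N$ is bounded, the hypothesis $\E[W_t \mid X_1,\dots,X_{t-1}] \leq W_{t-1}$ together with an easy induction (noting $\E[W_t] \leq 1 < \infty$, so integrability is not an issue) lets me apply the optional stopping theorem to the supermartingale $(W_t)$, yielding $\E[W_{\tau_N}] \leq \E[W_0] = 1$. Then on the event $A_N = \{\max_{1\leq t\leq N} W_t \geq 1/\delta\}$, by the definition of $\tau_N$ we have $W_{\tau_N} \geq 1/\delta$, while on $A_N^c$ we merely have $W_{\tau_N} \geq 0$ by non-negativity. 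Hence
\[
1 \geq \E[W_{\tau_N}] \geq \E[W_{\tau_N} \, \one_{A_N}] \geq \frac{1}{\delta} \Pr(A_N),
\]
so $\Pr(A_N) \leq \delta$.

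To conclude I would pass to the supremum over all times: the events $A_N$ are monotone increasing in $N$ and their union is exactly $\{\max_t W_t \geq 1/\delta\}$, so by continuity of probability from below $\Pr(\max_t W_t \geq 1/\delta) = \lim_{N \to \infty} \Pr(A_N) \leq \delta$, which is the claim.

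There is no serious obstacle here; the only delicate point is the justification of optional stopping, which is why I localize to a bounded stopping time $\tau_N$ rather than working directly with the first time the supermartingale crosses $1/\delta$ (which could be infinite). The non-negativity of $W_t$ is essential both for controlling the contribution of $A_N^c$ to $\E[W_{\tau_N}]$ and for ensuring the supermartingale is integrable at every step.
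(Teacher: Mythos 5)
Your proof is correct and is essentially the same route as the paper, which simply observes that the statement is a restatement of Ville's maximal inequality for non-negative supermartingales and cites it without further argument. Your localization to the bounded stopping time $\tau_N$, the optional-stopping bound $\E[W_{\tau_N}]\leq 1$, and the passage to the limit via continuity from below is exactly the standard proof of that inequality, so you have merely filled in the details the paper delegates to the reference.
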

This is just a restatement of Ville's maximal inequality\footnote{$\Pr\{\max_t M_t \geq 1/\delta\} \leq
\delta$ for any non-negative super-martingale starting from $E[M_0] = 1$ and any $\delta \in (0, 1]$.}~\citep[page 84]{Ville39}. However, here we want to stress two important, but simple, details: i) the random variables $W_t$ are generated by an online algorithm, and ii) $W_t$ can be anything, \emph{not only} the wealth of the algorithm. Indeed, multiple cases are possible.
In particular, if $X_i$ are bounded random variables in a known range, $W_t$ can indeed be defined as the wealth of an online betting algorithm that guarantees a non-negative wealth. However, the above it is still true for any $W_t$ that lower bounds the wealth of the algorithm. Another example is the sub-exponential case, where non-negativity of the wealth cannot be guaranteed anymore. In this case, \citet{JunO19} and \citet{vanderHoeven19}, under different assumptions, showed that it is possible to design online betting algorithms such that $W_t$ can be defined as a non-negative lower bound to the \emph{expected} wealth of the algorithm.

Here, we focus on the bounded case only and $W_t$ will be \emph{a lower bound} to the wealth of an online betting algorithm.
Why a lower bound? Because evaluating the wealth of a betting strategy might be computationally expensive. 
Hence, we leverage a simple observation: \emph{We do not actually need to know the plays of the betting algorithm. Instead, we only need a lower bound to its wealth at each time step.} In particular, we can obtain a lower bound to the wealth from the optimal wealth and an upper bound to the regret of the betting algorithm.

So, we are now ready to instantiate Theorem~\ref{thm:ville}: we need some coin outcomes, a betting strategy, and its regret. As in \citet{JunO19}, we simply set the coin outcomes to $X_t-\mu$. One might object that we do not know $\mu$, but to derive the concentration, as we show below, it suffices to act as if we knew it. As betting algorithm we will use an $F$-weighted portfolio algorithm. Putting all together, it should be clear now that the concentration is just an application of Theorem~\ref{thm:ville} on a lower bound to the wealth of the algorithm obtained through the optimal wealth ($\max_{\beta \in [-\frac{1}{1-\mu},\frac{1}{\mu}]} \ \sum_{i=1}^t \ln\left(1+ \beta (X_i-\mu)\right)$) minus an upper bound to the regret of the algorithm ($R_t$).

Next, we will motivate why this is an interesting concentration and which properties it has.

\paragraph{Wealth Upper-Bounds the KL Divergence}
It may not seem obvious if the log wealth is a better candidate for constructing a confidence sequence than some standard ones like Bernoulli KL-divergence based bound~\cite[e.g.,][Theorem 10]{garivier11kl}, which works for random variables supported in [0,1]:
\begin{align*}
    \PP\left\{\max_{t} \ t\cdot \kl(\hmu_t, \mu) - \ln f(t) \ge \ln \frac{1}{\delta}\right\} \le \delta,
\end{align*}
where $\kl(p,q)=p \ln \frac{p}{q}+(1-p)\ln\frac{1-p}{1-q}$ and for some $f(t)$ that grows polynomially in $t$ or slower.
In the following proposition, we show that the maximum wealth is never worse than the KL divergence, which supports a viewpoint that KL divergence is a special case of the maximum wealth and that confidence bounds constructed with the maximum wealth is tighter that those with KL divergence.
\begin{proposition}
\label{prop:kl}
    Let $X_1,\dots,X_t \in [0,1]$, $\hmu_t = \fr1t \sum_{i=1}^t X_i$, and $\mu \in [0,1]$. 
    Then,
    \begin{align*}
        \max_{\beta \in [-\frac{1}{1-\mu},\frac{1}{\mu}]} \ \sum_{i=1}^t \ln\left(1+ \beta (X_i-\mu)\right) \ge t\cdot \kl\left(\hmu_t,\mu\right),
    \end{align*}
    where we achieve the equality if $X_i,\dots,X_t \in \{0,1\}$.
\end{proposition}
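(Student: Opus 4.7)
The plan is to exhibit a specific feasible betting fraction $\beta^\star$ that already achieves (a lower bound equal to) $t\cd \kl(\hmu_t,\mu)$, so that the maximum over $\beta$ can only be at least as large. The natural candidate, suggested by looking at the binary case, is
\[
\beta^\star = \frac{\hmu_t - \mu}{\mu(1-\mu)}~.
\]
My first step would be to verify feasibility: since $\hmu_t \in [0,1]$, we have $\hmu_t - \mu \in [-\mu,\, 1-\mu]$, so $\beta^\star \in [-\tfrac{1}{1-\mu},\, \tfrac{1}{\mu}]$, which is exactly the admissible interval.

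The key step is to apply the concavity of the logarithm to each term. For any $X_i \in [0,1]$, I would write the affine identity
\[
1 + \beta^\star (X_i - \mu) = X_i \cd \bigl(1 + \beta^\star(1-\mu)\bigr) + (1-X_i) \cd \bigl(1 - \beta^\star \mu\bigr),
\]
which one checks by expanding the right-hand side. Concavity of $\ln$ then yields
\[
\ln\bigl(1+\beta^\star(X_i-\mu)\bigr) \ge X_i \ln\bigl(1+\beta^\star(1-\mu)\bigr) + (1-X_i)\ln\bigl(1-\beta^\star\mu\bigr)~.
\]
Summing over $i=1,\dots,t$ and using $\sum_i X_i = t\hmu_t$, the right-hand side becomes $t\hmu_t \ln(1+\beta^\star(1-\mu)) + t(1-\hmu_t)\ln(1-\beta^\star\mu)$.

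The last step is pure algebra: substituting the definition of $\beta^\star$ gives $1+\beta^\star(1-\mu) = \hmu_t/\mu$ and $1-\beta^\star\mu = (1-\hmu_t)/(1-\mu)$, so the lower bound collapses to
\[
t\hmu_t \ln\frac{\hmu_t}{\mu} + t(1-\hmu_t)\ln\frac{1-\hmu_t}{1-\mu} = t \cd \kl(\hmu_t,\mu)~.
\]
For the equality claim, when $X_i \in \{0,1\}$ the affine identity above is an evaluation of $\ln$ at endpoints of the interval $[1-\beta^\star\mu, 1+\beta^\star(1-\mu)]$, so Jensen's inequality becomes an equality termwise; moreover $\beta^\star$ is the exact maximizer (as can be checked by setting the derivative of $k\ln(1+\beta(1-\mu)) + (t-k)\ln(1-\beta\mu)$ to zero, with $k=\sum_i X_i$), so the max is attained and equals $t\cd \kl(\hmu_t,\mu)$.

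I do not foresee a real obstacle here; the only subtle point is spotting the correct $\beta^\star$ and noticing that the convex combination identity makes the concavity argument work uniformly across $X_i \in [0,1]$.
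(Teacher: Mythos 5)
Your proposal is correct and follows essentially the same route as the paper's proof: the affine identity $1+\beta(X_i-\mu)=X_i(1+\beta(1-\mu))+(1-X_i)(1-\beta\mu)$ with Jensen/concavity of $\ln$, and the optimal constant fraction $\beta^\star=\frac{\hat\mu_t-\mu}{\mu(1-\mu)}$ turning the lower bound into $t\cdot \kl(\hat\mu_t,\mu)$. The only cosmetic difference is that you plug in $\beta^\star$ directly while the paper maximizes the lower bound over $\beta$ and then identifies the same maximizer.
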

\begin{proof}
    The proof is inspired by~\citet[Lemma 7]{OrabonaP16}. Using Jensen's inequality, we have for any $X\in[0,1]$ that
    \begin{align*}
      \ln(1 + \beta(X - \mu))
        &= \ln[X(1 + \beta(1-\mu)) + (1-X) (1+\beta(0-\mu))]
      \\&\ge X \ln(1 + \beta(1-\mu)) + (1-X)\ln (1 + \beta(0-\mu))~,
    \end{align*}
    Note that we achieve the equality when $X = 1$ or $X = 0$.
    Then, we have
    \begin{align*}
      \max_{\beta \in [-\frac{1}{1-\mu},\frac{1}{\mu}]} \ \sum_{i=1}^t \ln(1 + \beta (X_i - \mu))
      &\ge \max_{\beta \in [-\frac{1}{1-\mu},\frac{1}{\mu}]} \sum_i X_i \ln(1 + \beta(1-\mu)) + (1-X_i)\ln (1 - \beta \mu)
      \\&= \max_{\beta \in [-\frac{1}{1-\mu},\frac{1}{\mu}]} t[ \hmu_t \ln(1 + \beta(1-\mu)) + (1-\hmu_t)\ln (1 - \beta\mu)  ]~.
    \end{align*}
    As the RHS is concave in $\beta$, it remains to maximize the RHS over $\beta$.
    The solution is $\beta = \fr{\hmu_t - \mu}{\mu(1-\mu)}$ with which the maximum becomes $t \cd \kl(\hmu_t, \mu)$.
\end{proof}
We verify Proposition 1 in Figure~\ref{fig:wealth_vs_kl}.
\begin{figure}
\begin{tabular}{cccc}
  \includegraphics[width=0.16\linewidth]{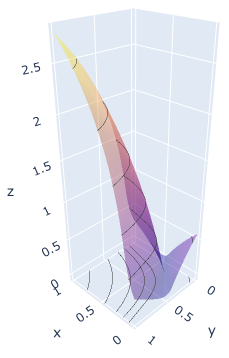} &
  \includegraphics[width=0.16\linewidth]{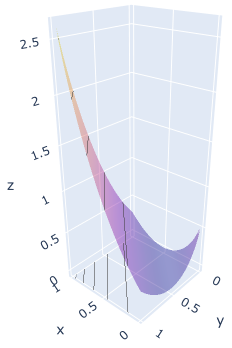} &
  \includegraphics[width=0.25\linewidth]{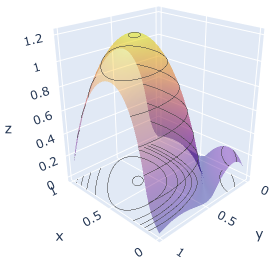} &
  \includegraphics[width=0.38\linewidth]{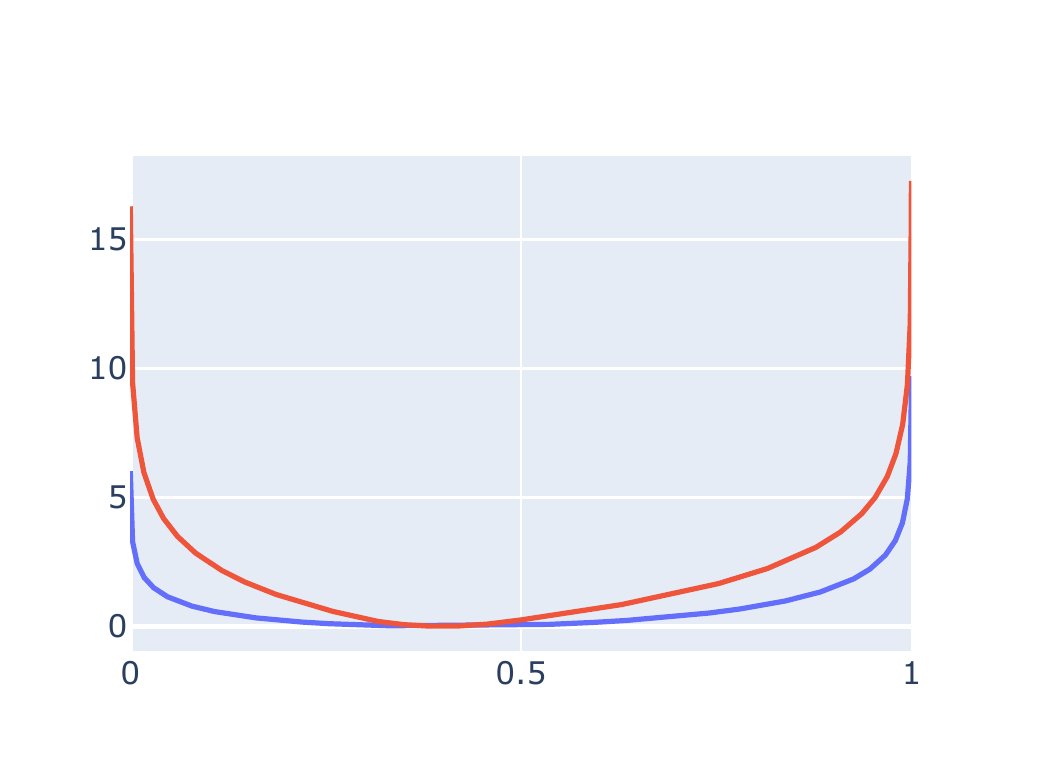} 
  \\ (a)  & (b) & (c)  & (d) 
\end{tabular}
\caption{Illustration of the fact that the maximum log wealth w.r.t. coins centered at $m$ upperbounds the KL divergence $t\cd\kl(\hat\mu_t,m)$ where $t=2$. We set $m=\fr14$. (a-c) the maximum log wealth, the KL divergence, and the value of (a) minus (c) as a function of two data points $x$ and $y$. (d) The maximum wealth (red) and the KL divergence $t\cd\kl(\hat\mu_t,m)$ (blue) as a function of $m$ with two samples $\{0.2,0.6\}$.  }
\label{fig:wealth_vs_kl}
\end{figure}

\paragraph{$F$-Weighted portfolios:  Optimal Regret, Permutation-Invariance, and Intervals} 
Theorem~\ref{thm:main} is obtained using the regret of the $F$-weighted portfolio algorithm. However, one might wonder if better algorithms are possible. This is readily answered: \citet{CoverO96} showed that $F$-weighted portfolios achieve an anytime (i.e., time-uniform) regret upper bound that is at most $\ln \sqrt{2 \pi}$ larger than the optimal achievable regret for this problem. Also, in Theorem~\ref{thm:lil} we will show that it is possible to derive the law of iterated logarithm using an $F$-weighted portfolio algorithm. Hence, $F$-weighted portfolios are optimal in two different ways. However, we can say even more.
$F$-weighted portfolios are invariant to the order of the market gains~\citep[Proposition 4]{Cover91}, as it is clear from \eqref{eq:closed_form_wealth}. Hence, the confidence interval at time $t$ obtained from the wealth of an $F$-weighted portfolio algorithm are independent of the order of the samples $X_1,\dots,X_t$. We believe this is an important property to avoid a ``lottery'' based on the ordering of the samples, as motivated in \citet{MeinshausenMB09}.
More importantly, a little known\footnote{We were unable to find any mention of this result in prior work.} result by \citet[pages 85--87]{Ville39} proves that in the case of betting on a coin \emph{the only betting strategies that are invariant to the order of the outcomes are $F$-weighted portfolios}.
Last, we can show in the following theorem that the confidence sequences derived by the wealth of a $F$-weighted portfolio algorithm in \eqref{eq:closed_form_wealth} are always intervals.
\begin{theorem}
\label{thm:f_weighted_intervals}
Let $c_i \in [0,1]$. Then, the function $F(m)=\int_{0}^{1} \prod_{i=1}^t(1+\beta(c_i-m))d F(\beta)$ is convex in $m \in [0,1]$.
\end{theorem}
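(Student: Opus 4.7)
The plan is to prove convexity of $F$ by establishing that the integrand is convex in $m$ for each fixed $\beta \in [0,1]$, then using the fact that non-negative mixtures of convex functions are convex.

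First I would fix $\beta \in [0,1]$ and denote $h_\beta(m) := \prod_{i=1}^t (1 + \beta(c_i - m))$, and observe the key non-negativity fact: since $\beta \in [0,1]$ and $c_i, m \in [0,1]$, we have $\beta(c_i - m) \in [-\beta, \beta] \subseteq [-1,1]$, so each factor satisfies $1 + \beta(c_i - m) \in [0, 2]$. Note it is critical that the integration range is $[0,1]$, because for $|\beta| > 1$ the factors can become negative, so the convexity of the integrand can fail; this is what forces the restriction to $\beta \in [0,1]$.

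Next I would compute $h_\beta''(m)$ by differentiating the product twice. Since each factor is affine in $m$ with slope $-\beta$, a direct application of the product rule gives
\[
h_\beta''(m) = \beta^2 \sum_{j \neq k} \prod_{i \notin \{j,k\}} \bigl(1 + \beta(c_i - m)\bigr).
\]
The leading $\beta^2$ is non-negative, and by the previous step every factor in each of the $(t-2)$-fold products is non-negative. Therefore $h_\beta''(m) \geq 0$ for every $m \in [0,1]$, so $h_\beta$ is convex on $[0,1]$.

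To conclude, I would just invoke convexity pointwise and integrate: for any $m_0, m_1 \in [0,1]$ and $\lambda \in [0,1]$, convexity of $h_\beta$ yields $h_\beta(\lambda m_0 + (1-\lambda)m_1) \leq \lambda h_\beta(m_0) + (1-\lambda) h_\beta(m_1)$, and integrating both sides against $dF(\beta)$ over $[0,1]$ (which preserves the inequality since $F$ is a non-negative measure) gives the convexity of $F(m)$.

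The main (modest) obstacle is ensuring correctness of the second-derivative formula; nothing else is delicate, since the integrand is smooth in $m$ on a compact domain so no dominated-convergence issues arise. The more conceptual point worth flagging — and this is what makes the choice of integration range natural — is that the non-negativity of every factor $1+\beta(c_i-m)$ is exactly what makes $h_\beta''$ manifestly non-negative, and this is precisely the regime where the $F$-weighted portfolio interpretation remains meaningful (wealth is non-negative).
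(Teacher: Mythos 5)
Your proposal is correct and takes essentially the same route as the paper: both arguments establish pointwise convexity of the integrand in $m$ for each fixed $\beta$ by showing its second derivative is nonnegative (the paper differentiates $\exp\bigl(\sum_i \ln(1+\beta(c_i-m))\bigr)$, which yields algebraically the same nonnegative quantity as your product-rule formula, since all factors $1+\beta(c_i-m)$ are nonnegative), and then integrate against the nonnegative mixture. One minor remark: the nonnegativity of the factors, and hence the convexity argument, in fact holds for all $\beta\in[-1,1]$ (as the paper's proof uses), so it is the restriction $\abs{\beta}\le 1$ rather than $\beta\ge 0$ that is essential.
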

\begin{proof}
For a fixed $\beta \in [-1,1]$, define $G(m)=\prod_{i=1}^t (1+\beta (c_i-m))=\exp(\sum_{i=1}^t\ln(1+\beta(c_i-m)))$.
We have that
\[
G''(m)
= \exp\left(\sum_{i=1}^t\ln(1+\beta(c_i-m))\right)\left[ \left(\sum_{i=1}^t\frac{-\beta}{1+\beta(c_i-m)}\right)^2 - \sum_{i=1}^t\frac{\beta^2}{(1+\beta(c_i-m))^2}\right] \geq 0~. 
\]
Hence, $F(m)$ is convex.
\end{proof}

In the next section, we show that Theorem~\ref{thm:main} implies an explicit concentration inequality.

\subsection{Numerical inversion of the Theorem~\ref{thm:main}}

Inverting the concentration in Theorem~\ref{thm:main} is actually immediate: For each $t$, it is enough to find all values of $m$ that satisfies the inequality 
\begin{equation}
\label{eq:to_be_inverted}
\max_{\beta \in [-\frac{1}{1-m},\frac{1}{m}]} \ \sum_{i=1}^t \ln\left(1+ \beta (X_i-m)\right) - R_t\leq \ln\frac{1}{\delta}~.
\end{equation}
This, unfortunately, might not lead to an interval.
However, we show below that there exists a tight upper bound on $R_t$ that helps us derive an interval.
This means that we can look for the extremes of the interval with a simple and efficient binary search procedure. 
In turn, the maximization over $\beta$ is one-dimensional concave problem that can be easily solved with any standard convex optimization algorithm.
The overall procedure is described in Algorithm~\ref{alg:confidence_intervals} that we call PRECiSE-CO96 (Portfolio REgret for Confidence SEquences (PRECiSE) using \citet{CoverO96}).
This algorithm is guaranteed to generate confidence sequences, as detailed in the following theorem.

\begin{algorithm}[t]
\begin{algorithmic}[1]
\STATE{\textbf{Input:} $\delta \in (0,1)$, random variables $X_1, X_2, \dots$ in $[0,1]$}
\STATE{$\ell_0=0, u_0=1$}
\FOR{$t=1,2,\dots$}
\STATE{$\hat{\mu}_t=\frac{1}{t} \sum_{i=1}^t X_t$}
\STATE{$b^{\ell}=\argmax_{b \in [0,1]} \  \sum_{i=1}^t \ln\left(b \frac{X_i}{\ell_{t-1}}+(1-b)\frac{1-X_i}{1-\ell_{t-1}}\right)$}
\STATE{$R^{\ell}_t=\max(f(\lceil b^{\ell}\cdot t-0.5\rceil/t,\lceil b^{\ell}\cdot t-0.5\rceil,t),f(\lfloor \hat{\mu}\cdot t+0.5\rfloor/t,\lfloor \hat{\mu}\cdot t+0.5\rfloor),t)$ where $f$ is in \eqref{eq:f}}
\STATE{$L_t=\{ m \in [\ell_{t-1},\hat{\mu}] : \max_{b \in [0,1]}\sum_{i=1}^t \ln\left(b \frac{X_i}{m}+(1-b)\frac{1-X_i}{1-m}\right) - R^\ell_t\geq \ln\frac{1}{\delta}\}$}
\STATE{Find $\ell_t=\max ~( L_t \cup \{\ell_{t-1}\} )$ by binary search}
\STATE{$b^{u}=\argmax_{b \in [0,1]} \  \sum_{i=1}^t \ln\left(b \frac{X_i}{u_{t-1}}+(1-b)\frac{1-X_i}{1-u_{t-1}}\right)$}
\STATE{$R^u_t=\max(f(\lfloor b^{u}\cdot t+0.5\rfloor/t,\lfloor b^{u}\cdot t+0.5\rfloor,t),f(\lceil \hat{\mu}\cdot t-0.5\rceil/t,\lceil \hat{\mu}\cdot t-0.5\rceil,t))$ where $f$ is in \eqref{eq:f}}
\STATE{$U_t=\{ m \in [\hat{\mu},u_{t-1}] : \max_{b \in [0,1]}\sum_{i=1}^t \ln\left(b \frac{X_i}{m}+(1-b)\frac{1-X_i}{m}\right) - R^u_t\geq \ln\frac{1}{\delta}\}$}
\STATE{Find $u_t=\min ~(U_t \cup \{u_{t-1}\})$ by binary search}
\ENDFOR
\end{algorithmic}
\caption{PRECiSE-CO96: Portfolio REgret for Confidence SEquences using \citet{CoverO96}}
\label{alg:confidence_intervals}
\end{algorithm}

\begin{theorem}
\label{thm:algo}
Let $\delta \in (0,1)$. Assume $ X_1, X_2, \dots$ a sequence of random variables such that for each $i$ we have $0 \leq X_i \leq 1$ and
\[
\E[X_i| X_1, \dots, X_{i-1}]=\mu~.
\]
Run Algorithm~\ref{alg:confidence_intervals} on $X_1, X_2, \dots$.
Then, for any $t$ we have that, with probability at least $1-\delta$, $\mu \in [\ell_t, u_t]$.
\end{theorem}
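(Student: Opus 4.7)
The plan is to derive Theorem~\ref{thm:algo} from Theorem~\ref{thm:main} by induction on $t$, using the convexity from Theorem~\ref{thm:f_weighted_intervals} together with a regret-domination bound $R^{\ell}_t\geq R_t$ (and its symmetric counterpart) to show that the true mean $\mu$ never enters the algorithm's exclusion sets $L_t$ or $U_t$. First, invoking Theorem~\ref{thm:main} with the same $\delta$ yields the high-probability event
\[
E \;=\; \biggl\{\forall t\geq 1:\ \max_{\beta\in[-\tfrac{1}{1-\mu},\tfrac{1}{\mu}]}\sum_{i=1}^t\ln(1+\beta(X_i-\mu))\,-\,R_t \;<\; \ln(1/\delta)\biggr\},
\]
with $\Pr(E)\geq 1-\delta$, $R_t = f(b_t^*,\lfloor tb_t^*+0.5\rfloor,t)$, $b_t^* = \mu(1-\mu)\beta_t^*+\mu$, and $\beta_t^*$ the inner maximizer at $\mu$. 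I would then show $\mu\in[\ell_t,u_t]$ on $E$ by induction on $t$, with the base case $\mu\in[0,1]=[\ell_0,u_0]$ trivial. Note that under the reparametrization $b = m(1-m)\beta+m$, the inner maximum in the algorithm's definitions of $L_t$ and $U_t$ is precisely $\max_\beta\sum_i\ln(1+\beta(X_i-m))$, matching the quantity controlled on $E$ when $m=\mu$.

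For the inductive step, since $L_t\subseteq[\ell_{t-1},\hat\mu_t]$ and $U_t\subseteq[\hat\mu_t,u_{t-1}]$, the induction hypothesis $\mu\in[\ell_{t-1},u_{t-1}]$ gives automatically $\mu\leq u_t$ whenever $\mu\leq\hat\mu_t$, and $\ell_t\leq\mu$ whenever $\mu\geq\hat\mu_t$. By symmetry I only detail the case $\ell_{t-1}\leq\mu\leq\hat\mu_t$ and show $\mu\geq\ell_t$. Let $W(m):=\max_\beta\prod_{i=1}^t(1+\beta(X_i-m))$. Repeating the second-derivative computation from the proof of Theorem~\ref{thm:f_weighted_intervals}, each factor $m\mapsto\prod_i(1+\beta(X_i-m))$ is convex in $m$, so $W$, as a pointwise supremum of convex functions, is convex. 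Moreover $W(\hat\mu_t)=1$: the inner maximization in $\beta$ is solved by $\beta=0$ precisely when $m=\hat\mu_t$, while $W(m)>1$ elsewhere. Hence $W$ is strictly decreasing on $[\ell_{t-1},\hat\mu_t]$, so $L_t$ (defined in Line~7) is either empty or a subinterval $[\ell_{t-1},a]$ with $a<\hat\mu_t$, and $\ell_t = \max(L_t\cup\{\ell_{t-1}\})\in[\ell_{t-1},\hat\mu_t)$.

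The remaining step, which is the main technical obstacle, is to prove the regret-domination $R^{\ell}_t\geq R_t$ whenever $\mu\in[\ell_{t-1},\hat\mu_t]$. Granting this, on $E$ we obtain $\ln W(\mu)<\ln(1/\delta)+R_t\leq\ln(1/\delta)+R^{\ell}_t$, so $\mu\notin L_t$, and combined with the interval structure above, $\ell_t\leq\mu$. To prove the domination I would first show that $\mu\mapsto b_t^*(\mu)$ is monotone on $[\ell_{t-1},\hat\mu_t]$ with endpoints $b_t^*(\ell_{t-1})=b^{\ell}$ (read off from Line~5) and $b_t^*(\hat\mu_t)=\hat\mu_t$ (forced by the first-order condition in $\beta$). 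Second, I would verify that $b\mapsto f(b,\lfloor tb+0.5\rfloor,t)$ is U-shaped with minimum near $b=1/2$, so its supremum over any subinterval of $[0,1]$ is attained at an endpoint; combined with the identity $\max_b f(b,k,t)=f(k/t,k,t)$ for fixed $k$ and the conservative rounding $\lceil b^{\ell}t-0.5\rceil$, $\lfloor\hat\mu_t t+0.5\rfloor$ in Line~6, this yields $R^{\ell}_t\geq R_t(\mu)$ for every attainable $b_t^*$. The complementary case $\hat\mu_t\leq\mu\leq u_{t-1}$ then follows by an entirely symmetric argument using $R^{u}_t$, completing the induction.
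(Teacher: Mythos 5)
Your overall architecture is the same as the paper's: invoke Theorem~\ref{thm:main} to get the time-uniform good event, use the monotone/quasiconvex structure of $m\mapsto\max_\beta\sum_i\ln(1+\beta(X_i-m))$ on either side of $\hat\mu_t$ so that $L_t$ and $U_t$ are initial/terminal segments amenable to binary search, and reduce everything to the regret-domination $R^\ell_t\ge R_t$ (resp.\ $R^u_t\ge R_t$) for all candidate means in $[\ell_{t-1},\hat\mu_t]$ (resp.\ $[\hat\mu_t,u_{t-1}]$). This is exactly how the paper argues (its proof of Theorem~\ref{thm:algo} cites Lemma~\ref{lemma:intervals} for the interval structure and Lemma~\ref{lemma:alg_u_l_bounds} for the domination). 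The problem is that you have left precisely the step you yourself call ``the main technical obstacle'' unproven: the domination is only sketched with ``I would show\dots'', and the sketch as stated is not correct. The function $b\mapsto f(b,\lfloor tb+0.5\rfloor,t)$ is \emph{not} U-shaped: by the paper's Lemma~\ref{lemma:local_maxima} it is a scalloped function with a local maximum at \emph{every} grid point $i/t$ (concave on each cell $I_k$), so its supremum over a subinterval is generally attained at an interior grid point, not at an endpoint — this is exactly why Algorithm~\ref{alg:confidence_intervals} rounds $b^\ell$ and $\hat\mu_t$ to nearby grid indices. What actually closes the argument is the combination of (i) the rounding identifying the nearest local maxima, and (ii) the monotonicity of the local-maximum heights $h(i/t)$ toward the middle (third bullet of Lemma~\ref{lemma:local_maxima}, which the paper imports from Lemma~4 of \citet{CoverO96}), together with (iii) the monotonicity of $\phi$ in $m$ and $\phi(\hat\mu_t)=\hat\mu_t$. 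None of (i)–(iii) is established in your write-up, so the proof is incomplete at its central point.

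Two smaller remarks. First, your interval-structure argument via convexity of $W(m)=\max_\beta\prod_i(1+\beta(X_i-m))$ has a gap: the feasible set $[-\frac{1}{1-m},\frac{1}{m}]$ depends on $m$, so $W$ is not a pointwise supremum of convex functions over a fixed index set; you would need to reparametrize by $b\in[0,1]$ (fixed domain) and use, e.g., log-convexity of each $\Wealth_t(b)$ in $m$, or simply prove the one-sided monotonicity directly as the paper does in Lemma~\ref{lemma:intervals} (monotonicity, not convexity, is all that is needed, and it also covers the per-$m$ constraint sets cleanly). Second, your reduction of the time-uniformity to induction plus $L_t\subseteq[\ell_{t-1},\hat\mu_t]$, $U_t\subseteq[\hat\mu_t,u_{t-1}]$ is fine and matches the role of lines 8 and 12 of the algorithm.
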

The algorithm works by finding for each new sample $X_t$ a lower bound $\ell_t$ and an upper bound $u_t$.
In particular, it has to find the the zero of a quasi-convex function in lines 8 and 12, that can be done by binary search. 
However, the function itself is defined as the maximum of a concave function on a bounded domain, that again can be solved by a binary search or any convex optimization algorithm. 
If we use the binary search for both with a given target numerical precision, the computational complexity for $t$-th time step is $O(t \log^2(\fr{1}{\text{precision}}))$.
We defer the full proof to Section~\ref{sec:proof_thm_algo}.

\paragraph{Never-Vacuous Guarantees}
We also prove our claim that the confidence sequences from Algorithm~\ref{alg:confidence_intervals} are never vacuous, the proof is in Section~\ref{sec:proof_thm_never_vacuous}.
\begin{theorem}
\label{thm:never_vacuous}
Under the assumptions of Theorem~\ref{thm:algo}, for any $t$ we have that $u_t-\ell_t \leq u_1-\ell_1=1-\frac{\delta}{2}$.
\end{theorem}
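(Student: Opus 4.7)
The plan splits into two pieces: a trivial monotonicity observation that reduces everything to $t=1$, and an explicit one-sample computation showing $u_1-\ell_1=1-\delta/2$.

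\textbf{Monotonicity.} Lines 8 and 12 of Algorithm~\ref{alg:confidence_intervals} set $\ell_t = \max(L_t\cup\{\ell_{t-1}\})$ and $u_t = \min(U_t\cup\{u_{t-1}\})$. These updates automatically force $\ell_t\ge \ell_{t-1}$ and $u_t\le u_{t-1}$, so $u_t-\ell_t$ is non-increasing in $t$. By induction, $u_t-\ell_t\le u_1-\ell_1$, and it remains to compute $u_1-\ell_1$ exactly.

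\textbf{The $t=1$ computation.} With a single sample, $\hat\mu_1=X_1$, so the search ranges in lines 7 and 11 are $[0,X_1]$ for the lower endpoint and $[X_1,1]$ for the upper endpoint. The argument of the log in the inner optimization is linear in $b$, so the maximum is attained at a boundary. On $[0,X_1]$, $X_1/m\ge(1-X_1)/(1-m)$, so the max is achieved at $b=1$ with value $\ln(X_1/m)$; on $[X_1,1]$, the symmetric inequality forces $b=0$ and value $\ln((1-X_1)/(1-m))$. Next, using $\Gamma(1/2)=\sqrt\pi$, $\Gamma(3/2)=\sqrt\pi/2$, $\Gamma(2)=1$ in the formula \eqref{eq:f}, both $f(0,0,1)$ and $f(1,1,1)$ evaluate to $\ln\bigl(\pi/(\sqrt\pi/2\cdot\sqrt\pi)\bigr)=\ln 2$; since these are the only candidates appearing in line 6 and line 10 at $t=1$, we get $R^\ell_1=R^u_1=\ln 2$. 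Solving the threshold inequalities
\[
\ln(X_1/m)\ge \ln(1/\delta)+\ln 2, \qquad \ln((1-X_1)/(1-m))\ge \ln(1/\delta)+\ln 2
\]
gives $\ell_1=X_1\delta/2$ and $u_1=1-(1-X_1)\delta/2$. The $X_1$ terms cancel, yielding $u_1-\ell_1 = 1-\delta/2$ independently of the sample.

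\textbf{Main obstacle.} The monotonicity step is essentially free from reading off the algorithm, and the inner maximization at $t=1$ is a one-line optimization of $\ln$ of a linear function. The only non-routine piece is the identity $R^\ell_1=R^u_1=\ln 2$: one must verify carefully that the two Gamma-function expressions appearing as candidates in lines 6 and 10 both collapse to $\ln 2$ for $t=1$, regardless of how the rounding of $b^\ell\cdot 1$ or $\hat\mu\cdot 1$ falls. Once that identity is in place, the perfect cancellation of $X_1$ in the width is immediate, and combining with the monotonicity of the previous paragraph finishes the proof.
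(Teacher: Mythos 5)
Your proposal is correct and follows essentially the same route as the paper's proof: monotonicity of the endpoints from the algorithm's updates reduces everything to $t=1$, where the inner maximization is attained at the boundary ($b=1$, i.e.\ $\beta=1/m$, below $\hat\mu_1$ and $b=0$, i.e.\ $\beta=-1/(1-m)$, above), the regret constant equals $\ln 2$, and inverting gives $\ell_1=X_1\delta/2$, $u_1=1-(1-X_1)\delta/2$, so the width is $1-\delta/2$. You merely work in the portfolio parameterization $b$ rather than the betting fraction $\beta$ and spell out the $\Gamma$-function evaluation of $R^\ell_1=R^u_1=\ln 2$ that the paper leaves as a ``direct calculation.''
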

While people are used to reason on the asymptotics of confidence intervals, it is less known what is the right behavior for a small number of samples. 
Hence, to better understand the statement in the last theorem, it is useful to compare it with the width of the exact confidence intervals for a Binomial distribution~\citep{BlythS83}.
It is easy to see that with one observation the exact confidence width at probability $1-\delta$ is exactly $1-\delta$. 
Of course, it would be impossible to match this bound with time-uniform results. 
However, we essentially match the right dependency on $\delta$, allocating $\frac12$ one half of the probability of error on $t=1$ and the other half on all the other $t$; see Figure~\ref{fig:exact} for comparing ours with the exact confidence bound.

\begin{remark}[The regret is worst-case?]
There is a general misconception about regret upper bounds. In particular, many people tend to think that regret upper bounds are loose, pessimistic, and in general do not capture the actual performance of an online algorithm on non-adversarial sequences.
This is wrong for a number of reasons. First, in some cases it is even possible to design online learning algorithms whose regret is the \emph{same} on any sequence. This is indeed the case for betting on a coin with known number of rounds $T$~\citep[Section 9.4]{Cesa-BianchiL06}. However, this does not mean that the performance of the algorithm does not depend on the sequence itself: The regret is only the difference between the performance of the optimal predictor in a certain class and the performance of the algorithm. So, even if the regret is the same on any sequence, the performance of the optimal predictor \emph{will depend} on the sequence and so the performance of the online algorithm we are analyzing.
Another important point is that there is often a trade-off between the mathematical simplicity of a regret upper bound and its tightness, but nothing prevents us to obtain tighter and more complex expressions. For example, in the case of betting algorithms, the wealth of the algorithm in the proofs is often lower bounded by a much smaller quantity,\footnote{\citet{ShaferV19} call it ``slackening''.} yet this step is not mandatory.
Last point is the fact that a good regret upper bound will typically be tight for a particular sequence of inputs. For example, the regret upper bound of universal portfolio is tight for Kelly markets~\citep{CoverO96}.
\end{remark}

\begin{remark}[Better betting algorithms for stochastic coins?]
The above approach does not seem to use the fact that the $X_t$ are stochastic. In particular, it might not be immediately clear why we should use investing algorithms that try to match the performance of the optimal rebalanced portfolio. However, this issue is discussed at length in \citet[Chapter 16]{CoverT06}, where it is proved that, if the market gains are i.i.d., the optimal growth rate of the wealth is achieved by a constant rebalanced portfolio.
\end{remark}

\paragraph{Explicit Empirical Bernstein Time-Uniform Concentration.}
The concentration in Theorem~\ref{thm:main} is an implicit one and it may not be obvious if the induced implicit confidence bound is tight orderwise. 
It would be easy to show that it implies an asymptotic concentration, but instead we prefer to directly derive a new empirical Bernstein time-uniform concentration from it.
Our theorem below shows that the bound of Theorem~\ref{thm:main} indeed induces an orderwise tight empirical Bernstein-type inequality.
\begin{theorem}
  \label{thm:approx_inversion}
  Let $\delta \in (0,1)$. Assume $ X_1, X_2, \dots$ a sequence of random variables such that for each $i$ we have $0 \leq X_i \leq 1$ and
  \[
  \E[X_i| X_1, \dots, X_{i-1}]=\mu~.
  \]
  Denote by $\hat{\mu}_i = \frac{1}{i}\sum_{j=1}^i X_j$, $V_i = \sum_{j=1}^i (X_j - \hat{\mu}_i)^2$, $R_i =  \ln\frac{\sqrt{\pi}\Gamma(i+1)}{\delta\Gamma(i + \frac 1 2)}$, and
  \[
  \epsilon_i = \frac{4/3 i R_i +\sqrt{16/9 i^2 R_i^2+8 V_i R_i(i^2-2 i R_i)}}{2 i^2-4i R_i}~.
  \]
  Then, with probability at least $1-\delta$ uniformly for all $t$ such that $t> 2 R_t$, we have
  \[
  \max_{i=1,\dots,t} \hat{\mu}_i - \epsilon_i \leq \mu \leq \min_{i=1,\dots,t} \hat{\mu}_i + \epsilon_i~.
  \]
\end{theorem}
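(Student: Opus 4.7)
The plan is to convert the implicit concentration of Theorem~\ref{thm:main} into the claimed empirical-Bernstein inequality by applying a Bennett-style pointwise lower bound to $\ln(1+y)$, summing over $i$, and solving a quadratic in the deviation $\epsilon := \hat\mu_t - \mu$.

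First, by Theorem~\ref{thm:main} (and its worst-case bound $R_t \leq \ln(\sqrt{\pi}\Gamma(t+1)/\Gamma(t+1/2))$), with probability at least $1 - \delta$ and uniformly in $t$, for every $\beta$ in the feasible set,
\[
\sum_{i=1}^t \ln(1 + \beta(X_i - \mu)) \;\leq\; R_t + \ln(1/\delta) \;\leq\; R_t^{\,\prime} := \ln\!\frac{\sqrt{\pi}\,\Gamma(t+1)}{\delta\,\Gamma(t+1/2)},
\]
which is precisely the $R_i$ of the present statement when $i = t$. This is the only probabilistic ingredient; everything else is a deterministic inversion.

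Next, I would apply a Bennett/Bernstein-type lower bound of the form $\ln(1+y) \geq y - y^2/(2(1 - c|y|))$ valid on the appropriate range. Substituting $y = \beta(X_i - \mu)$, using $|X_i - \mu| \leq 1$, summing, and invoking the identity $\sum_{i=1}^t (X_i - \mu)^2 = V_t + t\epsilon^2$ yield
\[
\beta\, t\epsilon \;-\; \frac{\beta^2\,(V_t + t\epsilon^2)}{2(1 - c\beta)} \;\leq\; R_t^{\,\prime} .
\]
Choosing $\beta$ to maximize the left-hand side (in closed form, a value proportional to $t\epsilon/(V_t + t\epsilon^2 + \text{lower order})$) and clearing denominators delivers the quadratic inequality
\[
i(i - 2R_i)\,\epsilon^2 \;-\; \tfrac{4}{3}\,i\,R_i\,\epsilon \;-\; 2\, V_i\, R_i \;\leq\; 0 .
\]
Under the hypothesis $t > 2R_t$, the leading coefficient is positive, so the quadratic formula produces the stated $\epsilon_i$ as its positive root. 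The symmetry $\beta \mapsto -\beta$ immediately bounds $\mu - \hat\mu_t$ by the same expression, giving $|\hat\mu_t - \mu| \leq \epsilon_t$.

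Finally, the time-uniform nature of Theorem~\ref{thm:main} ensures that this deterministic consequence holds for \emph{every} $i \leq t$ on the same high-probability event, and so $\max_{i \leq t}(\hat\mu_i - \epsilon_i) \leq \mu \leq \min_{i \leq t}(\hat\mu_i + \epsilon_i)$.

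The main obstacle is the precise Bernstein-type inequality in the second step: the naive pointwise bound $\ln(1+y) \geq y - y^2/(2(1-|y|))$, valid for $|y| < 1$, yields a weaker constant (namely $2$ instead of $\tfrac{4}{3}$) after optimization over $\beta$. Obtaining the stated $\tfrac{4}{3}$ coefficient requires a tighter inequality with a $1 - |y|/3$ denominator, which fails pointwise for $y < 0$ and must therefore be applied either on a restricted range of $\beta$ or after decomposing the sum according to the sign of $X_i - \mu$. Getting the constants exactly right is the main technical work.
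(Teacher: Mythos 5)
Your probabilistic step and overall inversion strategy match the paper's: invoke Theorem~\ref{thm:main} with the worst-case regret bound, lower bound the optimal log wealth over $\beta\in[-1,1]$ (which is contained in $[-\frac{1}{1-\mu},\frac{1}{\mu}]$) by a quadratic-type expression, use $\sum_{i=1}^t (X_i-\mu)^2 = V_t + t\epsilon_t^2$, and solve the resulting quadratic in the deviation; the target quadratic $t(t-2R_t)\epsilon^2 - \tfrac{4}{3}tR_t\epsilon - 2V_tR_t\le 0$ you write down is indeed the one that yields the stated $\epsilon_i$. However, you leave the decisive step unproved: the derivation of the $\tfrac43$ coefficient. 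As you yourself note, the naive self-normalized bound $\ln(1+y)\ge y - y^2/(2(1-|y|))$ gives constant $2$ rather than $\tfrac43$, and your proposed repairs (a $1-|y|/3$ denominator on a restricted range, or splitting the sum by the sign of $X_i-\mu$) are not carried out and do not obviously produce the claimed constant. Since the theorem statement \emph{is} the formula for $\epsilon_i$ with these constants, this is a genuine gap, not a detail.

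The paper closes this gap with a different pointwise bound and a different order of operations. It uses the symmetrized inequality of \citet{FanGL15} in the form \eqref{eq:fan}, $\ln(1+\beta x)\ge \beta x + (\ln(1-|\beta|)+|\beta|)\,x^2$ for $|x|\le1$, $|\beta|\le1$: crucially the coefficient of the quadratic term depends only on $\beta$, not on $y=\beta x$, so after summing one gets exactly $-\epsilon_t\beta t + (\ln(1-|\beta|)+|\beta|)(V_t+\epsilon_t^2 t)$. This expression is then maximized over $\beta\in[-1,1]$ \emph{exactly} (Lemma~\ref{lemma:max_approx_wealth}), giving $B\,\psi(A/B)$ with $\psi(u)=|u|-\ln(1+|u|)$, and only at that point is an approximation introduced, via the elementary inequality $\ln(1+u)\le u\cdot\frac{6+u}{6+4u}$, which yields $\psi(u)\ge \frac{u^2}{\frac43|u|+2}$ and hence the lower bound $\frac{\epsilon_t^2 t^2}{\frac43|\epsilon_t|t + 2V_t + 2\epsilon_t^2 t}$ on the maximal log wealth. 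The $\tfrac43$ thus comes from a sharp rational lower bound on $\psi$ after exact optimization, not from a modified denominator inside the pointwise Bernstein bound. If you adopt \eqref{eq:fan} and Lemma~\ref{lemma:max_approx_wealth} in place of your second step, the rest of your argument (positivity of the leading coefficient under $t>2R_t$, the quadratic formula, symmetry in the sign of $\epsilon_t$, and time uniformity from Theorem~\ref{thm:main}) goes through as you describe.
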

We defer the proof to Section~\ref{sec:proof_thm_approx}

For $t$ big enough the deviation is roughly $\frac{4/3 \ln(\sqrt{t}/\delta)}{t}+\frac{\sqrt{2 V_t \ln(\sqrt{t}/\delta)}}{t}$. Similar inequalities appears in \citet{AudibertMC09,MaurerP09}.

From the proof of Theorem~\ref{thm:approx_inversion} it should be clear that the resulting confidence sequences will match the exact ones when $t$ goes to infinity, while it will be loose for small samples sizes. 
In particular, it is easy to see that concentration is vacuous when the sample size is small. This is due to the fact that we use an approximation to the logarithm that is correct when the confidence sequences shrink to 0. 
However, as we have shown in Theorem~\ref{thm:never_vacuous}, the confidence sequence implied by Theorem~\ref{thm:main} is never vacuous.

\subsection{Computationally Efficient Version}
\label{sec:fast}

One downside of Algorithm~\ref{alg:confidence_intervals} is that the per-time-step time complexity is $\Omega(t)$.
While existing confidence bounds with empirical variance can achieve $O(1)$ per-time-step complexity, they are usually not as tight as PRECiSE-CO96 in small sample regime.
Could we obtain a confidence bound that is both tight and computationally efficient?

We answer this question affirmatively by proposing a computationally efficient version called
PRECiSE-A-CO96 (PRECiSE with Approximation using \citet{CoverO96}) described in Algorithm~\ref{alg:confidence_intervals_a}.
The main idea starts from the fact that the first and second moments can be computed incrementally.
If we could compute a tight lower bound on the log wealth $\max_{\beta\in[-1/(1-m),1/m]} \sum_{i=1}^t \ln(1 + \beta(X_i - m))$ as a function of those moments in a closed form, then we will not have to pay time complexity linear in $t$.
We leverage the tight inequality by~\citet{FanGL15} to obtain a closed-form lower bound on the maximum log wealth: $G^\ell_t(\beta,m)$ for computing the lower confidence bound $\ell_t$ and $G^u_t(\beta,m)$ for computing upper confidence bound $u_t$, both defined in Algorithm~\ref{alg:confidence_intervals_a}.
These lower approximations are maximized at the optimal betting of $\tilde\beta_\ell(m)$ and  $\tilde\beta_u(m)$ respectively for which we have a closed form as well.
This alone, however, can be loose for small $t$.
Fortunately, Proposition~\ref{prop:kl} says that the KL divergence is a valid lower bound of the maximum log wealth.
We can therefore take the maximum of these two approximations as a tight lower bound to the maximum log wealth.
Note that the algorithm uses binary search, which results in $O(\ln(1/\text{precision}))$ per-time-step time complexity if one updates $\hat\mu_t$ and $V_t$ incrementally.

\begin{algorithm}[t]
\begin{algorithmic}[1]
\STATE{\textbf{Input:} $\delta \in (0,1)$, random variables $X_1, X_2, \dots$ in $[0,1]$}
\STATE{$\ell_0=0, u_0=1$}
\FOR{$t=1,2,\dots$}
\STATE{$R_t= \ln\frac{\sqrt{\pi}\Gamma(t+1)}{\Gamma(t + \frac 1 2)}$}
\STATE{$\hat{\mu}_t=\frac{1}{t} \sum_{i=1}^t X_t$ and $\barV_t = \frac1t \sum_{i=1}^{t}(X_t - \hat{\mu}_t)^2$ (use online updates)}
\STATE{Let $G^\ell_t(\beta, m) := \beta(\hat{\mu}_t - m) - (-\ln(1-\beta m) -\beta m)\fr{(\barV_t + (\hat{\mu}_t - m)^2)}{m^2}$ and $\til\beta_\ell(m) = \frac{\hat{\mu}_t - m}{m(\hat{\mu}_t - m) + \barV_t + (\hat{\mu}_t-m)^2}  $}
\STATE {Compute $m_\ell = \min\cbr{m \in [0,\hat{\mu}_t]: \max\{G^\ell_t(\til\beta_\ell(m), m), \kl(\hat{\mu}_t, m)\} \le \fr1t \ln(e^{R_t}/\dt)}$ with binary search}
\STATE {$\ell_t = \max\cbr{m_\ell, \ell_{t-1}}$}
\STATE {Let $G^u_t(\beta, m) = -\beta(m-\hat{\mu}_t) - (-\ln(1+(1-m)\beta) + (1-m)\beta) \cd \fr{\barV_t+(m-\hat{\mu}_t)^2}{(1-m)^2} $ and $\til\beta_u(m) = \frac{-(m-\hat{\mu}_t)}{(1-m)(m-\hat{\mu}_t) + \barV_t + (m - \hat{\mu}_t)^2} $}
\STATE {Compute $m_u = \max\cbr{m \in [\hat{\mu}_t, 1]: \max\{G^u_t(\til\beta_u(m), m), \kl(\hat{\mu}_t,m)    \}\le \fr1t \ln(e^{R_t}/\dt)}$ with binary search}
\STATE {$u_t = \min\cbr{m_u, u_{t-1}}$}
\STATE {Output $[\ell_t, u_t]$}
\ENDFOR
\end{algorithmic}
\caption{PRECiSE-A-CO96: Portfolio REgret for Confidence SEquences with Approximation using \citet{CoverO96}}
\label{alg:confidence_intervals_a}
\end{algorithm}

The following theorem shows that Algorithm~\ref{alg:confidence_intervals_a} computes a valid confidence sequence. The proof is in Section~\ref{sec:proof_precise-a}.
\begin{theorem}
\label{thm:precise-a}
Let $\delta \in (0,1)$. Assume $ X_1, X_2, \dots$ a sequence of random variables such that for each $i$ we have $0 \leq X_i \leq 1$ and
\[
\E[X_i| X_1, \dots, X_{i-1}]=\mu~.
\]
  With the notation in Algorithm~\ref{alg:confidence_intervals_a}, 
  let $G_t(m) = \onec{\hat{\mu}_t \ge m}  G^\ell_{t}(\til\beta^\ell_{t}(m),m)  + \onec{\hat{\mu}_t < m}  G^u_t(\til\beta^u_{t}(m),m)$. Then, with probability at least $1-\delta$, we have
  \begin{align*}
    \PP\del{\max_t ~ t\cdot \max\{ G_t(\mu), \kl(\hat{\mu}_t, m)\} - R_t \ge \ln(1/\dt) } \le \delta~.
  \end{align*}
  Furthermore, for every $t\ge1$, the confidence set for time $t$
  \begin{align*}
    \cbr{m\in \RR: t \cdot \max\{G_t(m), \kl(\hat{\mu}_t, m)\} - R_t \ge \ln(1/\delta) } 
  \end{align*}
  is an interval.
\end{theorem}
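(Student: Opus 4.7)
The plan is to prove the two claims of the theorem separately. For validity, I would apply Theorem~\ref{thm:main} with its deterministic upper bound $R_t = \ln\frac{\sqrt\pi\Gamma(t+1)}{\Gamma(t+1/2)}$, so it suffices to show that almost surely
\[
t\cdot \max\{G_t(\mu),\kl(\hmu_t,\mu)\} \;\le\; \max_{\beta\in[-1/(1-\mu),\,1/\mu]}\sum_{i=1}^t \ln(1+\beta(X_i-\mu)).
\]
The $\kl$ piece is immediate from Proposition~\ref{prop:kl}. For the $G_t$ piece, invoke the Fan--Grama--Liu inequality \citep{FanGL15} in the form $\ln(1+\beta y)\ge \beta y - \psi(\beta m)\,y^2/m^2$, valid for $\beta\in[0,1/m)$ and $y\ge -m$ with $\psi(x):=-\ln(1-x)-x$; substituting $y=X_i-m$, summing, and using $\sum_i(X_i-m)^2 = t(\barV_t+(\hmu_t-m)^2)$ yields $\sum_i \ln(1+\beta(X_i-m)) \ge t\,G^\ell_t(\beta,m)$. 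A one-line calculation shows that $\til\beta_\ell(m)$ is the interior maximizer of $G^\ell_t(\cdot,m)$ and lies in $[0,1/m)$, so specializing to $m=\mu$, $\beta=\til\beta_\ell(\mu)$ gives the bound when $\hmu_t\ge\mu$; the case $\hmu_t<\mu$ follows symmetrically from $G^u_t$ via $\beta\mapsto -\beta$. Containment of events with Theorem~\ref{thm:main} then yields the probability bound.

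For the interval claim (reading the set as a sublevel set, which is what matches the algorithm's construction), I would show that $G_t(m)$ is unimodal on $[0,1]$ with minimum $0$ at $m=\hmu_t$; since $\kl(\hmu_t,m)$ is convex in $m$ with the same minimizer, the pointwise maximum is quasiconvex and so its sublevel sets are intervals. On $(0,\hmu_t]$, plugging $\til\beta_\ell(m) = (\hmu_t-m)/(\hmu_t(\hmu_t-m)+\barV_t)$ into $G^\ell_t$ (equivalently, using the convex conjugate $\psi^\ast(y)=y-\ln(1+y)$) gives the closed form
\[
G_t(m) \;=\; \frac{1}{m^2}\bigl[\, m(\hmu_t-m) - E\ln(D/E)\,\bigr],\qquad D:=\hmu_t(\hmu_t-m)+\barV_t,\quad E:=(\hmu_t-m)^2+\barV_t.
\]
By the envelope theorem, $dG_t/dm\le 0$ is equivalent, after multiplying by $m^3 D>0$, to
\[
2D^2\ln(D/E)\ \le\ m(\hmu_t-m)\bigl[\,m\hmu_t+2D\,\bigr].
\]
Setting $u:=m(\hmu_t-m)/E$ and $v:=m/(\hmu_t-m)$, this becomes $2(1+u)^2\ln(1+u)\le 2u+3u^2+u^2 v$; since $\barV_t\ge 0$ forces $E\ge(\hmu_t-m)^2$ and hence $v\ge u$, it suffices to prove the single-variable inequality $h(u):=u^3+3u^2+2u-2(1+u)^2\ln(1+u)\ge 0$ for $u\ge 0$, which follows immediately from $h(0)=h'(0)=h''(0)=0$ together with $h'''(u)=6-4/(1+u)\ge 2>0$. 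Monotonicity on $[\hmu_t,1)$ is the mirror image and can be read off by the symmetry $X_i\mapsto 1-X_i$, $m\mapsto 1-m$, which swaps $G^u_t$ with $G^\ell_t$.

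The main obstacle is the interval claim: the unimodality of $G_t$ is not apparent from its piecewise definition, because for a fixed $\beta\ge 0$ the function $G^\ell_t(\beta,\cdot)$ is not monotone in $m$, and $\til\beta_\ell(\cdot)$ does not on its own obviously enforce the right behaviour. The two reductions---first using $\psi^\ast$ to eliminate $\beta$ and produce a clean one-line closed form, and then using $\barV_t\ge 0$ (equivalently $v\ge u$) to collapse the two-parameter problem to the univariate $h(u)\ge 0$---are the technical heart of the argument. If the $v\ge u$ reduction turned out to be too lossy in some regime, a fallback would be to bound $\ln(D/E)$ by the logarithmic-mean inequality $\ln(D/E)\le (D-E)/\sqrt{DE}$ and close the gap directly, but the $v\ge u$ route appears to be the cleanest path to a one-line scalar inequality.
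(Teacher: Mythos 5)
Your validity argument is the same as the paper's: apply the Fan--Grama--Liu bound separately on $\beta\ge 0$ (giving $G^\ell_t$) and $\beta\le 0$ (giving $G^u_t$), note $\til\beta_\ell(\mu)\in[0,1/\mu]$ and $\til\beta_u(\mu)\in[-1/(1-\mu),0]$, use Proposition~\ref{prop:kl} for the KL piece, and absorb everything into the event of Theorem~\ref{thm:main} with the deterministic bound $R_t=\ln\frac{\sqrt{\pi}\Gamma(t+1)}{\Gamma(t+1/2)}$ (and yes, reading the confidence set as a sublevel set is the intended reading). For the interval claim you take a genuinely different route, and it is correct: plugging the maximizer into $G^\ell_t$ does give $G_t(m)=\frac{1}{m^2}\sbr{m(\hat{\mu}_t-m)-E\ln(D/E)}$ with $D=E+m(\hat{\mu}_t-m)$, your reduction of $G_t'(m)\le 0$ to $2(1+u)^2\ln(1+u)\le 2u+3u^2+u^2v$ checks out (the two forms agree using $m+(\hat{\mu}_t-m)=\hat{\mu}_t$), the substitution $v\ge u$ (from $\barV_t\ge 0$) is in the right direction, and $h(u)=u^3+3u^2+2u-2(1+u)^2\ln(1+u)\ge0$ follows exactly as you say from $h(0)=h'(0)=h''(0)=0$ and $h'''(u)=6-\tfrac{4}{1+u}>0$; the symmetry $X_i\mapsto 1-X_i$, $m\mapsto 1-m$ does swap $G^u_t$ with $G^\ell_t$, and the max with the KL term is handled since a pointwise maximum of quasiconvex (or piecewise-monotone with the same split point) functions keeps the interval property, which is also the paper's remark. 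The paper instead proves monotonicity of $F(m)=\max_{\beta\in[0,1/m]}G^\ell_t(\beta,m)$ on $[0,\hat{\mu}_t]$ by the two-step comparison $F(m_1)\ge G^\ell_t(\beta(m_2),m_1)\ge G^\ell_t(\beta(m_2),m_2)=F(m_2)$, whose key ingredient is precisely the claim you dismiss as your reason for avoiding that route: that $m\mapsto G^\ell_t(\beta,m)$ is nonincreasing for fixed $\beta$ on the relevant range. That claim is in fact true (and is what the paper proves, reducing $\partial_m G^\ell_t\le 0$ to $\frac{1}{1-q}+1+\frac{2\ln(1-q)}{q}\ge0$ via $\ln y\ge -\frac{1-y}{\sqrt{y}}$; even accounting for the extra term $\frac{2(\hat{\mu}_t-m)(-\ln(1-q)-q)}{m^2}$ that the paper's displayed derivative omits, the same elementary inequality plus AM--GM closes it), so your stated obstacle is not real --- but since your proof never relies on it, this is only a mistaken side remark, not a gap. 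Comparing the two: the paper's argument never evaluates the value function and stays with the two-variable $G^\ell_t$, while yours buys an explicit closed form for $G_t(m)$ and a clean, self-contained scalar inequality at the cost of a heavier differentiation; both are complete.
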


We emphasize that such a trick for computational efficiency is possible \textit{thanks to our regret based construction of the confidence sequence.}
Other betting-based confidence bounds, such as~\citet{Waudby-SmithR21}, cannot be turned into a computationally efficient one because they must run the actual betting algorithm, which necessarily spends $\Omega(t)$ time complexity at time step $t$.
In contrast, our method do not require this; we just need to evaluate the maximum wealth, and any lower approximation of it would result in a valid confidence sequence.
Regarding taking the maximum of our lower approximation $G_t^\ell$ with $\kl(\hat\mu_t, m)$, note that this is done without taking union 
Furthermore, in Algorithm~\ref{alg:confidence_intervals_a} one can combine any finite number of lower approximations of the maximum log wealth as long as they are monotonic when split at $m=\hat{\mu}_t$, without inflating the confidence bound.

\def\tmin{{\min}}
\def\tmax{{\max}}
\begin{remark}
  One can further tighten up the KL-divergence-based lower bound on the maximum log wealth using $X_{t,\tmin} := \min\{X_1, \ldots, X_t \}$ and $X_{t,\tmax} := \max\{X_1, \ldots, X_t\}$.
  That is, the first display in the proof of Proposition~\ref{prop:kl} will lead to a tighter bound since $X$ lies in  $[X_{t,\tmin},X_{t,\tmax}]$ rather than $X\in[0,1]$.

\end{remark}

\section{Law of Iterated Logarithm with Portfolio Algorithms}
\label{sec:lil}

In the previous section, we have used algorithm with an optimal regret to derive time-uniform concentration inequalities. However, does small regret implies good performance?
Universal portfolio is optimal up to the constant $\ln\sqrt{2\pi}$ in the log wealth. So, at least if we compare ourselves with the optimal rebalanced portfolio, we are not losing much. However, a minimax regret is not a guarantee of optimal wealth! In fact, there might exists a better algorithm that, for example, has smaller regret against a smaller class of competitors that might include the optimal one for our specific problem. Indeed, this is exactly the strategy used in \citet[Section 7.2]{JunO19} to obtain a time-uniform concentration with a $\log \log t$ dependency rather than a $\log t$. So, here we explain how to derive the law of iterated logarithm with a portfolio algorithm.

The strategy is very simple: we will just use a different mixture, with more mass around the optimal betting fractions when $t$ goes to infinity, that is around 0. Then, thanks to a novel regret guarantee, we will again approximate the log wealth of the algorithm as the log wealth of the constant betting fraction strategy minus the regret and use Theorem~\ref{thm:ville}.

We will use directly the formulation in terms of betting on a continuous coin, that is
\begin{align}\label{eq:wealth-lil}
    \Wealth_t(m)
    = \int_{-1}^{1} \prod_{i=1}^T (1+ \beta (X_i-m)) d F(\beta),    
\end{align}
where $F$ is defined similarly to the mixture used in \citet[Example 3]{Robbins70}:
\[
F(\beta)
= \frac{1}{|\beta| h(\frac{1}{|\beta|})}, \ |\beta|\leq 1~.
\]
In particular, we will use $h(x) = \frac{2 }{\ln \ln c} \ln \frac{c}{x} (\ln \ln \frac{c}{x})^2$ for $c= 6.6 e$.
The choice of $c$ assures that $h(x)$ is decreasing in $[0,1]$. 
We call the resulting algorithm as PRECiSE-R70 (PRECiSE using \citet{Robbins70}), which is described in Algorithm~\ref{alg:confidence_intervals_lil}.

\begin{algorithm}[t]
\begin{algorithmic}[1]
\STATE{\textbf{Input:} $\delta \in (0,1)$, random variables $X_1, X_2 \dots$ in $[0,1]$}
\STATE{$\ell_0=0, u_0=1$}
\FOR{$t=1,2,\dots$}
\STATE{$\hat{\mu}_t=\frac{1}{t} \sum_{i=1}^t X_t$}
\STATE{$\Wealth^*_t(m)=\max_{\beta \in [-1,1]} \ \prod_{i=1}^t (1+\beta (X_i-m))$}
\STATE{$\beta^*_t(m)=\argmax_{\beta \in [-1,1]} \ \prod_{i=1}^t (1+\beta (X_i-m))$}
\STATE{$q_t(m) = \min_{i=1,\dots,t} \ (X_i-m)\sign(\beta^*_t)$}
\STATE{$V_t(m) = \sum_{i=1}^{t}(X_i - m)^2$}
\STATE{$\tilde{\Delta}_t(m):=\begin{cases}\frac{1+\min(q_t(m)\beta^*_t(m),0)}{\sqrt{V_t(m)}}, & |\beta^*_t(m)|<1\\
0, & |\beta^*_t(m)|=1
\end{cases}$}
\STATE{$\Delta_t(m)=|\beta^*_t(m)|-\max(|\beta^*_t(m)|-\tilde{\Delta}_t(m),0)$}
\STATE{$R_t(m) = \min\left( \frac{1-1/\Wealth^*_t(m)}{\ln \Wealth^*_t(m)} |\beta^*_t(m)|, \exp\left(-\frac{\Delta^2_t(m)}{2(1+\min(q_t(m)\beta^*_t(m),0))^2 V_t(m)}\right) \Delta_t(m)\right) F(\beta^*_t(m))$}
\STATE {Compute $m_\ell = \min\left\{m \in [0,\hat{\mu}_t]: \Wealth^*_t(m)\le \frac{R_t(m)}{\delta}\right\}$ with binary search}
\STATE {$\ell_t = \max\cbr{m_\ell, \ell_{t-1}}$}
\STATE {Compute $m_u = \min\left\{m \in [\hat{\mu}_t,1]: \Wealth^*_t(m)\le \frac{R_t(m)}{\delta}\right\}$ with binary search}
\STATE {$u_t = \min\cbr{m_u, u_{t-1}}$}
\STATE {Output $[\ell_t, u_t]$}
\ENDFOR
\end{algorithmic}
\caption{PRECiSE-R70: Portfolio REgret for Confidence SEquences using \citet{Robbins70}}
\label{alg:confidence_intervals_lil}
\end{algorithm}

Note that here we restrict the betting fractions to $[-1,1]$ for simplicity. 
One could consider the entire interval $[-\frac{1}{1-m},\frac{1}{m}]$ by scaling the prior to fit this interval, but it seems that the final rate would depend on $\frac{1}{\mu}$ and $\frac{1}{1-\mu}$ that can be arbitrarily big.
Alternatively, one take a mixture of the prior above (supported on [-1,1]) and the uniform distribution over $[-\frac{1}{1-m},\frac{1}{m}]$ as we discuss later in Remark~\ref{rem:mixing}.

The following theorem shows the guarantee. The proof is in Section~\ref{sec:proof_lil}.
\begin{theorem}
\label{thm:lil}
The regret of the strategy~\eqref{eq:wealth-lil} is bounded by $R_t(m)$ defined in Algorithm~\ref{alg:confidence_intervals_lil}. 
Furthermore, let $\delta \in (0,1)$ and assume $ X_1, X_2, \dots$ to be a sequence of random variables such that for each $i$ we have $0 \leq X_i \leq 1$ almost surely and
\[
\E[X_i| X_1, \dots, X_{i-1}]=\mu~.
\]
Run Algorithm~\ref{alg:confidence_intervals_lil} on $X_1, X_2, \dots$.
Then, with probability at least $1-\delta$ we have that $\mu \in [\ell_t, u_t], \forall t\ge 1$.
Moreover, we have that $u_t-\hat{\mu}_t$ and $\hat{\mu}_t-\ell_t$ are upper bounded by
\[
\frac{1}{t}\sqrt{\max\left(\frac{U_t}{1-2U_t/t},1\right) 2 V_t} + \frac{\frac43 U_t}{t-2U_t}+\frac{24}{t} \ln \frac{7}{6\delta},
\]
where $V_t=\sum_{i=1}^t (X_i-\hat{\mu}_t)^2$, $U_t:= -\fr12 W_{-1} \left(-\frac{2}{\del{\frac{20}{3\delta} \cdot h\left(\frac{1}{2 + \sqrt{V_t/2}}\right)}^2}\right)
=O(\ln\frac{ \ln^3 t}{\delta})$, and $W_{-1}$ is the negative branch of the Lambert function.

Furthermore, $u_t-\hat{\mu}_t$ and $\hat{\mu}_t-\ell_t$ are  upper bounded by $\frac{\sqrt{2 V_t \ln \frac{\ln V_t}{\delta} }}{t}+ o\left(\sqrt{\frac{\ln \ln t}{t}}\right)$ as $t\to \infty$.
\end{theorem}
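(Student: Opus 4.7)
The plan has four parts: (i) prove a pathwise regret bound $R_t(m)$ for the $F$-weighted portfolio of~\eqref{eq:wealth-lil}; (ii) verify the confidence sequence via Ville's inequality (Theorem~\ref{thm:ville}); (iii) invert the implicit bound using Proposition~\ref{prop:kl} plus an empirical-Bernstein step; and (iv) analyze the fixed point defining $U_t$ to obtain the iterated-logarithm rate. For~(i), I would lower bound $\Wealth_t(m)/\Wealth^*_t(m)$ by restricting the mixture in~\eqref{eq:wealth-lil} to a one-sided neighborhood of $\beta^*_t(m)$ lying on the segment between $0$ and $\beta^*_t(m)$; staying on this segment keeps us inside $[-1,1]$ and, because $h$ is monotone on the relevant range by the choice of $c$, permits the estimate $F(\beta)\ge F(\beta^*_t(m))$. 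Two complementary estimates then feed the $\min$ inside $R_t(m)$. The Hoeffding-type estimate exploits concavity of $\lambda\mapsto\sum_i\ln(1+\lambda\beta^*(X_i-m))$ to get $\Wealth_t(\lambda\beta^*)\ge(\Wealth^*_t(m))^\lambda$ for $\lambda\in[0,1]$, so the change of variables $\beta=\lambda\beta^*$ yields $\int_0^{\beta^*}\Wealth_t(\beta)\,d\beta\ge|\beta^*|(\Wealth^*_t(m)-1)/\ln\Wealth^*_t(m)$, producing the first term. The Gaussian-type estimate expands $\ln\Wealth_t(\beta^*+h)$ to second order around $h=0$, controlling the Hessian $-\sum_i(X_i-m)^2/(1+\beta(X_i-m))^2$ by $-V_t(m)/(1+\min(q_t\beta^*,0))^2$ over $[\beta^*-\til\Delta_t,\beta^*+\til\Delta_t]$, and integrating over a length-$\Delta_t$ window produces the second term. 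Multiplying through by $F(\beta^*_t(m))$ yields $R_t(m)$.

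For~(ii), $\Wealth_t(\mu)$ is a non-negative martingale with $\Wealth_0(\mu)=1$, since $\E[1+\beta(X_i-\mu)\mid X_1,\dots,X_{i-1}]=1$ pointwise in $\beta$ and Fubini lifts this to the mixture. Theorem~\ref{thm:ville} then gives $\sup_t\Wealth_t(\mu)\le 1/\delta$ with probability at least $1-\delta$, and combined with~(i) this yields $\Wealth^*_t(\mu)\le 1/(\delta R_t(\mu))$ uniformly in $t$, which is exactly the acceptance condition encoded in Algorithm~\ref{alg:confidence_intervals_lil}; Theorem~\ref{thm:f_weighted_intervals} ensures that the acceptance set is an interval at every $t$. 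For~(iii), I would apply Proposition~\ref{prop:kl} to replace $\ln\Wealth^*_t(m)$ by $t\cdot\kl(\hat{\mu}_t,m)$, then invoke the elementary Taylor-in-$m$ bound $\kl(\hat{\mu}_t,m)\ge(\hat{\mu}_t-m)^2/(2V_t/t+\tfrac{2}{3}|\hat{\mu}_t-m|)$ to convert the implicit condition $\ln\Wealth^*_t(m)\le\ln(1/(\delta R_t(m)))$ into a quadratic inequality in $|\hat{\mu}_t-m|$. Solving the quadratic produces the empirical-Bernstein form with $U_t$ playing the role of $\ln(1/(\delta R_t(m)))$, and the additive $(24/t)\ln(7/(6\delta))$ absorbs an affine correction coming from the $F(\beta^*)$ factor and the boundary terms of~(i).

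For~(iv), plugging a near-optimal $|\beta^*|\sim 1/(2+\sqrt{V_t/2})$ into the bound from~(i) shows that $\ln(1/R_t(m))$ is dominated by $-\ln F(\beta^*)\sim\ln h(1/(2+\sqrt{V_t/2}))$ up to a $\tfrac{1}{2}\ln U_t+O(\ln(1/\delta))$ slack coming from~(iii); this gives a transcendental equation for $U_t$ whose solution on the relevant branch admits a closed form via the identity $we^w=y$ characterizing $W_{-1}$. Since $h(x)\sim\ln(1/x)(\ln\ln(1/x))^2$, we get $U_t=O(\ln(\ln^3 t/\delta))$, and substituting into the finite-sample width and using $V_t/t\to\Var(X_1)$ in probability recovers the asymptotic rate $\sqrt{2V_t\ln(\ln V_t/\delta)}/t+o(\sqrt{\ln\ln t/t})$. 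The hardest step is~(i): sharpening the constants requires a careful treatment of edge cases when $|\beta^*|=1$ (the Gaussian term is switched off via $\til\Delta_t=0$), when $|\beta^*|<\til\Delta_t$ (the window would cross $0$ or exit $[-1,1]$, forcing $\Delta_t=|\beta^*|$), and when a two-sided neighborhood would otherwise leave the monotone branch of $F$; these considerations force the asymmetric, toward-zero choice of neighborhood and the capped definitions of $\til\Delta_t$ and $\Delta_t$.
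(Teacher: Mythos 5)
Your parts (i) and (ii) essentially reproduce the paper's argument: the two wealth lower bounds (the log-concavity/Hoeffding-type estimate giving $|\beta^*_t|\frac{\Wealth^*_t-1}{\ln \Wealth^*_t}$ and the second-order Taylor estimate over a toward-zero window of length $\Delta_t$, with the edge cases $|\beta^*_t|=1$ and $\Delta_t=|\beta^*_t|$) are exactly how the paper derives $R_t(m)$, and coverage then follows from Ville's inequality applied to the mixture wealth at $m=\mu$. One small correction there: Theorem~\ref{thm:f_weighted_intervals} only makes the \emph{exact} mixture-wealth sublevel sets intervals; the algorithm's acceptance condition involves $\Wealth^*_t(m)$ and the $m$-dependent $R_t(m)$, and the paper explicitly concedes this set need not be an interval, arguing only that the output interval is still valid.

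The genuine gap is in your step (iii), which is where your route diverges from the paper and breaks. First, Proposition~\ref{prop:kl} lower bounds the wealth maximized over $\beta\in[-\frac{1}{1-m},\frac{1}{m}]$, with maximizer $\frac{\hat{\mu}_t-m}{m(1-m)}$; here the mixture and $\Wealth^*_t(m)$ are restricted to $\beta\in[-1,1]$, so the KL bound is simply not a valid lower bound on $\ln\Wealth^*_t(m)$ in the regime that matters (when $|\hat{\mu}_t-m|>m(1-m)$). Second, the inequality you invoke, $\kl(\hat{\mu}_t,m)\ge(\hat{\mu}_t-m)^2/(2V_t/t+\tfrac{2}{3}|\hat{\mu}_t-m|)$, cannot hold: the Bernoulli KL does not depend on the empirical variance $V_t$ (take all $X_i=0.5$, so $V_t=0$, $m=0.4$: the left side is $\approx 0.02$ while the right side is $0.15$). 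More structurally, any KL-based inversion would yield a width governed by $\hat{\mu}_t(1-\hat{\mu}_t)$ or $m(1-m)$, not the empirical second moment, so it cannot produce the stated empirical-Bernstein width. The paper instead lower bounds the $[-1,1]$-restricted maximum log wealth via the Fan--Grama--Liu inequality \eqref{eq:fan} and Lemma~\ref{lemma:max_approx_wealth}, giving $\ln\Wealth^*_t(m)\ge \theta_t^2/(\tfrac43|\theta_t|+2V_t(m))$ with $\theta_t=\sum_i(X_i-m)$; it then needs two-sided bounds on $\beta^*_t(m)$ from the stationarity condition (Lemma~\ref{lemma:beta-invert}), a case split on $V_t\ge 5|\theta_t|$ versus $V_t<5|\theta_t|$ (the latter, via the first regret estimate, is the sole source of the $\frac{24}{t}\ln\frac{7}{6\delta}$ term --- it is not an "affine correction" from $F(\beta^*)$), a lower bound $\Delta_t\ge\frac{1-|\beta^*_t|}{\sqrt{V_t}}$, the conversion $V_t(m)=V_t(\hat{\mu}_t)+\theta_t^2/t$, and finally the Lambert-$W$ inversion of the resulting self-referential inequality (Lemma~\ref{lemma:lambert_log_ineq}) to get $U_t$. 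Your step (iv) presupposes these missing estimates (e.g., the asserted $|\beta^*|\sim 1/(2+\sqrt{V_t/2})$), so as written the finite-sample width bound and hence the asymptotic LIL statement are not established.
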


We want to stress that the regret guarantee could be improved: We aimed for a regret guarantee easy to calculate. However, improvements are possible only in the small sample regime, because the confidence sequences we obtain are asymptotically optimal.

A similar guarantee was first obtained by \citet{Balsubramani14} and a variance-oblivious one is also implied by Ville's inequality and \citet[Theorem 12]{McMahanO14}. However, we want to stress the fact that Theorem~\ref{thm:lil} is only an upper bound to the output of Algorithm~\ref{alg:confidence_intervals_lil}. In practice, the confidence sequences are much smaller. Moreover, as far as we know, this is the first finite-time regret upper bound for a $F$-weighted portfolio algorithm with $F$ equal to the one in \citet{Robbins70}.

\begin{remark}[Mixing PRECiSE-CO96 with PRECiSE-R70]
  \label{rem:mixing}
Any convex combination of betting strategies is still a betting strategy. Hence, we can use PRECiSE-CO96 and PRECiSE-R70 at the same time, allocating, for example, one half of the initial wealth to each of them. The total wealth is just the sum of their wealth. 
In fact, this is equivalent to using a prior that is a mixture of the one of PRECiSE-CO96 and the one of PRECiSE-R70.
In this way, we can guarantee both the LIL property, thanks to PRECiSE-R70, and never-vacuous confidence sequences, thanks to PRECiSE-CO96. Of course, the mixing coefficient becomes a hyperparameter and it could be tuned to trade-off the performance at infinity with the one on few samples. 
\end{remark}

\section{Proof of Our Results}
\label{sec:proofs}

\subsection{Proof of Theorem~\ref{thm:main}}
\label{sec:proof_main}

First of all, we will show how to reduce the problem of betting to a continuous coin to the one of portfolio selection with 2 stocks and solve it with the Universal Portfolio algorithm. Then, we show a sufficient condition on the regret upper bound to guarantee that the confidence sets are intervals.
Finally, we will prove a tighter regret for Universal Portfolio that satisfies the above condition.

\paragraph{From Betting on a Continuous Coin to Universal Portfolio}
Define the regret $\Regret_t$ on the log-wealth of an online betting algorithm receiving continuous coins $c_i \in [-m,1-m]$ where $m\in (0,1)$ as
\begin{equation}
\label{eq:log_regret}
\Regret_t := \max_{\beta \in [-\frac{1}{1-m},\frac{1}{m}]} \ \sum_{i=1}^t \ln(1+\beta c_i) - \sum_{i=1}^t \ln(1+\beta_i c_i)~.
\end{equation}
If we knew an upper bound on $\Regret_t$, then it would be immediate to calculate a lower bound on the log wealth of the algorithm.

So, the only thing we need to know is only the regret of the best possible betting algorithm. It turns out this is trivial! In fact, with the following straightforward lemma, we can reduce this problem to the portfolio selection problem with 2 stocks, where an algorithm with the minimax regret is known.
\begin{lemma}
\label{lemma:reduction}
For a continuous coin $c \in [-m, 1-m]$, set two market gains as $w_1=1+\frac{c}{m}$ and $w_2=1-\frac{c}{1-m}$.
Note that $w_1,w_2\geq0$ so they are valid market gains. 
Define $[b,1-b]$ to be the play of a 2-stocks portfolio algorithm, where $0\leq b\leq1$. 
Then, by taking 
\[
\beta = -\frac{1}{1-m} + \left(\frac{1}{1-m}+\frac{1}{m}\right)b \in \left[-\frac{1}{1-m},\frac{1}{m}\right]
\]
as the signed betting fraction, a continuous-coin-betting algorithm on $c$ ensures that the gain in the coin betting problem coincides with the gain in the portfolio selection problem.
\end{lemma}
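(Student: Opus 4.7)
\textbf{Proof plan for Lemma~\ref{lemma:reduction}.} The claim is essentially a direct algebraic identity together with a range check, so the plan is to verify three things in sequence: (a) validity of the constructed market gains, (b) that the affine map $b \mapsto \beta$ sends $[0,1]$ onto $[-\tfrac{1}{1-m},\tfrac{1}{m}]$, and (c) that the per-round multiplicative gain of the portfolio strategy $[b, 1-b]$ on gains $(w_1,w_2)$ equals the multiplicative wealth update $1+\beta c$ of the coin-betting strategy on $c$.

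First I would handle (a): since $c \in [-m, 1-m]$, we have $c/m \geq -1$, hence $w_1 = 1 + c/m \geq 0$, and $c/(1-m) \leq 1$, hence $w_2 = 1 - c/(1-m) \geq 0$. Next, for (b), plugging $b=0$ into the stated formula for $\beta$ gives $-\tfrac{1}{1-m}$, and plugging $b=1$ gives $-\tfrac{1}{1-m} + \tfrac{1}{1-m} + \tfrac{1}{m} = \tfrac{1}{m}$; since $b\mapsto\beta$ is affine, its image on $[0,1]$ is exactly the required interval.

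The crux (though still only arithmetic) is (c). I would compute
\[
b\, w_1 + (1-b)\, w_2 \;=\; b\Bigl(1+\tfrac{c}{m}\Bigr) + (1-b)\Bigl(1-\tfrac{c}{1-m}\Bigr) \;=\; 1 + c\Bigl(\tfrac{b}{m} - \tfrac{1-b}{1-m}\Bigr),
\]
and simplify the bracketed expression to $\tfrac{b-m}{m(1-m)}$. A parallel simplification of the definition of $\beta$ gives $\beta = \tfrac{b-m}{m(1-m)}$, so $b w_1 + (1-b) w_2 = 1 + \beta c$. Iterating this identity over $t$ rounds shows that the portfolio wealth $\prod_{i=1}^t (b_i w_{i,1} + (1-b_i) w_{i,2})$ coincides with the continuous-coin wealth $\prod_{i=1}^t (1+\beta_i c_i)$, as desired.

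I do not expect a real obstacle here; the only thing to be careful about is the sign convention in the definition of $\beta$ and the direction of the affine reparametrization, so I would double-check the two endpoint computations in step (b) and the common denominator in step (c) to make sure no sign is flipped. Once the identity $1+\beta c = b w_1 + (1-b) w_2$ is established, the statement about regret equivalence on log-wealth (needed for the subsequent use of the Cover--Ordentlich regret bound) follows immediately by taking logarithms.
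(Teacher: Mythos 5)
Your proposal is correct and follows essentially the same route as the paper: a direct algebraic verification that $b\,w_1+(1-b)\,w_2 = 1+\beta c$ with $\beta = -\tfrac{1}{1-m}+\bigl(\tfrac{1}{1-m}+\tfrac{1}{m}\bigr)b$, plus the range check that $b\in[0,1]$ maps affinely onto $\bigl[-\tfrac{1}{1-m},\tfrac{1}{m}\bigr]$. The extra checks you include (non-negativity of $w_1,w_2$ and the simplification $\beta=\tfrac{b-m}{m(1-m)}$) are consistent with the paper's statement and add nothing problematic.
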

\begin{proof}
  After $b$ is revealed, the wealth is updated by multiplying the previous wealth by the following:
\begin{align*}
w_1 b + w_2 (1-b)
&= b + b \frac{c}{m} + \left(1-\frac{c}{1-m}\right)(1-b)
= b + b \frac{c}{m} + 1 - b -\frac{c}{1-m} +\frac{c}{1-m} b \\
&= 1 + c \left(-\frac{1}{1-m}+\left(\frac{1}{1-m}+\frac{1}{m}\right)b\right)
= 1+ c \beta,
\end{align*}
where $\beta$ is the signed betting fraction equal to $-\frac{1}{1-m}+(\frac{1}{1-m}+\frac{1}{m})b$.
Given that $b \in [0,1]$, the range of the betting fractions is in $[-\frac{1}{1-m},\frac{1}{m}]$ as we wanted. 
\end{proof}

We can also write $b$ in terms of $\beta$ by $b = (1-m)m \beta + m$.
Note that $\beta=-\frac{1}{1-m}$ corresponds to $b=0$, $\beta=0$ corresponds to $b=m$, and $\beta=\frac{1}{m}$ corresponds to $b=1$. We stress that we do not need to use this transformation in the algorithm.
Instead, it is enough to know that \emph{it exists}.

This reduction allows us to use any portofolio selection algorithm to bet on an asymmetric continuous coin.
Next, we prove a sufficient condition on the regret upper bound to obtain intervals, and then we show a tight empirical upper bound on the regret of Dirichelet(1/2,1/2)-weighted portfolio.

\paragraph{Sufficient Condition on the Regret to Obtain Intervals}

Here, we prove a sufficient condition on the regret upper bound to guarantee that the confidences sequences are intervals.
In particular, the following Lemma implies that any regret upper bound that only depends on $t$ will give rise to a quasiconvex function, which means that the confidence sequences computed by~\eqref{eq:to_be_inverted} are intervals when $R_t$ is independent of $m$.

\begin{lemma}
\label{lemma:intervals}
Let $X_i \in [0,1]$ for $i=1, \dots t$. Define $G(\beta,m):=\sum_{i=1}^t \ln(1+\beta (X_i-m))$ and $H(m):=\max_{\beta \in [-\frac{1}{1-m},\frac{1}{m}]}\ G(\beta,m)$. Define  $\hat{\mu}_t:=\frac{1}{t}\sum_{i=1}^t X_i$. Then, $H(m)$ in nonincreasing in $[0,\hat{\mu}_t)$ and nondecreasing in $(\hat{\mu}_t,1]$. Hence, $H(m)$ is quasiconvex in $[0,1]$.
\end{lemma}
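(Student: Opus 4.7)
The plan is to use a classical envelope-style monotonicity argument, exploiting that $G(\beta,m)$ is concave in $\beta$ with a sign-localized maximizer, and monotone in $m$ with sign controlled by $\beta$. The individual pieces are elementary; the care is all in how the feasibility interval $[-\tfrac{1}{1-m},\tfrac{1}{m}]$ shifts with $m$.

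First I would record two facts. (i) Since $G(\beta,m)=\sum_i\ln(1+\beta(X_i-m))$ is concave in $\beta$ (sum of concave logs) and
\[
\partial_\beta G(0,m)=\sum_{i=1}^t(X_i-m)=t(\hat\mu_t-m),
\]
any feasible maximizer $\beta^*(m)$ lies on the same side of $0$ as this derivative, i.e.\ $\beta^*(m)\geq 0$ for $m\leq\hat\mu_t$ and $\beta^*(m)\leq 0$ for $m\geq\hat\mu_t$. (ii) For any $\beta$ in the interior of the feasibility box,
\[
\partial_m G(\beta,m)=-\beta\sum_{i=1}^t\frac{1}{1+\beta(X_i-m)},
\]
and every denominator is strictly positive by feasibility. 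Hence $m\mapsto G(\beta,m)$ is nonincreasing when $\beta\geq 0$ and nondecreasing when $\beta\leq 0$.

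To conclude that $H$ is nonincreasing on $[0,\hat\mu_t]$, I would take $0\leq m_1<m_2\leq\hat\mu_t$ and set $\beta^*:=\beta^*(m_2)\geq 0$. Since $m_1<m_2$, the positive side of the feasibility box widens: $\beta^*\leq \tfrac{1}{m_2}\leq\tfrac{1}{m_1}$, so $\beta^*$ remains feasible at $m_1$. By optimality at $m_1$ and the monotonicity from (ii),
\[
H(m_1)\geq G(\beta^*,m_1)\geq G(\beta^*,m_2)=H(m_2).
\]
The argument on $(\hat\mu_t,1]$ is entirely symmetric, using $\beta^*(m_1)\leq 0$ and the fact that the negative side of the feasibility box widens as $m$ increases, so that $\beta^*(m_1)\geq -\tfrac{1}{1-m_1}\geq -\tfrac{1}{1-m_2}$ is feasible at $m_2$. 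Quasiconvexity of $H$ on $[0,1]$ then follows because it is nonincreasing to the left of $\hat\mu_t$ and nondecreasing to the right.

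The only real obstacle is bookkeeping around feasibility and existence of $\beta^*(m)$. For $m\in(0,1)$ the feasible interval is compact and $G(\cdot,m)$ is continuous, so a maximizer exists; at the endpoints $m\in\{0,1\}$ the box becomes one-sided unbounded and $H$ may be $+\infty$, which is still consistent with the claimed monotonicity. The one-sided shrinking/widening of the feasibility box --- widening on the positive side as $m$ decreases, widening on the negative side as $m$ increases --- is precisely what legalizes the comparison $G(\beta^*(m_2),m_1)$ in the two-step chain, and is the one place where the exact shape of the constraint on $\beta$ matters.
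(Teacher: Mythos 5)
Your proposal is correct and follows essentially the same argument as the paper's proof: locate the sign of the maximizer $\beta^*(m)$ via the derivative of $G$ at $\beta=0$ and concavity in $\beta$, use the sign-controlled monotonicity of $m\mapsto G(\beta,m)$, and exploit that the relevant side of the feasibility interval only widens, so that $\beta^*$ at the farther point remains feasible at the nearer one, yielding the two-step chain of inequalities. The paper writes out the $m>\hat{\mu}_t$ side and calls the other analogous, whereas you write out the $m<\hat{\mu}_t$ side; this is an immaterial difference.
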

\begin{proof}
Denote by $\beta^*(m)=\argmax_{\beta \in [-\frac{1}{1-m},\frac{1}{m}]}\ G(\beta,m)$.

The derivative of $G$ w.r.t its first argument is
\[
G'(\beta,m)= \sum_{i=1}^t \frac{X_i-m}{1+\beta (X_i -m)}~.
\]

Then, we have that $G'(0,\hat{\mu}_t)=0$. Given that $G(\beta,m)$ is concave in $\beta$, then $G(\beta,\hat{\mu}_t)$ has a maximum w.r.t. the first argument in $\beta=0$ and the value of the function is 0. 

For $m'>\hat{\mu}_t$, $G'(0,m')<0$. 
Since $G(\beta,m')$ is concave in $\beta$, we have $\beta^*(m')<0$. 
In the same way, for $m'<\hat{\mu}_t$ we have $\beta^*(m')>0$. 

Let's start with $m'>\hat{\mu}_t$ and prove that $H$ is nondecreasing, the other side is analogous. Consider $m_1>m_2>\hat{\mu}_t$.
Given that $\beta^*(m_2)<0$, we have 
\[
H(m_2)
=G(\beta^*(m_2),m_2)
\leq G(\beta^*(m_2),m_1)
\leq G(\beta^*(m_1),m_1) = H(m_1),
\]
where the first inequality is due to the fact that $G(\beta,m)$ is nondecreasing in $m$ when $\beta<0$ and the second inequality is due to the fact that the negative part of the interval $[-\frac{1}{1-m_1},\frac{1}{m_1}]$ contains the negative part of the interval $[-\frac{1}{1-m_2},\frac{1}{m_2}]$ and we know the maximum $\beta$ is negative. 
\end{proof}

\paragraph{Data-dependent Regret of Universal Portfolio Selection with 2 Stocks}
Here, we introduce a data-dependent regret upper bound for the universal portfolio algorithm.

\citet{CoverO96} proved that setting the mixture distribution $F$ equal to the Dirichelet(1/2,1/2) distribution gives an upper bound to the regret of
\[
\Regret_T 
\leq \ln\frac{\sqrt{\pi}\Gamma(t+1)}{\Gamma(t + \frac 1 2)},
\]
that is optimal up to constant additive terms.
As we anticipated above, this regret is tight in the case that in each round the market gains are exactly one 0 and one 1. However, for other sequences the regret can be smaller.

So, here we derive an easy-to-calculate data-dependent regret guarantee. The following upper bound is essentially in the proofs in  \citet{CoverO96}.
\begin{theorem}
\label{thm:data_dep_regret}
Denote by $[b^*_t, 1-b^*_t] = \argmax_{\bb \in B} \ \Wealth_t(\bb)$. Then, the regret of the Dirichelet(1/2,1/2)-weighted portofolio algorithm satisfies
\[
\Regret_t 
\leq \max_{0 \leq k\leq t} \ f(b^*_t,k,t),
\]
where $f(b,k,t)$ is defined in \eqref{eq:f}.
\end{theorem}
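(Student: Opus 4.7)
The plan is to expand the wealth into a polynomial in $b$, integrate term-by-term against the Dirichlet$(1/2,1/2)$ density, and then apply the elementary inequality that a weighted average is dominated by the maximum ratio of its terms.

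First, for a sequence of market gain vectors $\bw_1,\dots,\bw_t \in \R_{\geq 0}^2$ and a portfolio $\bb=[b,1-b]$, expand
\[
\prod_{i=1}^t \bigl(b w_{i,1} + (1-b) w_{i,2}\bigr) = \sum_{k=0}^t c_k\, b^k (1-b)^{t-k},
\]
where
\[
c_k = \sum_{\substack{S \subseteq [t]\\|S|=k}}\, \prod_{i \in S} w_{i,1} \prod_{i \notin S} w_{i,2} \geq 0.
\]
This is obtained by expanding each factor and grouping by the size of the index set where the first coordinate is chosen; nonnegativity of $c_k$ is immediate from $w_{i,j} \geq 0$.

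Next, plug this decomposition into the closed form \eqref{eq:closed_form_wealth} and use the fact that with $dF(b) = \frac{db}{\pi\sqrt{b(1-b)}}$ the Beta integral yields
\[
\int_0^1 b^k (1-b)^{t-k}\, \frac{db}{\pi\sqrt{b(1-b)}} = \frac{\Gamma(k+1/2)\,\Gamma(t-k+1/2)}{\pi\,\Gamma(t+1)}.
\]
Hence
\[
\Wealth_t = \sum_{k=0}^t c_k \cdot \frac{\Gamma(k+1/2)\,\Gamma(t-k+1/2)}{\pi\,\Gamma(t+1)},
\qquad
\Wealth_t(\bb^*_t) = \sum_{k=0}^t c_k\, (b^*_t)^k (1-b^*_t)^{t-k}.
\]

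Finally, for any nonnegative numbers $a_k, d_k$ with $d_k>0$ one has $\sum_k a_k / \sum_k d_k \le \max_k a_k/d_k$. Apply this with $a_k = c_k (b^*_t)^k (1-b^*_t)^{t-k}$ and $d_k = c_k\, \Gamma(k+1/2)\Gamma(t-k+1/2)/(\pi\,\Gamma(t+1))$ (dropping indices where $c_k=0$):
\[
\frac{\Wealth_t(\bb^*_t)}{\Wealth_t} \leq \max_{0 \leq k \leq t} \frac{\pi\,\Gamma(t+1)\,(b^*_t)^k (1-b^*_t)^{t-k}}{\Gamma(k+1/2)\,\Gamma(t-k+1/2)} = \max_{0 \leq k \leq t} \exp\bigl(f(b^*_t,k,t)\bigr).
\]
Taking logarithms gives exactly the claimed bound $\Regret_t \leq \max_{0\le k\le t} f(b^*_t,k,t)$.

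The only step requiring care is the polynomial expansion with nonnegative coefficients; everything else is a direct computation (the Beta integral for the Krichevsky--Trofimov prior) together with the one-line ``average $\leq$ max ratio'' inequality. I would not expect any genuine obstacle here; indeed, as the theorem statement notes, this data-dependent bound is essentially already present in the proofs of \citet{CoverO96}, and the present version just refrains from further maximizing over $b^*_t$.
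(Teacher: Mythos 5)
Your proof is correct and takes essentially the same route as the paper: the paper's proof simply invokes Lemma~2 and equation~(64) of \citet{CoverO96}, which are precisely the nonnegative polynomial expansion in $b^k(1-b)^{t-k}$, the Beta-integral evaluation under the Dirichlet$(1/2,1/2)$ prior, and the ``average $\leq$ max ratio'' step that you carry out explicitly. The only difference is that you re-derive the cited facts in a self-contained way rather than quoting them, which is not a different argument.
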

\begin{proof}
From \citet[Lemma~2]{CoverO96}, in the 2 stocks case we have
\[
\frac{\max_{\bb \in B} \ \Wealth_t(\bb)}{\Wealth_t}
\leq \max_{0 \leq k \leq t} \ \frac{(b^*_t)^k (1-b^*_t)^{t-k}}{\int_{0}^1 b^k (1-b)^{t-k} \, d F(b)}~.
\]
Moreover, from equation (64) in \citet{CoverO96}, we have
\[
\int_{0}^1 b^k (1-b)^{t-k} \, d F(b)
= \frac{\Gamma(k+\fr12)\Gamma(t-k+\fr12)}{(\Gamma(\fr12))^2\Gamma(t+1)}~.
\]
Putting all together, we have the stated bound.
\end{proof}

Unfortunately, it can be verified numerically that the bound in Theorem~\ref{thm:data_dep_regret} does not give rise to an interval.
The reason is that $b^*_t$ depends on $m$ and the regret gets smaller when $b^*_t$ is close to 1/2, but at the same time $\Wealth(b^*_t)$ also decreases in this case. Hence, $\Wealth(b^*_t)$ minus the regret upper bound is not guaranteed to be quasi-convex in $m$ and indeed we verified numerically that we can get non-quasi-convex functions.
 
So, we propose the following variation.
For each $t$ we have to check $m$ in a range equal to the confidence set calculated at $t-1$. Assume by induction that the confidence set with $t-1$ samples is an interval $[\ell_{t-1},u_{t-1}]$, we now construct a new interval. The key idea is the following one: as we vary $m$ in $[\ell_{t-1}, \hat{\mu}_t)$, $\beta^*_t$ moves monotonically from a positive value to 0.
We now calculate the worst value of the regret for all the $\beta^*_t$ in this range. This is now a value independent of $m$ and so Lemma~\ref{lemma:intervals} applies.
The same reasoning holds for the interval $m \in (\hat{\mu}_t,u_{t-1}]$.

The following Lemma characterize the local maxima of the function appearing in the regret. 
To facilitate the understanding of the proof, we plot the behavior of the key function $h(b)$ defined in the following lemma for $t=10$ in Figure~\ref{fig:fig_lemma}.

\begin{figure}
    \centering
    \includegraphics[width=0.8\textwidth]{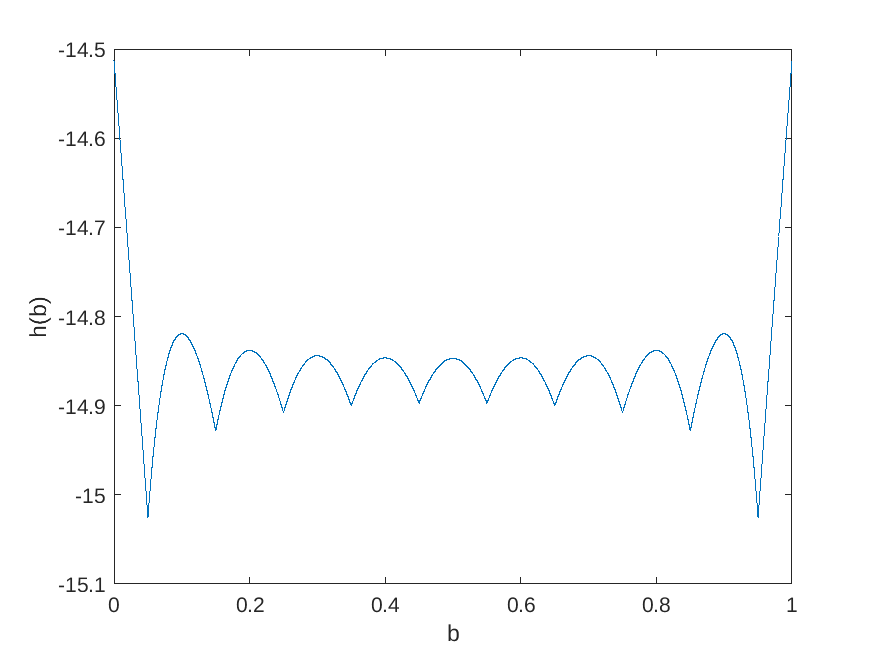}
    \caption{Function $h(b)$ for $t=10$ used in Lemma~\ref{lemma:local_maxima}.}
    \label{fig:fig_lemma}
\end{figure}

\begin{lemma}
\label{lemma:local_maxima}
Let $t\geq 1$ integer and denote by $f(b,k,t)=\ln \frac{b^k (1-b)^{t-k} \Gamma(t+1)(\Gamma(1/2))^2}{\Gamma(k+1/2) \Gamma(t-k+1/2)}$.
Define $h(b) =\max_{k=0, \dots, t} f(b,k,t)$.
Then, for any $0\leq b\leq 1$, the following hold
\begin{itemize}
\item $\{\lceil t b -0.5\rceil, \lfloor t b+0.5\rfloor\}= \argmax_{k=0, \dots, t} f(b,k,t)$.
\item The local maxima of the function $h(b)$  are exactly at $\{i/t\}_{i=0}^t$.
\item $h(i/t)$ is nonincreasing for $i=0,\dots, \lceil (t+1)/2 \rceil-1$ and nondecreasing for $i=t-(\lceil (t+1)/2 \rceil-1), \dots,t$.
\end{itemize}
\end{lemma}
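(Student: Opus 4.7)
All three parts flow from the fact that for fixed $k$ the function $b \mapsto f(b, k, t)$ is strictly concave on $(0,1)$ with unique max at $b = k/t$, while $k \mapsto f(b,k,t)$ is tractable via $\Gamma(x+1) = x\Gamma(x)$. For part (1), I compute
\[
f(b, k+1, t) - f(b, k, t) = \ln \frac{b(t - k - 1/2)}{(1 - b)(k + 1/2)},
\]
which is $\geq 0$ iff $bt \geq k + 1/2$. So $k \mapsto f(b,k,t)$ increases for $k \leq bt - 1/2$ and decreases after, hitting its max at $\{\lceil bt - 1/2\rceil, \lfloor bt + 1/2\rfloor\}$ (tied when $bt$ is a half-integer, a single value otherwise).

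For part (2), Part 1 shows that the integer maximizer is constantly $k$ on the slab $I_k = [(k-1/2)/t, (k+1/2)/t] \cap [0,1]$, so $h(b) = f(b,k,t) = k \ln b + (t-k)\ln(1-b) + c_k$, which is strictly concave with unique maximum at $b = k/t \in I_k$. At an interior break-point $(k+1/2)/t$ the two neighboring pieces tie, but $h$ transitions from strictly decreasing (end of $I_k$) to strictly increasing (start of $I_{k+1}$), producing a local \emph{minimum}. Hence the local maxima of $h$ on $[0,1]$ are precisely $\{k/t\}_{k=0}^t$.

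For part (3), the symmetry $f(b,k,t) = f(1-b, t-k, t)$ gives $h(i/t) = h((t-i)/t)$, so only the non-increasing half needs proof: it suffices to show $h((i+1)/t) \leq h(i/t)$ for $i = 0, \dots, \lfloor t/2 \rfloor - 1$. Using $\Gamma(x+1) = x\Gamma(x)$ to simplify gamma ratios, the step ratio reduces to
\[
\frac{\exp h((i+1)/t)}{\exp h(i/t)} = \frac{(i+1)(2(t-i)-1)}{(2i+1)(t-i)} \left(\frac{i+1}{i}\right)^i \left(\frac{t-i-1}{t-i}\right)^{t-i-1}.
\]
Substituting $u = i+1$, $v = t-i$ (so $1 \leq u < v$ in the relevant range) and setting
\[
\phi(n) := \frac{2n-1}{n}\left(\frac{n-1}{n}\right)^{n-1} \quad (\text{convention } 0^0 = 1, \text{ so } \phi(1) = 1),
\]
this ratio collapses to $\phi(v)/\phi(u)$, reducing the claim to showing that $\phi$ is non-increasing on $\mathbb{N}$. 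Extending $\phi$ to the reals and differentiating $\ln \phi(n) = \ln(2n-1) + (n-1)\ln((n-1)/n) - \ln n$ gives $\frac{d}{dn}\ln\phi(n) = \frac{2}{2n-1} + \ln(1 - 1/n)$; bounding $\ln(1-x) \leq -x - x^2/2 - x^3/3$ (by positivity of the Mercator-series tail) yields
\[
\frac{d}{dn}\ln\phi(n) \leq \frac{2}{2n-1} - \frac{1}{n} - \frac{1}{2n^2} - \frac{1}{3n^3} = -\frac{n-2}{6n^3(2n-1)} \leq 0 \quad (n \geq 2),
\]
and the boundary case $n = 1$ is handled directly by $\phi(1) = 1 > 3/4 = \phi(2)$.

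\textbf{Main obstacle.} The delicate step is the ratio inequality in Part (3): by Stirling, $e^{h(i/t)}$ and $e^{h((i+1)/t)}$ are both asymptotic to $\sqrt{\pi t/2}$, so consecutive ratios tend to $1$ and the slack is only of order $1/n^2$. A second-order bound $\ln(1-x) \leq -x - x^2/2$ on the logarithm leaves the resulting upper bound on $\frac{d}{dn}\ln \phi(n)$ \emph{nonnegative}; it is precisely the third-order correction that tips the sign, as witnessed by the residual $-(n-2)/(6n^3(2n-1))$ above.
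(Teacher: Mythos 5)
Your proof is correct. Parts (1) and (2) follow essentially the same route as the paper: you test consecutive values of $k$ via the ratio (equivalently the difference of logs), getting the threshold $bt\pm 1/2$, and then partition $[0,1]$ into slabs $I_k$ on which $h$ coincides with the strictly concave function $f(\cdot,k,t)$, peaking at $k/t$, with the break-points being local minima. The genuine difference is in part (3): the paper does not prove it at all, but instead defers to the proof of Lemma~4 in \citet{CoverO96}, whereas you give a self-contained argument. Your reduction is clean: by the symmetry $f(b,k,t)=f(1-b,t-k,t)$ only the nonincreasing half is needed, and using $\Gamma(x+1)=x\Gamma(x)$ the step ratio $e^{h((i+1)/t)}/e^{h(i/t)}$ collapses to $\phi(t-i)/\phi(i+1)$ with $\phi(n)=\frac{2n-1}{n}\bigl(\frac{n-1}{n}\bigr)^{n-1}$, so everything rests on $\phi$ being nonincreasing on the positive integers (I checked the algebra: the derivative of $\ln\phi$ is $\frac{2}{2n-1}+\ln(1-1/n)$, and with the third-order bound the upper bound is exactly $-\frac{n-2}{6n^3(2n-1)}$, with $n=1$ handled by $\phi(1)=1>\phi(2)=3/4$; the $i=0$ case of the ratio is also consistent with the $0^0=1$ convention). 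Your remark about the obstacle is accurate — the second-order bound $\ln(1-x)\le -x-x^2/2$ leaves a positive residual $\frac{1}{2n^2(2n-1)}$, so the cubic term is genuinely needed. What your route buys is a fully self-contained lemma that does not require the reader to unpack the proof of \citet{CoverO96}; what the paper's citation buys is brevity, since the monotonicity in $k$ of exactly this quantity is established there.
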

\begin{proof}
Fix $b\in[0,1]$ and let  $k^* \in \argmax_{k=0, \dots, t} f(b,k,t)$.
This implies that $f(b,k^*+1,t)$ and $f(b,k^*-1,t)$ are not larger than $f(b,k^*,t)$.
For convenience, let us work with $g(b,k,t) := \exp(f(b,k,t))$.
Then,
\[
g(b,k^*+1,t)/g(b,k^*,t)=
\frac{b(t-k^*-1/2)}{(k^*+1/2)(1-b)}~.
\]
This ratio is less than 1 iff
\[
b(t-k^*-1/2)
\leq (k^*+1/2)(1-b),
\]
that is
\[
k^*\geq b(t-1/2)-1/2(1-b)
= b t -1/2~.
\]
Moreover,
\[
g(b,k^*-1,t)/g(b,k^*,t)
= \frac{(k^*-1/2)(1-b)}{b(t-k^*+1/2)}\leq 1,
\]
that is equivalent to
\[
k^* \leq b(t+1/2)+1/2(1-b)
= b t + 1/2~.
\]
Putting the two inequality together with the constraint that $k^*$ is integer gives the stated expression.
Moreover, the conditions above imply that the arg max becomes only one element in $\{0,\ldots,t\}$ in general, but it could be one of the two elements when $tb/2$ is an integer.
In such edge case, one can see that both achieve the same objective value.
This concludes the first statement of the lemma.

For the second property, for every $k\in\{0,\dots,t\}$, let $I_k$ be the set of $b$'s for which $k \in \argmax_{k=0, \dots, t} f(b,k,t)$.
Then, $\{I_k\}$ forms a partition of $[0,1]$ where they share the boundary with adjacent intervals.
That is, we have $\cup_{k=0}^t I_k =[0,1]$, $I_k \cap I_{k+1} = (k+0.5)/t, \forall k\in\{0,\ldots,t\}$, and $I_k \cap I_{j}=\{\}$ for all $j,k\in\{0,\ldots,t\}$ with $|j-k|>1$. 
On each partition $I_k$, it is easy to see that $h(b)$ is concave and its maximum is $k/t$.
Hence, the local maxima of $h(b)$ are exactly at $\{k/t\}_{k=0}^t$.

The third property directly follows from the proof of Lemma 4 in \citet{CoverO96}.
\end{proof}

We can now present the upper bounds on the regret.
\begin{lemma}
\label{lemma:alg_u_l_bounds}
For any $X_1,\dots,X_t$ in $[0,1]$, define
$\hat{\mu}_t=\frac{1}{t}\sum_{i=1}^t X_t$, and
\[
\phi(x)=\argmax_{b \in [0,1]} \ \sum_{i=1}^t \ln\left( b\left(1+\frac{X_i-x}{x}\right) + (1-b) \left(1-\frac{X_i-x}{1-x}\right) \right)~.
\]
Consider running universal portfolio with 2 stocks, Dirichlet($1/2,1/2$) mixture, and the market gains $\{(1+\frac{X_i-m}{m}, 1-\frac{X_i-m}{1-m})\}_{i=1}^t$.
Then, for any $\ell \in [0, \hat{\mu}_t)$ and $m \in [\ell, \hat{\mu}_t]$, we have that its regret is upper bounded by
\[
\max\left\{f\left(\frac{\lceil \phi(\ell)\cdot t-0.5\rceil}{t},\lceil \phi(\ell)\cdot t-0.5\rceil,t\right),f\left(\frac{\lfloor \hat{\mu}_t\cdot t+0.5\rfloor}{t},\lfloor \hat{\mu}_t\cdot t+0.5\rfloor,t\right)\right\}~.
\]
On the other hand,
for any $u \in(\hat{\mu}_t,1]$ and $m \in [\hat{\mu}_t,u]$, we have that the regret is upper bounded by
\[
\max\left\{f\left(\frac{\lfloor \phi(u)\cdot t+0.5\rfloor}{t},\lfloor \phi(u)\cdot t+0.5\rfloor,t\right),f\left(\frac{\lceil \hat{\mu}_t\cdot t-0.5\rceil}{t},\lceil \hat{\mu}_t\cdot t-0.5\rceil,t\right)\right\}~.
\]
\end{lemma}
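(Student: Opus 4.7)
We argue the case $m \in [\ell, \hat{\mu}_t]$; the case $m \in [\hat{\mu}_t, u]$ is symmetric. The plan is: reduce the regret to a function of the optimal portfolio $\phi(m)$, locate the range of $\phi(m)$ as $m$ varies, and then upper bound that function uniformly on that range using the sawtooth structure of $h(b) := \max_{0 \le k \le t} f(b, k, t)$ from Lemma~\ref{lemma:local_maxima}.

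By Theorem~\ref{thm:data_dep_regret} together with Property~1 of Lemma~\ref{lemma:local_maxima}, the regret at a fixed centering $m$ is at most $h(\phi(m))$. To characterize how $\phi(m)$ behaves as $m$ varies, set $L(b, m) := \sum_{i=1}^{t} \ln D_i(b, m)$ with $D_i(b, m) := m(1 - X_i) + b(X_i - m)$. A short algebraic check shows that $D_i(b, m)$ equals the wealth factor $b \cdot (1 + (X_i - m)/m) + (1 - b) \cdot (1 - (X_i - m)/(1 - m))$ up to the $b$-independent positive factor $1/[m(1-m)]$, so $\phi(m) = \argmax_b L(b, m)$. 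Direct differentiation gives
\[
L_b(b, m) = \sum_{i=1}^{t} \frac{X_i - m}{D_i(b, m)},
\qquad
L_{bm}(b, m) = -\sum_{i=1}^{t} \frac{X_i(1 - X_i)}{D_i(b, m)^2} \le 0,
\]
the second identity following from a cancellation that leaves only an $X_i(1 - X_i)$ term in the numerator. Since $L(\cdot, m)$ is strictly concave ($L_{bb} < 0$), implicit differentiation of the first-order condition $L_b(\phi(m), m) = 0$ yields $\phi'(m) = -L_{bm}/L_{bb} \le 0$. Combined with $\phi(\hat{\mu}_t) = \hat{\mu}_t$, which is immediate from $L_b(\hat{\mu}_t, \hat{\mu}_t) \propto \sum_i (X_i - \hat{\mu}_t) = 0$, this shows $\phi(m) \in [\hat{\mu}_t, \phi(\ell)]$ for every $m \in [\ell, \hat{\mu}_t]$.

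It remains to bound $h$ uniformly on $[\hat{\mu}_t, \phi(\ell)]$. Properties~1 and~2 of Lemma~\ref{lemma:local_maxima} imply that on each sub-interval $((i - 1/2)/t, (i + 1/2)/t)$ the function $h$ coincides with the concave function $b \mapsto f(b, i, t)$, which attains its maximum $h(i/t) = f(i/t, i, t)$ at the grid point $b = i/t$; hence $h(b) \le h(i/t)$ for $b$ in that sub-interval. As $b$ ranges over $[\hat{\mu}_t, \phi(\ell)]$, the corresponding sub-interval indices $i$ range over a contiguous block $[k_0, k_1]$, with $k_0 = \lfloor \hat{\mu}_t t + 1/2 \rfloor$ and $k_1 = \lceil \phi(\ell) t - 1/2 \rceil$ being the specific choices that pick the sub-intervals to which $\hat{\mu}_t$ and $\phi(\ell)$ belong (the tie-breaking only matters at half-integer boundaries, where both adjacent grid points give valid upper bounds). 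By Property~3, the sequence $(h(i/t))_{i=0}^{t}$ is unimodal, decreasing then increasing, so its maximum over the contiguous block $[k_0, k_1]$ is attained at one of its endpoints, yielding
\[
\max_{b \in [\hat{\mu}_t, \phi(\ell)]} h(b) \;\le\; \max\{h(k_0/t),\, h(k_1/t)\} \;=\; \max\{f(k_0/t, k_0, t),\, f(k_1/t, k_1, t)\},
\]
which is the claimed bound.

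I expect the main obstacle to be the monotonicity step for $\phi$: while the implicit differentiation itself is standard, it crucially relies on the compact identity $L_{bm} = -\sum_i X_i(1-X_i)/D_i^2$, which only appears after a non-obvious cancellation. Without this identity, pinning down the range $[\hat{\mu}_t, \phi(\ell)]$, and hence reducing an $m$-dependent regret bound to one depending only on the endpoints, would be awkward.
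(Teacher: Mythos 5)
Your proof is correct and follows essentially the same route as the paper's: bound the regret by $h(\phi(m))$ via Theorem~\ref{thm:data_dep_regret}, show $\phi$ is nonincreasing with $\phi(\hat{\mu}_t)=\hat{\mu}_t$ so that $\phi(m)\in[\hat{\mu}_t,\phi(\ell)]$, and then invoke the local-maxima structure of Lemma~\ref{lemma:local_maxima} to reduce to the two endpoint grid values. The only difference is that you make the monotonicity of $\phi$ rigorous through the explicit cross-partial identity $L_{bm}=-\sum_i X_i(1-X_i)/D_i^2\le 0$ (the paper argues this step qualitatively from the monotone dependence of the market gains on $m$), which is a welcome sharpening; just note that at boundary maximizers ($\phi(m)\in\{0,1\}$) the implicit-differentiation step should be replaced by the standard decreasing-differences comparative-statics argument, which your inequality already supplies.
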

\begin{proof}
We will only prove the first upper bound because the other one is analogous.

Observe that when $m$ increases, the market gains of the first stock decrease and the ones of the second stock increases. This means that $\phi(x)$ is nonincreasing. Hence, for $m \in [\ell,\hat{\mu}_t]$ we have that $\phi(m) \in [\phi(\hat{\mu}_t), \phi(\ell)]$. 
Recall that $G(\beta,\hat{\mu}_t)$ is maximized at $\beta=0$ by Lemma~\ref{lemma:intervals}.
Using our reduction in Lemma~\ref{lemma:reduction} and noting that $ \sum_{i=1}^t \ln\left( b\left(1+\frac{X_i-x}{x}\right) + (1-b) \left(1-\frac{X_i-x}{1-x}\right) \right) = G(\beta,x)$, one can show that $\phi(\hat{\mu}_t) = \hat{\mu}_t$.
Using the properties of the local maxima of the upper bound of universal portfolio in Lemma~\ref{lemma:local_maxima}, we have the stated bound.
\end{proof}

We can finally prove Theorem~\ref{thm:main}.
\begin{proof}[Proof of Theorem~\ref{thm:main}]
It is enough to use Lemma~\ref{lemma:alg_u_l_bounds} to derive a lower bound to the wealth and use Theorem~\ref{thm:ville}.
\end{proof}

\subsection{Proof of Theorem~\ref{thm:algo}}
\label{sec:proof_thm_algo}

\begin{proof}
The algorithm simply numerically invert the inequality in \eqref{eq:to_be_inverted}, using two different upper bounds to the regret for the lower and upper values of the confidence intervals, in $R_t^\ell$ and $R_t^u$ respectively. The particular expression of the regret is given in Lemma~\ref{lemma:alg_u_l_bounds}.
Note that lines 8 and 12 take care of the maximum over $t$ in \eqref{eq:to_be_inverted}. Moreover, Lemma~\ref{lemma:intervals} proves that $L_t$ and $U_t$ are intervals, so we can use a binary search procedure.
\end{proof}

From the above proof, it should be clear that we can refine the regret upper bounds at each step of the binary search procedure. However, numerically the advantage is tiny, so for simplicity we decided not to present this variant.

\subsection{Proof of Theorem~\ref{thm:never_vacuous}}
\label{sec:proof_thm_never_vacuous}

\begin{proof}
First of all, it should be clear that the width of the confidence intervals $u_t-\ell_t$ are always smaller than the one calculated with 1 sample, $u_1-\ell_1$.
With only one sample, the upper and lower bound have a closed formula.
Indeed, the argmax of \eqref{eq:to_be_inverted} is achieved in $\beta=\frac{1}{m}$ if $X_1-m>0$ and in $\beta=-\frac{1}{1-m}$ for $X_1-m<0$. This implies that
\[
\ell_1 = \frac{X_1}{\exp(C_1+ \ln \tfrac{1}{\delta})},
\]
and 
\[
u_1 = 1-\frac{1-X_1}{\exp(C_1+ \ln \tfrac{1}{\delta})}~.
\]
Moreover, from direct calculation we also get that $C_1$ is equal to $\ln 2$.
Subtracting the upper bound from the lower bound, we get the stated bound for any $X_1$.
\end{proof}

\subsection{Proof of Theorem~\ref{thm:approx_inversion}}
\label{sec:proof_thm_approx}

First, we state a technical lemma.
\begin{lemma}
  \label{lemma:max_approx_wealth}
  Let $f(x)=A x + B ( \ln(1-|x|)+|x|)$, where $A \in \R$ and $B\geq 0$. Then, $\argmax_{x \in [-1,1]} \ f(x) = \frac{A}{|A|+B}$ and $\max_{x \in [-1,1]} \ f(x) = B\psi(\frac{A}{B}) \geq \frac{A^2}{4/3|A|+2B}$, where $\psi(x)=|x|-\ln(|x|+1)$.
\end{lemma}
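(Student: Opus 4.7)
The plan is to (i) compute the maximum in closed form by a case analysis on the sign of $A$, using the symmetry of $f$ under $x\mapsto -x$, and then (ii) reduce the lower bound on $B\psi(A/B)$ to a one-variable scalar inequality and verify it by differentiation.

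First I would observe that $f$ is continuously differentiable on $(-1,1)$: the one-sided derivatives of $B(\ln(1-|x|)+|x|)$ at $x=0$ both vanish, so $f'(0)=A$ and $f$ is concave on each of $[0,1]$ and $[-1,0]$ (on $[0,1]$ it is $(A+B)x+B\ln(1-x)$, and similarly on $[-1,0]$). Consequently, when $A\ge 0$ the maximizer lies in $[0,1]$, and we may replace $|x|$ by $x$ and set the derivative $A+B-B/(1-x)$ to zero, giving $x^*=A/(A+B)=A/(|A|+B)$. Substituting back,
\[
f(x^*) \;=\; (A+B)\tfrac{A}{A+B} + B\ln\tfrac{B}{A+B} \;=\; A - B\ln(1+A/B) \;=\; B\psi(A/B).
\]
For $A<0$, apply this to $\tilde f(x):=f(-x)$, which has $A$ replaced by $-A>0$; since $\psi$ is even, both the formula for the argmax (with sign flipped) and the value $B\psi(A/B)$ carry over. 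The case $A=0$ is consistent by continuity.

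For the lower bound, after substituting $y:=|A|/B\ge 0$ the inequality $B\psi(A/B)\ge\frac{A^2}{\tfrac43|A|+2B}$ becomes
\[
y-\ln(1+y) \;\ge\; \frac{y^2}{\tfrac43 y + 2} \;=\; \frac{3y^2}{4y+6}, \qquad y\ge 0.
\]
I would let $g(y)$ denote the difference of the two sides, note $g(0)=0$, and differentiate. A short computation gives
\[
g'(y) \;=\; y\cdot\frac{(3+2y)^2 - 3(3+y)(1+y)}{(1+y)(3+2y)^2} \;=\; \frac{y^3}{(1+y)(3+2y)^2} \;\ge\; 0,
\]
so $g$ is nondecreasing on $[0,\infty)$ and hence nonnegative. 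Combining with $\psi(A/B)=\psi(|A|/B)$ completes the proof.

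The only nontrivial step is the scalar inequality, and the entire obstacle there is the algebraic simplification $(3+2y)^2-3(3+y)(1+y)=y^2$; once that identity is noticed, monotonicity of $g$ is immediate and the rest is bookkeeping.
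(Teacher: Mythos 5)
Your proof is correct and follows essentially the same route as the paper: solve the first-order optimality condition to get $x^*=\frac{A}{|A|+B}$ and the closed-form maximum $B\psi(A/B)$, then reduce the lower bound to the scalar inequality $\ln(1+y)\le \frac{y(6+y)}{6+4y}$, which the paper cites as elementary and you additionally verify by differentiation. The only (trivial) omission is the degenerate case $B=0$, which the paper dispatches separately before assuming $A,B\neq 0$.
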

\begin{proof}
  If $B=0$, we have that argmax is $\sign(A)$. If $A=0$, the argmax is 0.
  Hence, in the following we can assume $A$ and $B$ different than 0.
  
  We can rewrite the maximization problem as
  \begin{align*}
    \argmax_{x} \ f(x)
    = B \argmax_{x} \ \frac{A}{B}x + \ln(1-|x|)+|x|~.    
  \end{align*}
  From the optimality condition, we have that
  $\frac{A}{B} - \frac{\sign(x^*)}{1-|x^*|}+\sign(x^*) = 0$, 
  that implies $x^*=\frac{A}{|A|+B}$.
  Substituting this expression in $f$, we obtain the stated expression. 
  The inequality is obtained by the elementary inequality $\ln(1+x) \le x\cdot\frac{6+x}{6+4x}$ for $x\ge 0$.
\end{proof}

We can now prove Theorem~\ref{thm:approx_inversion}.
\begin{proof}
  For a given $t$, set $\epsilon_t$ equal to $\mu-\hat{\mu}_t$, so that $\epsilon+\hat{\mu}_t \in [0,1]$.
  Symmetrizing \citet[equation 4.12]{FanGL15}, we have for any $|x|\leq 1$ and $|\beta|\leq 1$
  \begin{equation}
    \ln(1+\beta x) \geq \beta x + (\ln(1-|\beta|)+|\beta|)x^2~. 
    \label{eq:fan}
  \end{equation}
  Hence, for any $\beta \in [-1,1]$, we have
  \begin{align*}
    \sum_{i=1}^t \ln(1+\beta (X_i-\mu) )
    &= \sum_{i=1}^t \ln(1+\beta (X_i-\hat{\mu}_t-\epsilon_t) ) \\
    &\geq \beta \sum_{i=1}^t (X_i-\hat{\mu}_t -\epsilon_t) + (\ln(1-|\beta|)+|\beta|) \left(\sum_{i=1}^t (X_i-\hat{\mu}_t)^2 + \epsilon_t^2 t - 2 \epsilon_t \sum_{i=1}^t (X_i-\hat{\mu}_t)\right) \\
    &= -\epsilon_t \beta t + (\ln(1-|\beta|)+|\beta|) \left(\sum_{i=1}^t (X_i-\hat{\mu}_t)^2 +  \epsilon_t^2 t\right) ~.
  \end{align*}
  Hence, we have
  \begin{align*}
    \max_{\beta \in [-1,1]} \  \sum_{i=1}^t \ln(1+\beta (X_i-\hat{\mu}_t-\epsilon_t)) 
    = \left(\sum_{i=1}^t(X_i-\hat{\mu}_t)^2 +  \epsilon_t^2 t\right) \psi\left(\frac{|\epsilon_t| t}{\sum_{i=1}^t(X_i-\hat{\mu}_t)^2 +  \epsilon_t^2 t }\right),
  \end{align*}
  where $\psi(x) = |x| - \ln(|x|+1)$ and the equality is due to Lemma~\ref{lemma:max_approx_wealth}. From the inequality in Lemma~\ref{lemma:max_approx_wealth} we also obtain
  \[
  \max_{\beta \in [-1,1]}\  \sum_{i=1}^t \ln(1+\beta (X_i-\hat{\mu}_t-\epsilon_t)) 
  \geq \frac{\epsilon_t^2 t^2}{4/3 |\epsilon_t| t + 2 \sum_{i=1}^t(X_i-\hat{\mu}_t)^2 +  2 \epsilon_t^2 t}~.
  \]
  
  Now, note that for any $\mu \in [0,1]$ the interval $[-\frac{1}{1-\mu},\frac{1}{\mu}]$ is contained in $[-1,1]$.
  Hence, from Theorem~\ref{thm:main}, uniformly on all $t$ with probability at least $1-\delta$, we have
  \begin{align*}
    \frac{\epsilon_t^2 t^2}{4/3 |\epsilon_t| t + 2 \sum_{i=1}^t(X_i-\hat{\mu}_t)^2 +  2 \epsilon_t^2 t}
    \leq \Regret_t + \ln \frac{1}{\delta}~.
  \end{align*}
  Assuming $\epsilon_t$ positive and solving for it, we have the stated upper bound.
  The expression for negative $\epsilon_t$ has the signed flipped, by the symmetry of the expression.
\end{proof}

\subsection{Proof of Theorem~\ref{thm:precise-a}}
\label{sec:proof_precise-a}

\begin{proof}
  We focus on showing that
    \begin{align*}
    \PP\del{\max_t \ t\cdot G_t(\mu) - R_t \ge \ln(1/\dt) } \le \dt
  \end{align*}
  and that, for every $t\ge1$, the confidence set for time $t$
  \begin{align*}
    \cbr{m\in \RR: t \cdot G_t(m) - R_t \ge \ln(1/\dt) } 
  \end{align*}
  is an interval.
  Modifying the proof below for the theorem statement is a trivial modification using the fact that $\kl(\hat{\mu}_t, \mu)$ lower bounds the maximum wealth due to Proposition~\ref{prop:kl}, $\kl(\hat{X}_t, x)$ is piece-wise monotonic when split at $x=\hat{\mu}_t$, and the maximum of two nonincreasing (nondecreasing) function is nonincreasing (nondecreasing) respectively.
  
  First, we show a tight lower bound on the log wealth that depends on the sign of $\beta$.
  Define $c_i := X_i - m$.
  Throughout the proof, we drop the subscript $t$ from $\barV_t$, $G^\ell_t$, $G^u_t$, etc. to reduce clutter.

  If $\beta \in [0, 1/m]$, using \citet[Eq. 4.11]{FanGL15} with $\lam=\beta m \in [0,1]$ and $\xi = c_i/m \ge -1$, we have
  \begin{align*}
    \ln(1 + \beta c_i) 
    &\ge \beta c_i - \beta^2 c_i^2 \cd \fr{-\ln(1-\beta m) - \beta m}{\beta^2 m^2} 
  \\&= \beta c_i - c_i^2 \cd \fr{-\ln(1-\beta m) - \beta m}{m^2}, 
  \end{align*}
  where we take $\ln(0)$ as $-\infty$.
  Recall $\barV = \frac1t \sum_{i=1}^t (X_i - \hat{\mu}_t)^2$.
  Then, with algebra, one can show that
  \begin{align*}
    \max_{\beta\in[0,1/m]} \ \frac1t \sum_{i=1}^t \ln(1+\beta c_i) 
    \ge 
    \max_{\beta\in[0,1/m]} \ \sbr{\beta(\hat{\mu}_t - m) - (-\ln(1-\beta m) -\beta m)\fr{(\barV + (\hat{\mu}_t - m)^2)}{m^2}
    = G^\ell(\beta, m)}
  \end{align*}
  A simple algebra tells us that the maximum is achieved at $\til\beta_\ell(m)$ (defined in Algorithm~\ref{alg:confidence_intervals_a}).

  If $\beta \in [-1/(1-m),0]$, we can employ a similar argument to derive
  \begin{align*}
    &\max_{\beta\in[-1/(1-m),0]} \ \frac1t \sum_i \ln(1+\beta c_i) 
    \\&\ge \max_{\beta\in[-1/(1-m),0]} \sbr{-\beta(m-\hat{\mu}_t) - (-\ln(1+(1-m)\beta) + (1-m)\beta) \fr{\barV+(m-\hat{\mu}_t)^2}{(1-m)^2} = G^u(\beta,m) }~.
  \end{align*}
  The maximum is achieved at $\til\beta_u(m)$ (defined in Algorithm~\ref{alg:confidence_intervals_a}).
  
  Let $\beta^*(m) = \max_{\beta\in[-1/(1-m),1/m]} \ \sum_{i=1}^t \ln( 1 + \beta(X - m)) $.
  Examining the gradient of the objective function at $\beta=0$, one can see that $\hat{\mu}_t \ge \mu$ implies that $\beta^*(\mu) \ge 0$ and $\hat{\mu}_t \le \mu$ implies that $\beta^*(\mu) \le 0$.

  If $\hat{\mu}_t \ge \mu$, then using $\beta^*(\mu) \ge 0$ we have
  \begin{align*}
    \sum_i \ln(1 + \beta^*(\mu)\cdot (X_i - \mu)) 
    = \max_{\beta\in[0,1/\mu]} \ \sum_i \ln(1 + \beta c_i) 
    \ge \max_{\beta\in[0,1/\mu]} \ G^\ell(\beta, \mu)~.
  \end{align*}
  Otherwise, we have
  \begin{align*}
    \sum_i \ln(1 + \beta^*(\mu)\cdot (X_i - \mu)) 
    \ge \max_{\beta\in[-1/(1-\mu),0]} \ G^u(\beta, \mu)~.
  \end{align*}
  
  Hence, from Theorem~\ref{thm:main}, with probability at least $1-\dt$, we have, for all $t$ 
  \begin{align*}
    G_t(\mu) \le \frac1t \ln(e^{\Regret_t}/\delta) \le  \frac1t \ln(e^{R_t}/\delta) ~.
  \end{align*}

  It remains the show that the confidence set stated in the theorem forms an interval.
  In this proof, we focus on showing that the lower side only (i.e., the confidence set intersecting with  $[0,\hat{\mu}_t]$) as the proof for the upper side is symmetric.
  
  Let $\beta(m) = \arg \max_{\beta \in [0,1/m]} \ G^\ell(\beta,m)$ and $F(m) = \max_{\beta \in [0,1/m]} \  G^\ell(\beta,m)$.
  It suffices to show that $F$ is nonincreasing in $[0,\hat{\mu}_t]$.
  That is, we claim that if $m_1 < m_2$ then $F(m_1) = G^\ell(\beta(m_1), m_1) \ge G^\ell(\beta(m_2), m_2) = F(m_2)$.
  
  To prove the claim, we will show below that $G^\ell(\beta(m_2),m_1) \ge G^\ell(\beta(m_2), m_2)$.
  If this is true, then we have
  \begin{align*}
    F(m_1) = G^\ell(\beta(m_1), m_1) \ge G^\ell(\beta(m_2),m_1) \ge G^\ell(\beta(m_2), m_2) = F(m_2),
  \end{align*}
  where the first inequality is due to the definition of $\beta(m_1)$ and the fact that $[0,1/m_1] \supseteq [0,1/m_2]$.
  
  Now, let us show that $G^\ell(\beta(m_2),m_1) \ge G^\ell(\beta(m_2), m_2)$.
  To see this, it suffices to show that $G^\ell(\beta, m)$ is nonincreasing in $m \in [0, m_2]$ where $\beta \in [0,1/m_2]$.
    \begin{align*}
      \fr{\der}{\der m} G^\ell(\beta,m) = \cdots = \fr{1}{m^3}  \beta m \del{-m^2 - \underbrace{\del{\fr{1}{1-\beta m} + 1 + \fr{2\ln(1-\beta m) }{\beta m} }}_{\tsty =: A}\cd \del{\barV + (\hat{\mu}_t - m)^2} }~.
    \end{align*}
    It suffices to show that $A \ge 0$.
    Using the fact that $q:=\beta m \in [0,1]$, we have
    \begin{align*}
      A = \frac{1}{1-q}  + 1 + \fr{2\ln(1-q)}{q}~.
    \end{align*}
    Let $y = 1-q$.
    Using the elementary inequality $\ln(1-q) = \ln(y) \ge -\fr{1-y}{\sqrt{y}} $, $\forall y \in [0,1]$, we have that $\ln(y) \ge -\fr{1-y}{\sqrt{y}}  \ge  -\fr{1-y}{y}\sqrt{y} \ge -\fr{1-y}{y}\cd \fr{1 + y}{2}  = - \fr{1-y^2}{2y} = - \fr{1-(1-y)^2}{2(1-q)} $.
    With this bound, we have
    \[
      A \ge \fr{1}{1-q}  + 1 - \fr{2-q}{1-q} = 0~. \qedhere
    \]
\end{proof}

\subsection{Proof of Theorem~\ref{thm:lil}}
\label{sec:proof_lil}

First, we state two technical lemmas.

\begin{lemma}
\label{lemma:beta-invert} 
  Assume $\beta \in [0,1)$, and $\beta \le \frac{1}{A}(1 + \beta)^2$.
  If $A \ge 4$, then $\beta \le \fr{2}{A - 2 + \sqrt{(A - 2)^2 - 4 }  }$.
  Furthermore, if $A \ge 5$,  
  $\beta \le \fr1A + \fr{5}{A^2}$.   
\end{lemma}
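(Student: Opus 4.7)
The plan is to reduce the hypothesis to a quadratic inequality in $\beta$, identify the feasible root, and then verify the claimed upper bounds by plugging in.

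First, I would rewrite $\beta \le \frac{1}{A}(1+\beta)^2$ as the quadratic inequality
\[
q(\beta) := \beta^2 - (A-2)\beta + 1 \ge 0~.
\]
For $A \ge 4$ the discriminant $(A-2)^2 - 4 = A^2 - 4A$ is nonnegative, so $q$ has two real roots
\[
\beta_\pm = \frac{(A-2) \pm \sqrt{(A-2)^2 - 4}}{2}~.
\]
Since the product of the roots equals $1$, we have $\beta_- \le 1 \le \beta_+$ (with equality when $A = 4$). The constraint $\beta \in [0,1)$ combined with $q(\beta) \ge 0$ therefore forces $\beta \le \beta_-$. Rationalizing the numerator gives
\[
\beta_- = \frac{2}{(A-2) + \sqrt{(A-2)^2 - 4}},
\]
which is the first claimed bound.

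For the second bound, let $\beta_0 := \frac{1}{A} + \frac{5}{A^2}$. I would argue $\beta \le \beta_-\le \beta_0$ by showing that $q(\beta_0) \le 0$ for all $A \ge 5$; together with $\beta_0 < 1 \le \beta_+$ (clear for $A \ge 5$), this places $\beta_0$ in the interval $[\beta_-,\beta_+]$ where $q$ is nonpositive, hence $\beta_0 \ge \beta_-$. A direct expansion yields
\[
q(\beta_0) = \frac{-3A^3 + 11A^2 + 10A + 25}{A^4}~.
\]
Evaluating the numerator at $A=5$ gives $-25 < 0$, and its derivative $-9A^2 + 22A + 10$ is already negative at $A=5$ and stays negative for larger $A$, so the numerator remains negative for all $A \ge 5$. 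This gives $q(\beta_0) \le 0$ and concludes $\beta \le \beta_0 = \frac{1}{A} + \frac{5}{A^2}$.

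The only mildly delicate step is checking the sign of $-3A^3 + 11A^2 + 10A + 25$ for $A \ge 5$, but it is a routine one-variable calculation. The key conceptual step is the observation that one does not need to manipulate $\sqrt{(A-2)^2 - 4}$ at all for the second bound: it suffices to evaluate the quadratic $q$ at the candidate $\beta_0$ and use that $\beta_0$ lies below the larger root.
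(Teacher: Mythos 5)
Your proposal is correct. For the first bound you do exactly what the paper does: rewrite the hypothesis as the quadratic inequality $\beta^2-(A-2)\beta+1\ge 0$, note that the two roots have product $1$ so the larger root is at least $1$, use $\beta<1$ to discard that branch, and take the smaller root in its rationalized form $\frac{2}{A-2+\sqrt{(A-2)^2-4}}$. For the second bound your route differs from the paper's: the paper rewrites the exact root as $\frac1A+\frac{A^2+2A-\sqrt{A^2\cdot A(A-4)}}{A-2+\sqrt{A(A-4)}}\cdot\frac{1}{A^2}$, argues the bracketed coefficient is nonincreasing in $A$, and evaluates it at $A=5$ to extract the constant $5$; you instead substitute the candidate $\beta_0=\frac1A+\frac{5}{A^2}$ directly into the quadratic, obtain $q(\beta_0)=\frac{-3A^3+11A^2+10A+25}{A^4}$ (which I verified), and show the numerator is negative at $A=5$ and decreasing thereafter, so $\beta_0$ lies between the roots and hence dominates the smaller root. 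Both arguments are sound; yours trades the paper's manipulation of the square-root expression and its (unproved in the paper) monotonicity claim for an elementary one-variable sign check of a cubic, which is arguably easier to verify, while the paper's identity makes transparent where the constant $5$ comes from as the value of the exact correction term at $A=5$. One tiny remark: the condition $\beta_0<1\le\beta_+$ you invoke is redundant, since $q(\beta_0)\le 0$ already places $\beta_0$ in $[\beta_-,\beta_+]$.
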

\begin{proof}
  Solving the quadratic equation, we have an upper bound on $\beta$ and a lower bound on $\beta$.
  One of the two is true.
  However, using $A\ge4$, one can show that the lower bound implies $\beta \ge 1$.
  Thus, one can take the upper bound.
  
  For the second statement, one can show that
  \begin{align*}
    \fr{2}{A-2 + \sqrt{(A - 2)^2 - 4}} = \fr1A + \fr{A^2 + 2A - \sqrt{A^2\cd A(A-4)}  }{A - 2 + \sqrt{A(A-4)} }  \cd \fr{1}{A^2}  
  \end{align*}
  Note that $\fr{A^2 + 2A - \sqrt{A^2\cd A(A-4)}  }{A - 2 + \sqrt{A(A-4)} }$ is nonincreasing in $A$.
  Using the bound $A \ge 5$ concludes the proof.

\end{proof}

\begin{lemma}
\label{lemma:lambert_log_ineq}
Let $A>0$, $\ln(A)+\frac{B}{A}\geq 1$, and $x\leq A \ln(x)+B$. Then,
\[
x
\leq -A \cdot W_{-1}\left(-\frac{1}{A}\exp\left(-\frac{B}{A}\right)\right)
\leq A \ln A + B + A \sqrt{2\left(\ln A +\frac{B}{A}-1\right)},
\]
where $W_{-1}$ is the negative branch of the Lambert function and the first inequality is tight.
\end{lemma}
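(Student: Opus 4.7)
The plan is to reduce the implicit inequality $x \leq A\ln(x) + B$ to the defining equation of the Lambert $W$ function via an exponential substitution, identify the relevant branch using the hypothesis $\ln A + B/A \geq 1$, and then bound $-W_{-1}$ by an elementary scalar inequality.

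First I would rearrange: since the inequality requires $x>0$ for $\ln x$ to be defined, write $\frac{x-B}{A} \leq \ln x$ and exponentiate to get $e^{(x-B)/A} \leq x$, which after multiplication by $-\frac{1}{A}e^{-x/A}$ (flipping the inequality) becomes
\[
-\frac{x}{A}\,e^{-x/A} \;\leq\; -\frac{1}{A}\,e^{-B/A}.
\]
Let $u := -x/A < 0$ and $z := -\frac{1}{A}e^{-B/A}$, so the condition reads $g(u) \leq z$ where $g(u)=ue^u$. The function $g$ has derivative $(u+1)e^u$, so it decreases on $(-\infty,-1)$ from $0$ to $-e^{-1}$ and increases on $(-1,0)$ from $-e^{-1}$ to $0$. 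The hypothesis $\ln A + B/A \geq 1$ is equivalent to $z \geq -e^{-1}$, which ensures the two real solutions $W_{-1}(z) \leq -1$ and $W_0(z) \in [-1,0)$ of $g(u)=z$ exist and that $\{u: g(u) \leq z\}= [W_{-1}(z), W_0(z)]$. Therefore $u \geq W_{-1}(z)$, which translates into $x \leq -A\,W_{-1}(z)$, proving the first inequality. Tightness is immediate: if $x = A\ln x + B$ then every step above is an equality, forcing $u = W_{-1}(z)$ when $u \leq -1$.

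For the second inequality, set $t := \ln A + B/A \geq 1$ so that $z=-e^{-t}/A$ and write $u^* := -W_{-1}(-e^{-t})$. Then $u^* \geq 1$ and $u^* e^{-u^*} = e^{-t}$, i.e.\ $u^* - \ln u^* = t$. The target $-A\,W_{-1}(z) \leq A\ln A + B + A\sqrt{2(t-1)}$ reduces after dividing by $A$ to $u^* - t \leq \sqrt{2(t-1)}$, equivalently $(\ln u^*)^2 \leq 2(u^* - \ln u^* - 1)$. Setting $v := \ln u^* \geq 0$, this is $e^v \geq \tfrac{1}{2}(v^2 + 2v + 2)$. I would prove this by defining $h(v) := e^v - \tfrac{1}{2}(v^2+2v+2)$ and checking $h(0) = h'(0) = 0$ together with $h''(v) = e^v - 1 \geq 0$ for $v \geq 0$, which forces $h \geq 0$ on $[0,\infty)$. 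Taking square roots (permissible since $\ln u^* \geq 0$) yields $\ln u^* \leq \sqrt{2(t-1)}$, and substituting back into $u^* = t + \ln u^*$ gives the desired bound; multiplying by $A$ produces the statement.

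The main obstacle is choosing the right auxiliary scalar inequality in the last step: the naive bound $\ln u \leq \sqrt{2(u-1)}$ is too loose and leads to a worse constant of the form $t+1+\sqrt{2t-1}$, whereas the sharper $e^v \geq (v^2+2v+2)/2$, which is precisely the second-order Taylor remainder estimate, is what delivers the exact $\sqrt{2(t-1)}$ term. Everything else is routine once the substitution $u = -x/A$ has converted the implicit bound into a level-set question for $g(u)=ue^u$.
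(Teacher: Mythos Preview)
Your argument is correct. For the first inequality you do exactly what the paper does: rewrite $x\le A\ln x+B$ so that the substitution $u=-x/A$ puts it in the form $ue^u\le z$, and then read off the bound from the appropriate branch of the Lambert function. Your level-set analysis and the verification that $\ln A+B/A\ge 1$ is precisely the condition $z\ge -e^{-1}$ make the branch selection explicit, which the paper leaves implicit.

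The difference lies in the second inequality. The paper does not prove it; it simply invokes the lower bound on $-W_{-1}$ from Chatzigeorgiou (2013). You instead give a self-contained derivation: with $t=\ln A+B/A$ and $u^*=-W_{-1}(-e^{-t})$ you reduce the claim to $\ln u^*\le\sqrt{2(t-1)}$, and then, using $t=u^*-\ln u^*$ and the substitution $v=\ln u^*$, to the second-order Taylor estimate $e^v\ge 1+v+\tfrac{v^2}{2}$ for $v\ge 0$. This is exactly the content of the cited result specialised to this range, so the two proofs are equivalent in strength, but yours avoids the external reference and makes transparent why the constant in front of the square root is $\sqrt{2}$.
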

\begin{proof}
To obtain the first inequality we observe that
\begin{align*}
x \leq A \ln x+B
\Leftrightarrow \frac{x}{A} \leq \ln\left( \frac{x}{A}\right)+\ln A+\frac{B}{A}~.
\end{align*}
Hence, we can solve the associated equality directly obtaining that the biggest solution is given by $\frac{x}{A}=W_{-1}\left(-\frac{1}{A}\exp(-\frac{B}{A})\right)$, that gives also the solution of the inequality. The second one is from the lower bound on the Lambert function in  \citet{Chatzigeorgiou13}.
\end{proof}

We can now prove the theorem.
\begin{proof}[Proof of Theorem~\ref{thm:lil}]
For brevity of notation, we drop the dependency on $m$ on the quantities defined in the algorithm, i.e., $V_t:=V_t(m)$. Also, for a fixed $m$, define $\theta_t := \sum_{i=1}^t (X_i-m)$ and $\Wealth_t(\beta):=\prod_{i=1}^t (1+\beta (X_i-m))$.
From the above it is easy to verify that $V_t = V_t(\hat{\mu}_t) + \frac1t \theta_t^2$.

Note that from the log-concavity of the wealth function, we have for any $\beta_2>\beta_1\geq0$ that
\begin{align}
\int_{\beta_1}^{\beta_2} \Wealth_t(\beta) F(\beta) \dif \beta 
&\geq F(\beta_2) \int_{\beta_1}^{\beta_2} \Wealth_t(\beta) \dif \beta
= (\beta_2-\beta_1) F(\beta_2) \int_{0}^{1} \Wealth_t(\beta_1(1-a)+a\beta_2) \dif \beta \nonumber \\
&\geq (\beta_2-\beta_1) F(\beta_2) \int_{0}^{1} \Wealth^{1-a}_t(\beta_1) \Wealth^a_t(\beta_2) \dif \beta \nonumber \\
&= F(\beta_2) (\beta_2 - \beta_1) \frac{\Wealth_t(\beta_2) - \Wealth_t(\beta_1)}{\ln\frac{\Wealth_t(\beta_2) }{\Wealth_t(\beta_1)} }~. \label{eq:wealth_log_concavity}
\end{align}

Our first term in the regret is obtained using the fact that $\frac{x-1}{\ln x}\geq \sqrt{x}$, obtaining
\begin{align}
\Wealth_t 
&\geq F(\beta^*_t) |\beta^*_t| \frac{\Wealth_t(\beta^*_t) - \Wealth_t(0)}{\ln\frac{\Wealth_t(\beta^*_t) }{\Wealth_t(0)} }
= F(\beta^*_t) |\beta^*_t| \frac{\Wealth_t(\beta^*_t) - 1}{\ln \Wealth_t(\beta^*_t) } \nonumber \\
&\geq \sqrt{\Wealth_t(\beta^*_t)} |\beta_t^*| F(\beta^*_t)~. \label{eq:regret_lil_1}
\end{align}

For the second term in the regret, we use a Taylor expansion of the log wealth. Denote by $f(\beta)=\ln \Wealth_t(\beta)$.
Hence, when $|\beta^*_t|<1$, for some $\beta$ between $\beta^*_t$ and $\beta_t^*-\Delta_t\sign(\beta^*_t)$, we have
\[
f(\beta^*_t-\Delta_t \sign(\beta^*_t))
= f(\beta_t^*) -\Delta_t \sign(\beta^*_t) f'(\beta^*_t) +\frac{\Delta_t^2}{2}  f''(\beta)~.
\]
From the definition of $q_t$ and $\Delta_t$ in the algorithm, for any $\beta$ between $\beta^*_t$ and $\beta^*_t-\Delta_t \sign(\beta^*_t)$, we have 
\[
f''(\beta)=-\sum_{i=1}^t \frac{(X_i-m)^2}{(1+\beta(X_i-m))^2}
\geq -\sum_{i=1}^t \frac{(X_i-m)^2}{(1+\beta q_t)^2}~.
\]
Hence, for $|\beta_t^*|<1$ we have
\[
f(\beta^*_t-\Delta_t \sign(\beta_t^*) )
\geq f(\beta_t^*)  +\frac{\Delta_t^2}{2}  f''(\beta)
\geq f(\beta_t^*) - \frac{\Delta_t^2 V_t}{2 (1+ \min(\beta^*_t q_t,0))^2},
\]
where the presence of the min is necessary to use $\beta^*_t$ in the denominator.
Using this expression in the integral of the wealth, we have
\begin{equation}
\int_{-1}^{1} \Wealth_t(\beta) F(\beta) \dif \beta
\geq \Wealth_t(\beta^*_t) \Delta_t \exp\left(-\frac{\Delta_t^2 V_t}{2 (1+ \min(\beta^*_t q_t,0))^2}\right) F(\beta^*_t)~. \label{eq:regret_lil_2}
\end{equation}
Putting together \eqref{eq:regret_lil_1} and \eqref{eq:regret_lil_2}, we have the expression of the regret of the portfolio algorithm (i.e., $R_t(m)$ in the algorithm).

Now, we turn our attention to the expression of the confidence sequences.
In the following, we safely assume that $|\theta_t| > \sqrt{2V_t}$ since otherwise we obtain the desired bound.

First, we need to study $\beta^*_t$. If $|\beta^*_t| < 1$, then using Taylor's remainder theorem, we have $f'(0) + \beta_t^* f''(\beta)= f'(\beta_t^*) $, where $\beta$ is between $0$ and $\beta^*_t$. This implies that
\begin{align*}
    \sum_{i=1}^t (X_i-m) - \beta^*_t \sum_{i=1}^t \frac{(X_i-m)^2}{(1 + \beta (X_i-m))^2} = 0, \text{ for some } \beta \text{ between } 0 \text{ and }\beta_t^*~.
\end{align*}
This implies that $\frac{|\theta_t| }{V_t} \del{1 - |\beta^*_t| }^2 \le |\beta^*_t| \le \fr{|\theta_t| }{V_t} \del{1 + |\beta^*_t|}^2$.
Using Lemma~\ref{lemma:beta-invert} to solve the second inequality, if $V_t/|\theta_t| \ge 5$ then we obtain $|\beta_t^*| \le \frac{|\theta_t|}{V_t} + 5 \del{\frac{|\theta_t|}{V_t} }^{2}$.
Also, solving the first inequality, we get 
\begin{equation}
\label{eq:lower_bound_betastar}
|\beta^*| 
\geq \frac12\del{\frac{V_t}{|\theta_t|}+2 - \sqrt{\frac{V_t}{\theta_t}\left(\frac{V_t}{\theta_t}+4\right)}} 
=\fr{2}{\frac{V_t}{\theta_t}+2 + \sqrt{\left(\frac{V_t}{\theta_t}+2\right)^2 - 4}} \ge \fr{|\theta_t|}{2|\theta_t| + V_t}~.
\end{equation}
 
If $|\beta^*| = 1$, then there are two cases: either the absolute value of the unconstrained maximizer is 1 or it is bigger than 1. In the first case, $f'(\beta_t^*)>0$ if $\beta_t^*=1$ and $f'(\beta_t^*)<0$ if $\beta_t^*=-1$. Using the fact that $\beta_t^*$ and $\theta_t$ have the same sign, in both cases reasoning as above, we have
\begin{align*}
    \left|\sum_{i=1}^t (X_i-m)\right| -  \sum_{i=1}^t \fr{(X_i-m)^2}{(1 + \beta (X_i-m))^2} \ge 0, \text{ for some } |\beta|\leq 1 ,
\end{align*}
which implies $V_t/|\theta_t| \le 4$.

We now do a case analysis.

\textbf{Case 1. $V_t \ge 5|\theta_t|$ }

Our analysis above for the case of $|\beta_t^*| = 1$ implies that $V_t \le 4|\theta_t|$, which contradicts the condition of Case 1. Thus, $|\beta^*_t| < 1$ which from \eqref{eq:lower_bound_betastar} implies that $|\beta^*_t| \ge \frac{|\theta_t|}{2|\theta_t| + V_t}$.

Given that $\Delta_t \leq \frac{1+\min(q_t\beta^*_t,0)}{\sqrt{V_t}}$, from \eqref{eq:regret_lil_2} we obtain
\begin{align*}
  \ln\Wealth_t(\beta^*_t) - \ln\Wealth_t \le \ln\del{\frac{\sqrt{e}}{\Delta_t \cd F(\beta^*_t)} }~.
\end{align*}
So, from Ville's inequality, with probability at least $1-\delta$, we have that, $\forall t \ge 1$, 
\begin{align}
\label{eq:proof_lil_1}
  \max_{\beta\in[-1,1]} \ln\Wealth_t(\beta) \le \ln\del{\frac{\sqrt{e}}{\Delta_t \cd F(\beta^*_t)} } + \ln \frac{1}{\delta}~.
\end{align}
Now, it remains to figure out the range of $m$ that satisfies the above inequality.

Define $\hbeta^*_t:=\frac{\theta_t}{4/3 \cdot |\theta_t|+2V_t}$. Using Lemma~\ref{lemma:max_approx_wealth} and \eqref{eq:fan}, we have
\begin{align*}
  \ln \Wealth_t(\beta^*_t) 
   \ge \ln\Wealth_t(\hbeta^*_t)  
  \ge \frac{\theta^2}{4/3|\theta_t| + 2 V_t}~.
\end{align*}
We need a lower bound for $\Delta_t$. If $\Delta_t=\tilde{\Delta}_t$, then $\Delta_t\geq \frac{1-|\beta^*|}{\sqrt{V_t}}$. Instead, if $\Delta_t\neq\tilde{\Delta}_t$, then by its definition we have $\Delta_t=|\beta^*_t|$. 
\begin{align*}
  \frac{|\beta^*_t|}{1-|\beta^*_t|}  
   \sr{\eqref{eq:lower_bound_betastar}}{\ge} \frac{|\theta_t|}{|\theta_t| + V_t} 
     > \frac{\sqrt{2V_t}}{|\theta_t| + V_t}
   \ge \frac{\sqrt{2V_t}}{\frac65 V_t}
     \ge \frac{1}{\sqrt{V_t}} 
  \\ \implies \frac{1-|\beta^*_t|}{\sqrt{V_t}} < |\beta^*_t|~.
\end{align*}
Hence, in both cases we have $\Delta_t\geq \frac{1-|\beta^*|}{\sqrt{V_t}}$.
Thus, using \eqref{eq:proof_lil_1} and the lower bound of $\Delta_t$, we obtain
\begin{align*}
  \frac{\theta_t^2}{\frac{4}{3} |\theta_t| + 2 V_t}
  \le \ln\del{\frac{\sqrt{e}}\delta \cd \frac{\sqrt{V_t}}{1-|\beta^*_t|} |\beta^*_t| h(\beta^*_t) } ~.
\end{align*}

Since $|\beta^*_t| \le \frac{|\theta_t|}{V_t} + 5 \del{\frac{\theta_t}{V_t} }^2$, we have $|\beta^*_t| \le \frac{|\theta_t|}{V_t} + 5 \cd \frac{|\theta_t|}{V_t} \cd \fr15 \le \fr{2|\th_t|}{V_t} \le \fr25$.
So, using $1/(1-|\beta^*_t|) \le \fr{5}{3}$, $|\beta^*_t| \le \fr{2|\theta_t|}{V_t}$, and $h(\beta^*_t) \le h(\frac{|\theta_t|}{2|\theta_t| + V_t} )$, we have
\begin{align*}
\frac{\theta^2_t}{\frac{4}{3} |\theta_t| + 2 V_t}
&\leq \ln\del{\frac{10\sqrt{e}}{3\delta} \cdot \frac{|\theta_t|}{\sqrt{V_t}} h\left(\frac{|\theta_t|}{2|\theta_t| + V_t}\right)}
= \ln\del{\frac{10\sqrt{e}}{3\delta} \cdot \frac{|\theta_t|}{\sqrt{\frac{4}{3}|\theta|+2V_t}}\frac{\sqrt{\frac{4}{3}|\theta_t|+2V_t}}{\sqrt{V_t}} h\left(\frac{|\theta_t|}{2|\theta_t| + V_t}\right)} \\
&\leq \ln\del{\frac{20}{3\delta} \cdot \frac{|\theta_t|}{\sqrt{\frac{4}{3}|\theta_t|+2V_t}} h\left(\frac{1}{2 + \sqrt{V_t/2}}\right)}~. \tag{$|\th_t| \le \fr15 V_t$ }
\end{align*}
From the definition of $h(x)$, we have that the minimum of $h$ in $[-1,1]$ is greater than 6.
So, using Lemma~\ref{lemma:lambert_log_ineq} with $x = \frac{\theta^2_t}{\frac{4}{3} |\theta_t| + 2 V_t}$, $A = \fr12$, and $B = \ln\del{\fr{20}{3\dt} h\del{\fr{1}{2+\sqrt{V_t/2}}}} $, we have
$
\frac{\theta^2}{\frac{4}{3} |\theta_t| + 2 V_t}
\leq U_t$, 
where 
$U_t:= -\fr12 W_{-1} \left(-\frac{2}{\del{\frac{20}{3\delta} \cdot h\left(\frac{1}{2 + \sqrt{V_t/2}}\right)}^2}\right)$.
Use the fact that $V_t=V_t(\hat{\mu}_t)+\theta_t^2/t$ in $\theta_t^2 \leq (\frac{4}{3} |\theta_t|+2V_t)U_t$ to obtain $|\theta_t|\leq \frac{\frac43 U_t}{1-2U_t/t}+\sqrt{\frac{2U_t}{1-2U_t/t}V_t(\hat{\mu}_t)}$.

\textbf{Case 2: $V_t/|\theta_t| < 5 $}\\ 
In this case, either $|\beta^*_t|=1$ or $|\beta^*_t|<1$ and
from \eqref{eq:lower_bound_betastar} we have that $|\beta^*_t|\geq \frac{1}{2+\frac{V_t}{|\theta_t|}}\geq \frac{1}{7}$.
Hence, from \eqref{eq:regret_lil_1} and Ville's inequality we have
\begin{align*}
  \ln(\Wealth_t(\beta^*_t)) 
  &\le \frac12 \ln(\Wealth_t(\beta^*_t)) + \ln \fr{7}{F(1) \delta} 
\\  \implies
    \frac12 \ln(\Wealth(\beta^*_t)) 
    &\le \ln \frac{7}{F(1) \delta}
\\ \implies
    \frac{\theta_t^2}{4(|\theta_t|+V_t)}  
    &\le \ln \frac{7}{F(1) \delta} \tag{Lemma~\ref{lemma:max_approx_wealth} and \eqref{eq:fan}}
\\ \implies
    |\theta_t|  
    &< 24 \ln \frac{7}{F(1) \delta}~.  \tag{$V_t < 5|\theta_t|$ }
\end{align*}

In this case, we do not have a guarantee that the set of $m$ that satisfies the above inequalities is an interval. However, from Theorem~\ref{thm:f_weighted_intervals}, we know that if you could invert exactly the wealth inequality we would obtain intervals. Hence, given that we have derived an upper bound on the regret, any interval $[\ell_t, u_t]$ found by the algorithm is always a valid one.

Finally, let us now focus on the asymptotic behaviour of the term $\frac{U_t}{1-2U_t/t}$ that appears in \textbf{Case 1}.
Denote by $C_t=\ln\left(\frac{20}{3\delta} \cdot h\left(\frac{1}{2 + \sqrt{V_t/2}}\right)\right)$ and by $C'_t=\ln\left(\frac{20}{3\delta} \cdot h\left(\frac{1}{2 + \sqrt{t/2}}\right)\right)$.
Using the second inequality of Lemma~\ref{lemma:lambert_log_ineq} and $V_t\leq t$, we have $U_t \leq C_t + \sqrt{C_t-\frac{1}{2}}\leq 2C_t \leq 2 C'_t$.
Hence, given that and $h(x) \leq K_1 \ln^3(1/x)$ for a suitable constant $K_1$, we have $U_t=O(\ln\frac{ \ln^3 t}{\delta})$.
Now, we bound $V_t$ using the fact that $\theta_t^2
\le \frac43 |\theta_t| U_t + 2V_t U_t
\le 3V_t U_t
$, where we used the assumption $V_t\geq 5|\theta_t|$. We now obtain
$V_t = V_t(\hat{\mu}_t) + \frac1t \theta_t^2 \le V_t(\hat{\mu}_t) + \frac1t 3V_t U_t$, that implies for $t$ large enough and for a suitable universal constant $K_2$ that $V_t \leq \frac{V_t(\hat{\mu}_t)}{1-3U_t/t}\leq \frac{V_t(\hat{\mu}_t)}{1-K_2 \ln \frac{\ln^3 t}{\delta}}$.
Denoting by $C''_t=\ln\left(\frac{20}{3\delta} \cdot h\left(\frac{1}{2 + \sqrt{\frac{V_t(\hat{\mu}_t)}{2-2K_2 \ln \frac{\ln^3 t}{\delta}}}}\right)\right)$, we have $\frac{U_t}{1-2 U_t/t}\leq \frac{C''_t+\sqrt{C''_t}}{1-4C'_t/t}$ and $
\lim_{t\rightarrow\infty}  \frac{1}{\ln\left( \frac{\ln  \sqrt{V_t(\hat{\mu}_t)}}{\delta} \right)}  \frac{C''_t+\sqrt{C''_t}}{1-4C'_t/t}
= 1$.
\end{proof}

\section{Numerical Evaluation}
\label{sec:exp}

\begin{figure}[t]
    \centering
    \begin{tabular}{cc}
        \includegraphics[width=0.45\linewidth]{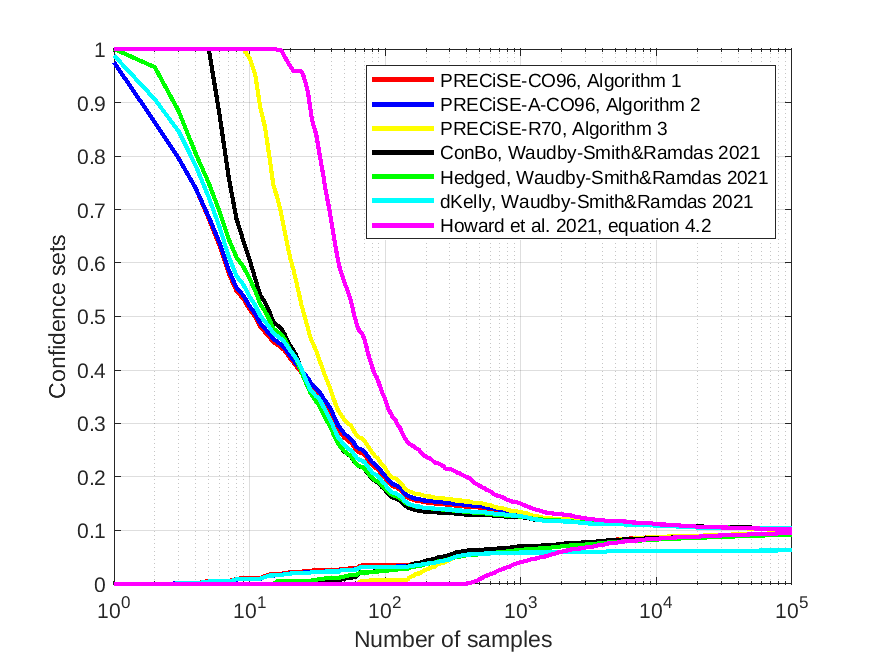} &
        \includegraphics[width=0.45\linewidth]{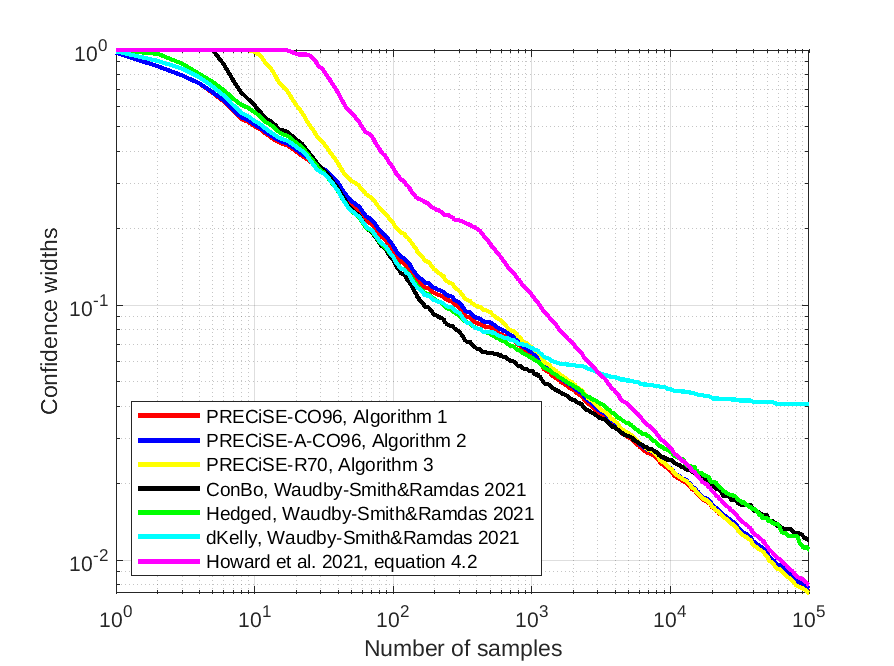}
    \end{tabular}
    \caption{Time-uniform confidence sets (left) and confidence widths (right) with $\delta=0.05$ for a sequence of i.i.d. Bernoulli(0.1).}
    \label{fig:bernoulli01}
\end{figure}

\begin{figure}[t]
    \centering
    \begin{tabular}{cc}
        \includegraphics[width=0.45\linewidth]{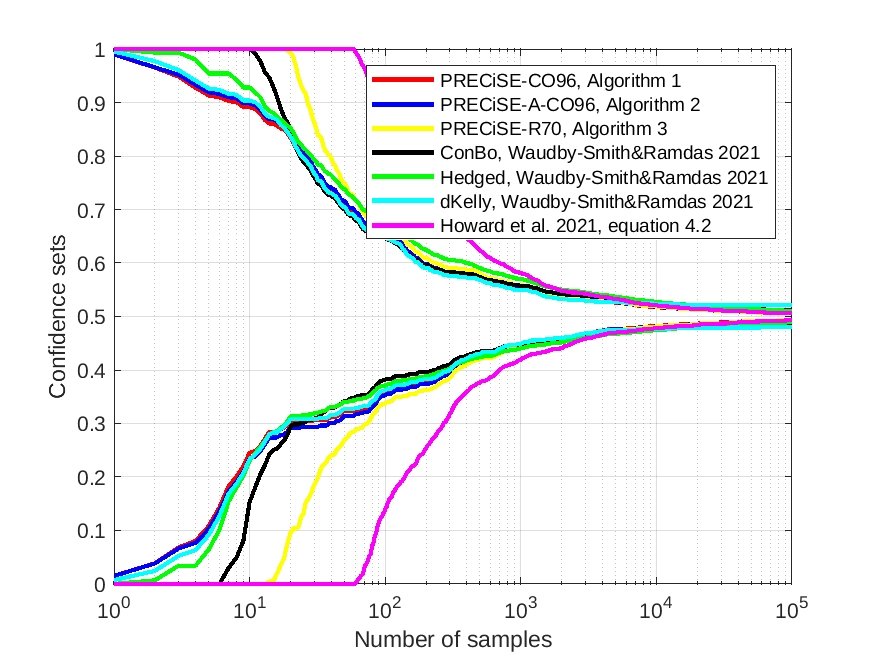} &
        \includegraphics[width=0.45\linewidth]{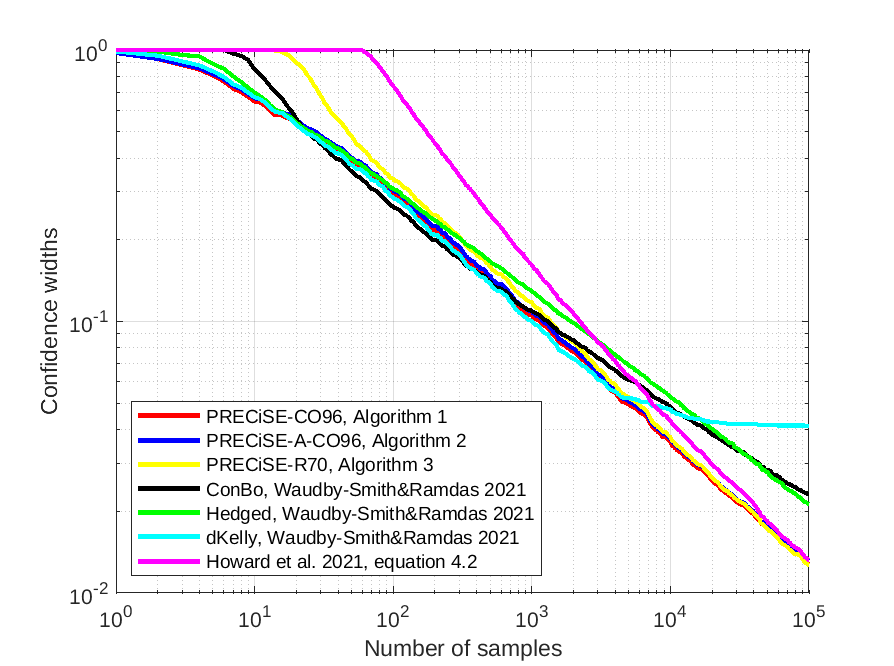}
    \end{tabular}
    \caption{Time-uniform confidence sets (left) and confidence widths (right) with $\delta=0.05$ for a sequence of i.i.d. Bernoulli(0.5).}
    \label{fig:bernoulli05}
\end{figure}

\begin{figure}[t]
    \centering
    \begin{tabular}{cc}
        \includegraphics[width=0.45\linewidth]{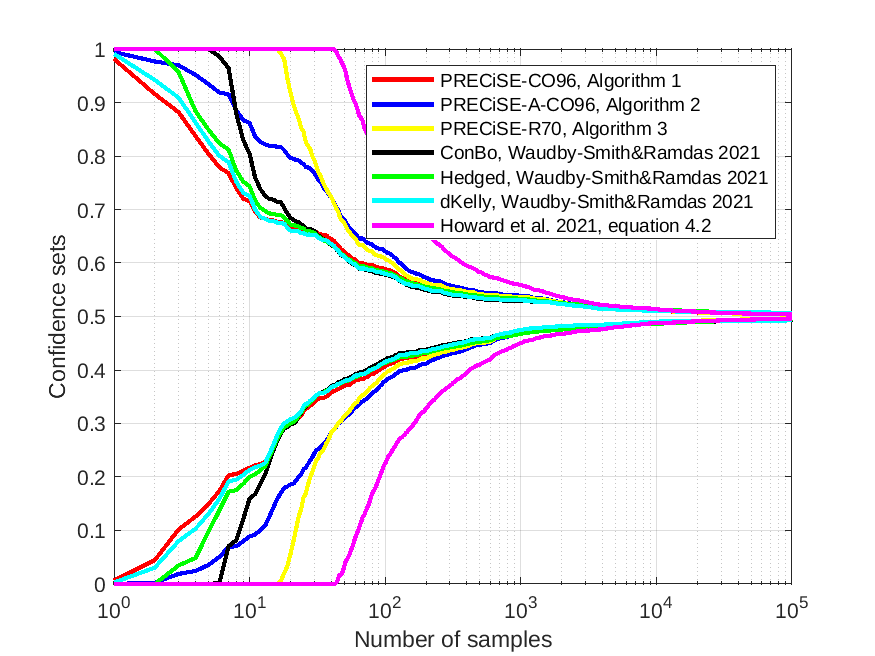} &
        \includegraphics[width=0.45\linewidth]{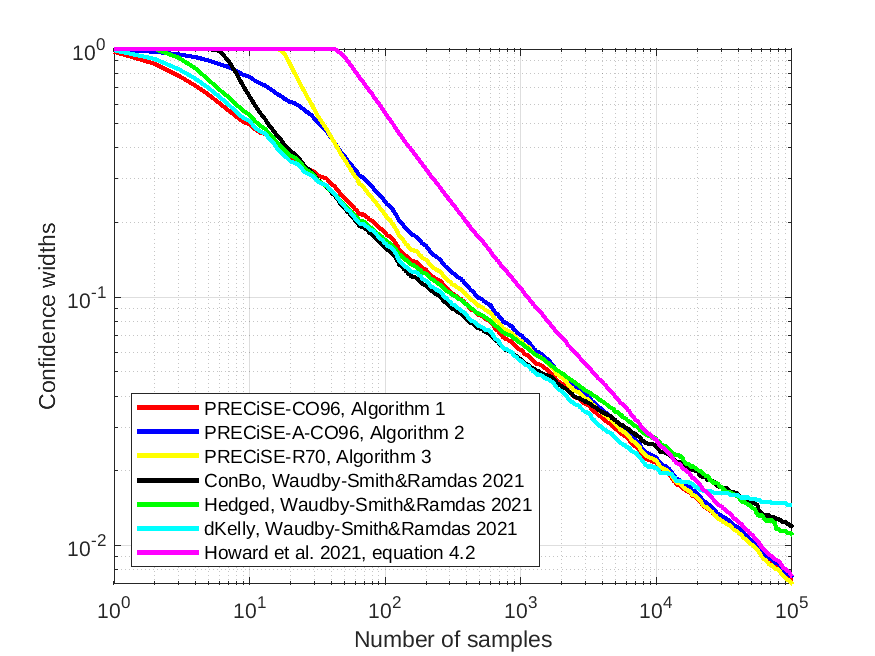}
    \end{tabular}
    \caption{Time-uniform confidence sets (left) and confidence widths (right) with $\delta=0.05$ for a sequence of i.i.d. Beta(1,1).}
    \label{fig:beta11}
\end{figure}

\begin{figure}[t]
    \centering
    \begin{tabular}{cc}
        \includegraphics[width=0.45\linewidth]{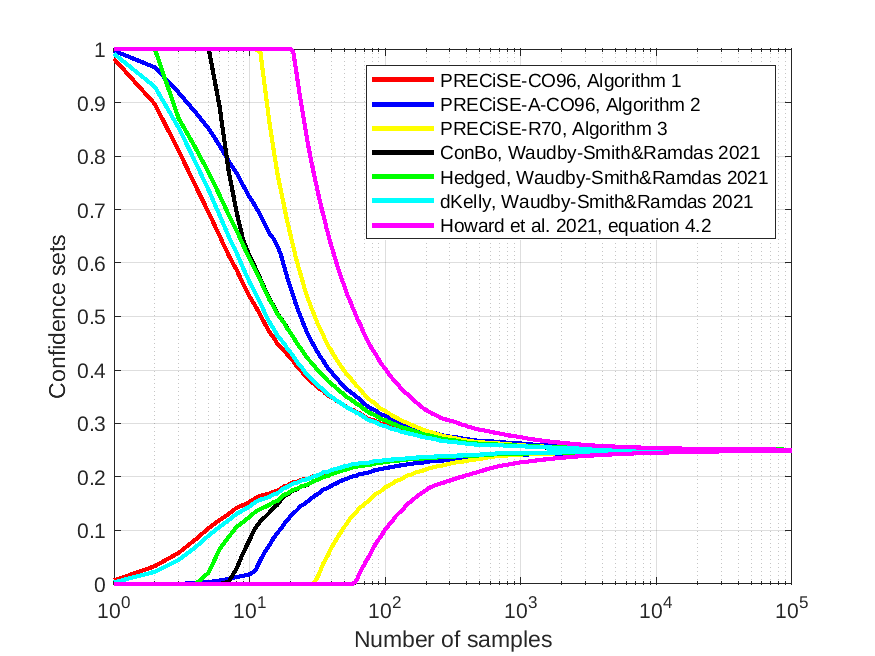} &
        \includegraphics[width=0.45\linewidth]{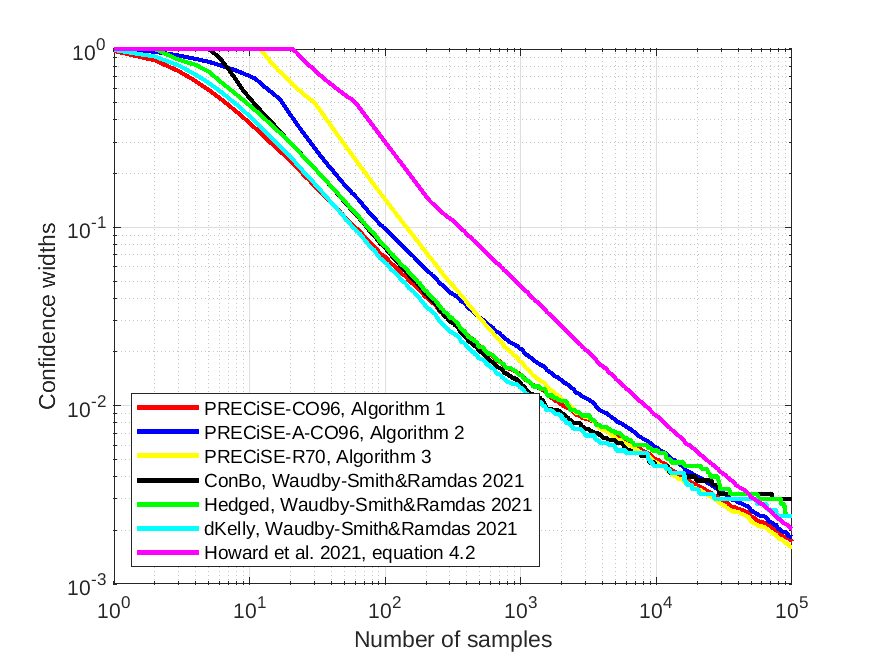}
    \end{tabular}
    \caption{Time-uniform confidence sets (left) and confidence widths (right) with $\delta=0.05$ for a sequence of i.i.d. Beta(10,30).}
    \label{fig:beta1030}
\end{figure}

To test the empirical performance of our proposed procedures, we consider the same set of experiments in \citet{Waudby-SmithR21}. They consider four different sequences of i.i.d. random variables: Bernoulli(0.1), Bernoulli(1/2), Beta(1,1), and Beta(10,30). Differently from them, we repeat each experiment 10 times and use the first 5 in which all the algorithms do not fail. In fact, all the algorithms/inequalities will fail with probability $\delta$ and averaging over their failing runs would result in confidence sequences with smaller widths. 

As baselines, we consider the numerical inversion of the betting algorithms ConBO+LBOW, Hedged, and dKelly in \citet{Waudby-SmithR21}, and the empirical Bernstein LIL concentration in \citet[Equation 4.2]{HowardRMS21} for sub-Bernoulli random variables.

We implemented all the PRECiSE algorithms in Matlab. PRECiSE-CO96 and PRECiSE-R70 use a modified Newton update with projections for lines 5 and 9, and the binary search stops when the interval is less than $10^{-4}$ Our code is available at \url{https://github.com/bremen79/precise}.
For the algorithms in \citet{Waudby-SmithR21}, we used their Python code.\footnote{https://github.com/WannabeSmith/confseq} For dKelly we used the default setting of their library of 10 different bets, while for ConBo and Hedged we used a discretization of 1000 points.

From the empirical results, we see that PRECiSE-CO96 dominates all the other methods in the small sample regime.
It is particularly striking that we can already have a confidence width smaller than 1 with only one sample, as proved in Theorem~\ref{thm:never_vacuous}. Such behavior would be impossible with an expression of the confidence interval depending only on the empirical variance.
More in details, this is possible only because we do not restrict the range of the betting fraction as in, e.g., \citet{CutkoskyO18,Waudby-SmithR21}, but instead the universal portfolio algorithm allows to consider the full interval $[-\frac{1}{1-m}, \frac{1}{m}]$. This is a due to the non-trivial feature of the universal portfolio algorithm in \citet{CoverO96} to work with unbounded loss functions. In turn, this bigger range of allowed betting fraction allows to increase as quickly as possible the wealth for values of $m$ far from the expectation with only few samples.

When the number of samples increases, the heuristic methods in \citet{Waudby-SmithR21} have a marginal gain, that seems to disappear in the large sample regime. This is probably due to the fixed discretization used in all their algorithms that does not allow to recover the correct behavior at infinity, especially in the dKelly method. Moreover, when the number of bets in dKelly goes to infinity, the asymptotic performance would be equal to the one of universal portfolio with a uniform mixture, that has a worse regret that with the Dirichelet(1/2,1/2) mixture~\citep{CoverO96}.

Finally, with a big enough number of samples the LIL confidence sequences produced by PRECiSE-R70 dominate all the other methods, even the one in \citet{HowardRMS21} at least with $10^5$ samples.

To further study the behaviour of PRECiSE-CO96 with a small number of samples, we compared it with the so-called ``exact'' confidence intervals in \citet{ClopperP34}. It is known that the confidence intervals calculated with the method in \citet{ClopperP34} are not the tightest ones \citep[see, e.g., discussion in ][]{BlythS83}, yet they are the easiest ones to compute for Bernoulli random variables.
On the other hand, the confidence intervals in \citet{ClopperP34} are not uniform over time, so they are asymptotically smaller than the tightest possible ones. Yet, PRECiSE-CO96 closely tracks their behaviour, from the very first sample and without knowledge of the distribution at hand. Hence, a practitioner might be reassured by the fact that they do not lose too much compared by the approach in \citet{ClopperP34}, yet they obtain a time-uniform guarantee that allows to decide the number of samples in a data-dependent way.

\begin{figure}[t]
    \centering
    \begin{tabular}{cc}
        \includegraphics[width=0.45\linewidth]{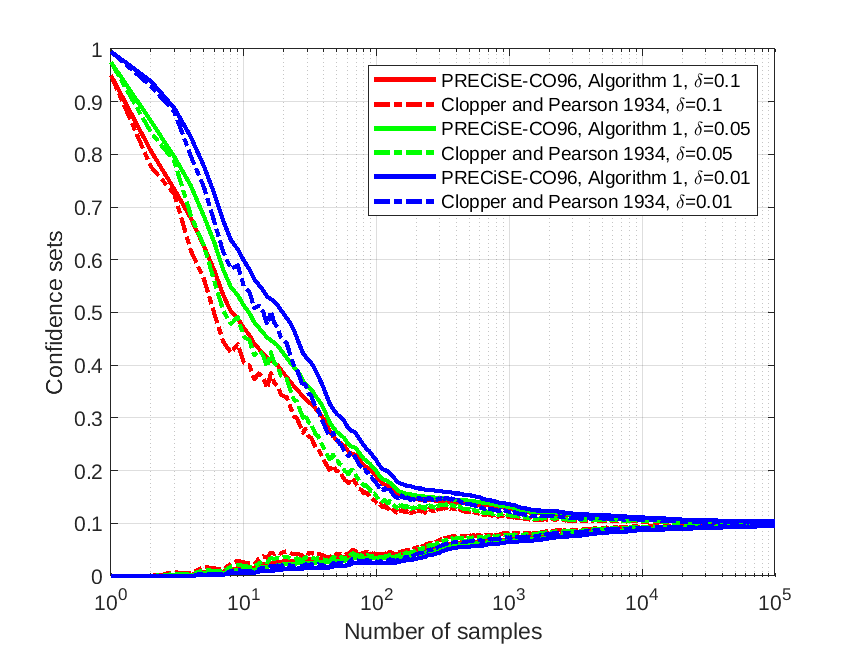} &
        \includegraphics[width=0.45\linewidth]{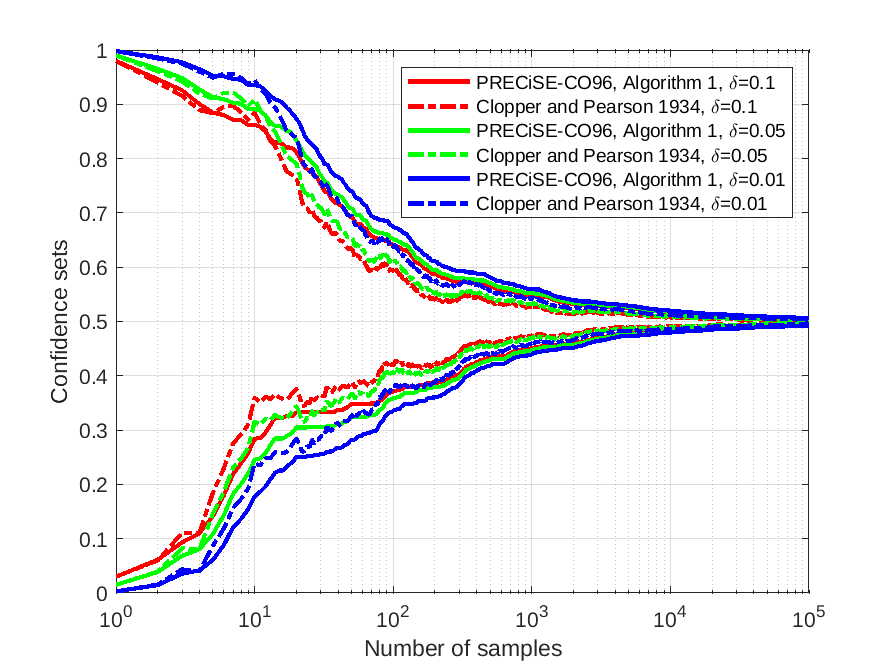}
    \end{tabular}
    \caption{Time-uniform confidence sets for PRECiSE-CO96 and confidence intervals with the ``exact'' algorithm in \citet{ClopperP34}, with $\delta=0.1, 0.05, 0.01$, for a sequence of i.i.d. Bernoulli(0.1) (left) and Bernoulli(0.5) (right).}
    \label{fig:exact}
\end{figure}

\section{Discussion}
\label{sec:disc}

We have shown that we can obtain new time-uniform concentrations and confidence sequences through a straightforward reduction from betting on a continuous coin to portfolio selection with two stocks. Numerically inverting our concentrations, gives non-vacuous confidence sequences tight with one sample and at infinity.
Here, we discuss some possible extension to this work.

\paragraph{Yet another mixture?} We have considered the Dirichlet(1/2,1/2) and the mixture in \citet{Robbins70}. However, other choices are still possible! In fact, we could choose the mixture based on $\mu$. This might seem an odd choice but it is perfectly legal. Indeed, Theorem~\ref{thm:main} uses the knowledge of $\mu$ in deciding the interval of the betting fraction.

\paragraph{Running Universal Portfolio?} One might wonder why we need to lower bound the wealth of the universal portfolio algorithm, through an upper bound to its regret, instead of just running it. Indeed, universal portfolio with two stocks with Dirichelet(1/2,1/2) and uniform mixture can be implemented with algorithm whose complexity per step is linear in the number of the received samples~\citep{CoverO96}. However, this approach would not work for the mixture in Theorem~\ref{thm:lil}.

\section*{Acknowledgments}

The authors thank Wouter Koolen for his blog post on how to prove the LIL bound that inspired this line of work in 2017, Peter Gr\"{u}nwald and Glenn Shafer for suggestions and discussion on prior work, Blair Bilodeau, Erik Learned-Miller, Erik Ordentlich, D\'{a}vid P\'{a}l, Aaditya Ramdas, Vladimir Vovk, and Ian Waudby-Smith for comments and feedback on a previous version of this manuscript.
FO is partly supported by the National Science Foundation under grant no. 2046096 ``CAREER: Parameter-free Optimization Algorithms for Machine Learning''.
KJ is partly supported by Data Science Academy and Research, Innovation \& Impact at the University of Arizona.

\bibliographystyle{plainnat}
\bibliography{learning}

\end{document}